%% file: main_arxiv.tex
\documentclass{article}

\usepackage[margin=1.2in]{geometry} 
\usepackage{subfiles}
\usepackage{authblk}
\usepackage{footmisc}
\usepackage{natbib}
\usepackage{parskip}

\usepackage[utf8]{inputenc} 
\usepackage[T1]{fontenc}    
\usepackage{hyperref}       
\usepackage{url}            
\usepackage{booktabs}       
\usepackage{makecell}
\usepackage{amsfonts}       
\usepackage{nicefrac}       
\usepackage{microtype}      
\usepackage{xcolor}         
\usepackage{wrapfig}

\input{math_commands.tex}

\input{includes}

\usepackage{subfiles}
\usepackage{authblk}
\usepackage{footmisc}

\title{Particle-based Generalised Stochastic Optimisation}

\author{Jiechen Jackie Zhang}
\author{O.~Deniz Akyildiz}
\affil{Department of Mathematics, Imperial College London}

\affil[]{{\textcolor{blue}{\footnotesize \texttt{\{jackie.zhang21,  deniz.akyildiz\}@imperial.ac.uk}}}}

\begin{document}

\maketitle

\begin{abstract}
We develop a class of diffusion-based stochastic particle optimisation methods for loss functions with intractable gradients.  Specifically, we consider problems in which the loss gradient is an integral with respect to a parameter-dependent distribution, a structure that includes training generative models, fine-tuning, and learning latent-variable models.  We introduce mean-field dynamics and its interacting-particle approximations, which contain several existing algorithms as special cases and provides a route to constructing new methods. Under well-posedness and joint contractivity assumptions, we prove exponential convergence and show that the continuous-time particle system admits a non-asymptotic error bound. We illustrate it by developing momentum and higher-order Langevin variants and evaluating them on maximum marginal-likelihood estimation and energy-based-model training.
\end{abstract}


\section{Introduction}
\subfile{sections/main/introduction}

\section{Background}\label{sec:background}
\subfile{sections/main/background}

\section{Particle-based Stochastic Optimisation}\label{sec:main_system}
\subfile{sections/main/main_system}

\section{Analysis}\label{sec:analysis}
\subfile{sections/main/analysis}

\section{New Algorithms and Numerics}\label{sec:algorithm}
\subfile{sections/main/algorithms}

\section{Conclusion}\label{sec:conclusion}
\subfile{sections/main/conclusion}

\section*{Acknowledgements}
JJZ acknowledges support from EPRSC studentship. We acknowledge computational resources and support provided by the Department of
Mathematics and the Imperial College Research Computing
Service, DOI: 10.14469/hpc/2232.


\bibliographystyle{apalike}

\bibliography{references}

\makeatletter
\newcommand{\appendixpropformat}{%
  \setcounter{proposition}{0}
  \renewcommand{\theproposition}{\thesection.\arabic{proposition}}%
  \@addtoreset{proposition}{section}
}
\makeatother

\appendix

\begin{center}
    {\LARGE \textbf{Appendix}}
\end{center}
\appendixpropformat

\section{Proofs of main results}
\subfile{sections/appendix/theory_proofs}\label{app:proofs}

\section{Discussions on verifiable assumptions}\label{app:assumptions}
\subfile{sections/appendix/assumptions}

\section{Experimental Setting}

\subsection{\gls*{mmle} with Higher-Order Langevin Dynamics}\label{app:mmle_details}
\subfile{sections/appendix/mmle_details.tex}

\subsection{Momentum Mean-Field Particle Optimiser}\label{app:mmfpo_details}
\subfile{sections/appendix/momentum_mfpo_appendix}

\end{document}

%% file: math_commands.tex
\usepackage{amsmath,amsfonts,bm}

\newcommand{\md}{\mathrm{d}}

\def\1{\bm{1}}

\DeclareMathAlphabet{\mathsfit}{\encodingdefault}{\sfdefault}{m}{sl}
\SetMathAlphabet{\mathsfit}{bold}{\encodingdefault}{\sfdefault}{bx}{n}

\def\sB{{\mathsf{B}}}

\def\sL{{\mathsf{L}}}
\def\sM{{\mathsf{M}}}

\def\sV{{\mathsf{V}}}

\def\sX{{\mathsf{X}}}



%% file: includes.tex
\usepackage{xcolor}
\definecolor{light-teal}        {RGB}{88,214,199}
\definecolor{cb-blue}       {RGB}{70, 130, 180}
\definecolor{orange}        {RGB}{214,150, 92}
\definecolor{green}         {RGB}{136,196,136}
\usepackage{hyperref}
\usepackage{cleveref}
\hypersetup{colorlinks,linkcolor={cb-blue},citecolor={cb-blue},urlcolor={cb-blue}}
\usepackage{url}
\usepackage{booktabs}
\usepackage{multirow}
\usepackage{multicol}
\usepackage{subcaption}
\usepackage{algorithm}
\usepackage{algpseudocode}
\usepackage{graphicx}
\usepackage{amssymb}
\usepackage{amsmath}
\usepackage{amsthm}
\usepackage{mathtools}
\usepackage{minitoc}
\usepackage{paralist}
\usepackage{cleveref}
\usepackage{wrapfig}
\usepackage{cancel}

\usepackage[textwidth=1.8cm, textsize=scriptsize]{todonotes}

\usepackage[normalem]{ulem}

\newtheorem{theorem}{Theorem}

\newtheorem{proposition}{Proposition}
\newtheorem{corollary}{Corollary}
\newtheorem{lemma}{Lemma}

\newtheoremstyle{assumptionstyle}
  {3pt}
  {3pt}
  {\itshape}
  {}
  {\bfseries}
  {.}
  {0.5em}
  {#2}

\theoremstyle{assumptionstyle}
\newtheorem{assumption}{Assumption}

\crefname{assumption}{Assumption}{Assumptions}
\Crefname{assumption}{Assumption}{Assumptions}

\crefformat{assumption}{#2#1#3}
\Crefformat{assumption}{#2#1#3}

\theoremstyle{definition}
\newtheorem{remark}{Remark}
\makeatletter
\let\oldremark\remark
\let\endoldremark\endremark
\renewenvironment{remark}{\oldremark}{\hfill$\diamond$\endoldremark}
\makeatother

\usepackage[acronym]{glossaries-extra}
\setabbreviationstyle[acronym]{long-short}

\newacronym{ebm}{\textsc{\small EBM}}{energy-based model}
\newacronym{lebm}{\textsc{\small LEBM}}{latent energy-based model}
\newacronym{mmle}{\textsc{\small MMLE}}{maximum marginal likelihood estimation}
\newacronym{mmd}{\textsc{\small MMD}}{maximum mean discrepancy}
\newacronym{iplebm}{\textsc{\small EBIPLA}}{Energy-Based Interacting Particle Langevin Algorithm}
\newacronym{mle}{\textsc{\small MLE}}{Maximum Likelihood Estimation}
\newacronym{ula}{\textsc{\small ULA}}{Unadjusted Langevin Algorithm}
\newacronym{ipla}{\textsc{\small IPLA}}{Interacting Particle Langevin Algorithm}
\newacronym{pgd}{\textsc{\small PGD}}{Particle Gradient Descent}
\newacronym{mpgd}{\textsc{\small MPGD}}{Momentum Particle Gradient Descent}
\newacronym{em}{\textsc{\small EM}}{Expectation-Maximisation}
\newacronym{lvm}{\textsc{\small LVM}}{Latent Variable Model}
\newacronym{FID}{{{\textsc{\small FID}}}}{Fréchet Inception Distance}
\newacronym{sm}{\textsc{\small SM}}{Score Matching}
\newacronym{lsm}{\textsc{\small LSM}}{Latent Score Matching}
\newacronym{mfpo}{\textsc{\small MFPO}}{mean-field particle optimiser}
\newacronym{mmfpo}{\textsc{\small M-MFPO}}{momentum mean-field particle optimiser}
\newacronym{impdiff}{\textsc{\small ImpDiff}}{Implicit Diffusion}
\newacronym{soul}{\textsc{\small SOUL}}{Stochastic Optimization via Underdamped Langevin}
\newacronym{souk}{\textsc{\small SOUK}}{\textsc{\small SOUK}}
\newacronym{cd}{\textsc{\small CD}-$k$}{Contrastive Divergence}
\newacronym{pcd}{\textsc{\small PCD}}{Persistent Contrastive Divergence}
\newacronym{smc}{\textsc{\small SMC}}{Sequential Monte Carlo}
\newacronym{sa}{\textsc{\small SA}}{stochastic approximation}
\newacronym{mypgd}{\textsc{\small MYPGD}}{Moreau-Yosida particle gradient descent}
\newacronym{myholpgd}{\textsc{\small HOL-MYPGD}}{Moreau-Yosida high-order Langevin particle gradient descent}
\newacronym{nag}{\textsc{\small NAG}}{Nesterov's Accelerated Gradient}
\newacronym{hmc}{\textsc{\small HMC}}{Hamiltonian Monte Carlo}
\newacronym{mcmc}{\textsc{\small MCMC}}{Markov chain Monte Carlo}
\newacronym{montecarlo}{\textsc{\small MC}}{Monte Carlo}
\newacronym{kl}{\textsc{\small KL}}{Kullback-Liebler}
\newacronym{kiplmc}{\textsc{\small KIPLMC}}{kinetic interacting particle Langevin Monte Carlo}
\newacronym{uld}{\textsc{\small ULD}}{Underdamped Langevin Diffusion}
\newacronym{ou_process}{\textsc{\small OU}}{Ornstein-Uhlenbeck}
\newacronym{sde}{\textsc{\small SDE}}{stochastic differential equation}
\newacronym{rmsprop}{\textsc{\small RMSPROP}}{Root Mean Square Propagation}
\newacronym{adam}{\textsc{\small ADAM}}{Adaptive Moment Estimation}
\newacronym{totalvariation}{\textsc{\small TV}}{total-variation}
\newacronym{my}{\textsc{\small MY}}{Moreau-Yosida}
\newacronym{mse}{\textsc{\small MSE}}{mean squared error}
\newacronym{ssim}{\textsc{\small SSIM}}{structural similarity index}

\renewcommand{\bibname}{References}
\renewcommand{\bibsection}{\subsubsection*{\bibname}}
\usepackage{subfiles}

%% file: sections/main/introduction.tex
Many problems in statistics and machine learning can be formulated as the optimisation of a loss function $\ell:\Theta\rightarrow\mathbb{R}$ whose gradient takes the form of an expectation of a (stochastic gradient) function $F(\theta, x)$ with respect to an intractable parameter-dependent distribution $\pi_\theta$, i.e.\ $\nabla_\theta \ell(\theta) = \mathbb{E}_{\pi_\theta}[F(\theta, X)]$. This structure arises in problems such as maximum-likelihood training of \glspl*{ebm} \citep{cd_original,nijkamp2019learning,song2021train,oliva2025uniformintimeconvergenceboundspersistent}, latent \glspl*{ebm} \citep{pang2020learning,marks2025learninglatentenergybasedmodels}, \gls*{mmle} \citep{atchade2017perturbed,de_bortoli_efficient_2021,pmlr-v206-kuntz23a,lim2024momentumparticlemaximumlikelihood,akyildiz2025interacting}, and fine tuning \citep{marion2025implicitdiffusionefficientoptimization}.

These problems are usually solved via generic first-order optimisation schemes where each iteration involves running \gls*{mcmc} to sample from $\pi_\theta$, resulting in a nested optimisation-sampling scheme \citep{younes1999convergence,andrieu2014markovian,atchade2017perturbed,de_bortoli_efficient_2021}. However, this is computationally expensive with delicate convergence properties dependent on various factors such as Markov-chain bias, mixing, and step-size selections. A number of algorithms have been developed in recent literature to avoid \gls*{mcmc} sampling via the use of interacting particles, see, e.g. \citet{pmlr-v206-kuntz23a,lim2024momentumparticlemaximumlikelihood,akyildiz2025interacting} for \gls*{mmle} in latent variable models, and \citet{marion2025implicitdiffusionefficientoptimization} for fine tuning of generative models. These methods construct joint sampling-optimisation procedures, typically with Langevin processes and gradient descent based optimisers. However, these methods are often developed for particular choices of optimisers and diffusions which are often inferior in performance to more sophisticated methods. In fact, practical applications of such methods often use modern optimisers such as Adam \citep{kingma2017adammethodstochasticoptimization}, see, e.g., \citet{pmlr-v206-kuntz23a,marks2025learninglatentenergybasedmodels} or preconditioned Langevin dynamics, see, e.g., \citet{glyn2025statistical,wang2026training,marks2025learninglatentenergybasedmodels}. This raises the question of how to define principled, general purpose stochastic optimisation methods for loss functions with intractable gradients.

In this work, we develop general diffusion-based particle optimisation methods based on a continuous mean-field dynamics. Specifically:
\begin{itemize}
    \item[(C1)] We develop a general diffusion-based framework for stochastic optimisation of loss functions, accounting for general optimisation and sampling dynamics. This unifies and generalises several existing methods for general optimisation problems whose gradients involve parameter-dependent expectations and leads to the design of a novel class of algorithms.
    \item[(C2)] Under this general formulation, we establish exponential contraction of the mean-field system under a joint strong-monotonicity condition and finite-$N$ error bounds for its particle approximation. Under an additional conservative, jointly strongly convex structure, the stationary parameter is the unique global minimiser of the loss.
    \item[(C3)] Using the general framework, we develop two novel algorithms, (i) a higher-order Langevin dynamics based \gls*{mmle} algorithm, and (ii) a momentum-based algorithm for training \glspl*{ebm}. We show that these algorithms outperform existing methods in their respective applications. We also discuss how to develop novel algorithms for other applications, such as fine tuning of generative models, by instantiating the general framework with appropriate choices of optimisers and diffusions.
\end{itemize}
The rest of the paper is organised as follows. In Section~\ref{sec:background}, we provide background on the optimisation problems we consider, and the sampling dynamics we use, together with a discussion of related work. In Section~\ref{sec:main_system}, we introduce our general mean-field system for stochastic optimisation. We then provide theoretical results on the convergence of the mean-field system and finite-particle error bounds in Section~\ref{sec:analysis}. Finally, we show the design of novel algorithms in Section~\ref{sec:algorithm} accompanied with numerical results and experiments.

\subsection{Notation}
We denote by $\mathcal{P}(\mathbb{R}^d)$ the set of probability measures on $\mathbb{R}^d$ and use $\mathcal{P}_2(\mathbb{R}^d)$ for the set of probability measures with finite second moments. For a measure $\mu\in\mathcal{P}(\mathbb{R}^d)$ and a measurable function $f:\mathbb{R}^d\rightarrow\mathbb{R}$, we write $\mu(f) = \int f(x)\mu(\md x)$. We write $\delta_x$ for the Dirac delta measure at $x$. For a function $f:\mathbb{R}^d\rightarrow\mathbb{R}$, we write $\nabla f$ for the gradient of $f$, and $\nabla^2 f$ for the Hessian of $f$. For a vector $v\in\mathbb{R}^d$, we write $\|v\|$ for the Euclidean norm of $v$. For a matrix $A\in\mathbb{R}^{d\times d}$, we write $\|A\|$ for the spectral norm of $A$. For two measures $\mu,\nu\in\mathcal{P}(\mathbb{R}^d)$, we write $W_2(\mu,\nu)$ for the 2-Wasserstein distance between $\mu$ and $\nu$:
\begin{align*}
    W_2(\mu,\nu) &= \left(\inf_{\gamma\in\Gamma(\mu,\nu)}\int \|x-y\|^2 \gamma(\md x,\md y)\right)^{1/2},
\end{align*}
where $\Gamma(\mu,\nu)$ is the set of couplings of $\mu$ and $\nu$. For a function $f:\mathbb{R}^d\rightarrow\mathbb{R}$, we write $\|f\|_\infty = \sup_{x\in\mathbb{R}^d}|f(x)|$ for the supremum norm of $f$. $L^p(\mu)$ denotes the space of functions where the integral $\int |f(x)|^p \mu(\md x)$ is finite.

We will denote various spaces belonging to optimiser, sampler, and auxiliary states using the letters $\sX, \sV, \Theta, \sM$. In all these cases, we assume that these spaces are Euclidean, i.e., $\sX = \mathbb{R}^{d_x}$, $\sV = \mathbb{R}^{d_v}$, $\Theta = \mathbb{R}^{d_\theta}$, $\sM = \mathbb{R}^{d_m}$. Therefore, their tangent spaces are also Euclidean spaces of the same dimensions. We will keep this setting for clarity and simplicity, and our results can be extended to more general spaces.

%% file: sections/main/background.tex
\subsection{Stochastic Optimisation with Intractable Gradients}
We are interested in the optimisation of an objective function $\ell:\Theta\rightarrow\mathbb{R}$ over a set of parameters $\theta\in\Theta$ where the gradient is of the form
\begin{align}\label{eq:obj_func}
\nabla_\theta \ell(\theta) = \int_\sX F(\theta, x)\pi_\theta(\md x),
\end{align}
where $F: \Theta\times\sX\rightarrow \Theta$ is a measurable function, $F(\theta, \cdot) \in L^1(\pi_\theta)$ for all $\theta\in\Theta$, and $\pi_\theta$ is a parameter-dependent distribution on a space $\sX$. We assume that the distribution $\pi_\theta$ admits a density with respect to the Lebesgue measure on $\sX$:
\begin{equation}
    \pi_\theta(x)= e^{-E(\theta,x)}/C_\theta, \quad C_\theta=\int_{\sX} e^{-E(\theta,x)}\md x,
\end{equation}
where $0 < C_\theta < \infty$ is the unknown normalisation constant and the potential $E:\Theta \times \sX \to \mathbb R$ is known pointwise. This setting encompasses a wide range of applications seen in the literature and we summarise some in the next section.

\subsection{Related Work}\label{sec:related_work}

We compare related methodologies which also solve the stochastic optimisation problem, with most directly related to objectives of the form in \eqref{eq:obj_func}. Classical approaches use the \gls*{sa} framework of \citet{robbins_monro_SA} with Markovian noise, where for each outer optimisation step on the parameters, an inner loop \gls*{mcmc} sampler is used to approximate samples from $\pi_\theta$ and the resulting \gls*{montecarlo} average is plugged into a first-order update \citep{younes1999convergence,andrieu2014markovian,atchade2017perturbed}. This includes \gls*{soul} and its more general kernel counterparts (\glsunset{souk}\gls*{souk}), where several Langevin or more general Markov-kernel steps are used per parameter update \citep{de_bortoli_efficient_2021}. In \gls*{ebm} training, \gls*{cd} runs $k$ short \gls*{mcmc} transitions, typically initialised from data, to approximate the negative-phase expectation \citep{cd_original}; \gls*{pcd} instead keeps a persistent particle cloud and usually applies one or a few transitions per parameter step, reducing the nested-loop cost at the price of non-equilibrium bias \citep{tieleman2008training,oliva2025uniformintimeconvergenceboundspersistent}. Related single-loop schemes make this persistence explicit. \gls*{impdiff} couples the parameter dynamics with one sampler update step per optimisation step, so that particles and parameters evolve jointly \citep{marion2025implicitdiffusionefficientoptimization}. When the inner approximation is based on \gls*{smc}, \citet{cuin2026efficientstochasticoptimisationsequential} maintain weighted particles.

The problem of \gls*{mmle} of a latent variable model $p_\theta(x,y)$ fits directly in the setting of \eqref{eq:obj_func} after applying Fisher's identity to obtain an expectation under the posterior $p_\theta(\cdot\mid y)$. The classical \gls*{em} algorithm alternates an E-step, which forms or approximates the posterior expectation, with an M-step in $\theta$ \citep{em_algorithm_dempster_laird_rubin}. When the E-step is intractable, this leads to Monte Carlo, stochastic approximation, online, variational, and biased-\gls*{mcmc} variants of \gls*{em} \citep{wei1990monte,delyon1999convergence,cappe2009online,gersende_em,pmlr-v238-gruffaz24a,JALA_NEURIPS2025_99d7d3fc}. An alternate approach 
introduces a particle approximation to the optimisation dynamics and avoids separately equilibrating an inner Markov chain. For \gls*{mmle}, \gls*{pgd} performs a coupled continuous-time descent of the \gls*{em} free energy introduced by \cite{neal1998view} over parameters and a variational posterior approximation, using Langevin dynamics on the particles for current posterior approximations \citep{pmlr-v206-kuntz23a}. \gls*{mpgd} adds momentum to both the parameter and particle dynamics, yielding an underdamped extension of this coupled system \citep{lim2024momentumparticlemaximumlikelihood}. The \gls*{ipla} instead injects noise into the parameter dynamics and interprets the full parameter-particle system as an interacting Langevin diffusion \citep{akyildiz2025interacting}, with kinetic variants such as \gls*{kiplmc} extending this perspective to underdamped dynamics \citep{oliva2026kineticinteractingparticlelangevin} and adaptations to latent energy-based models \citep{marks2025learninglatentenergybasedmodels}. Whilst these works are mostly developed for \gls*{mmle} or closely related latent-variable problems, they illustrate the same continuous-time particle optimisation principle, replacing a costly inner \gls*{mcmc} loop with a persistent interacting particle system.

Our contribution abstracts and generalises this perspective. Rather than fixing a particular optimiser, sampler, or problem, we formulate a continous-time mean-field optimisation-sampling system for general objectives with gradients of the form \eqref{eq:obj_func}. This includes existing particle methods such as \gls*{pgd} and \gls*{mpgd} as special cases, while also allowing more general optimiser dynamics and sampler diffusions.

%% file: sections/main/main_system.tex
\subsection{A general mean-field dynamics for stochastic optimisation}
We first describe a fully general framework for stochastic optimisation of loss functions with intractable gradients. Recall that we consider the problem of minimising a loss function $\ell:\Theta\rightarrow\mathbb{R}$ where the gradient takes the form as in \eqref{eq:obj_func}.

In order to define general mean-field dynamics, we will consider a general optimiser state $\vartheta_t \in \Theta \times \sM$. This notation allows the optimiser state to include auxiliary variables, such as momenta -- i.e. it may contain only $\theta_t$, or $(\theta_t,m_t)$, or Adam-style auxiliary variables. Let $\Psi_t:(\Theta \times \sM)\times\Theta \rightarrow \Theta \times \sM$ be a function that gives the update direction for the optimiser state $\vartheta_t$ when it is supplied with a gradient estimate.  Let $\Pi_\theta(\vartheta_t)=\theta_t$ extract the parameter component. Similarly to the optimiser state, we will consider a general sampler state $Z_t \in \sX \times \sV$ with dynamics that depend on the projected parameter state $\theta_t=\Pi_\theta(\vartheta_t)$. Let $\Pi_x(Z_t)=X_t$ extract the $x$-component of the sampler state. We will assume that the dynamics of $Z_t$ are such that, for frozen $\theta$, the marginal distribution of $X_t$ converges to $\pi_\theta$.

Let $\sB$ be a finite-dimensional noise space and let $B_t$ be a Brownian motion on $\sB$. We write $\sL(\sB,\sX\times\sV)$ for the space of
linear maps from $\sB$ to $\sX\times\sV$. We then consider the following coupled mean-field system
\begin{align}
\mathrm{d} \vartheta_t &= \Psi_t(\vartheta_t,G_{\nu_t}(\vartheta_t)) \mathrm{d} t, \label{eq:general_mf_system_opt}\\
\mathrm{d} Z_t &= b_{\Pi_\theta(\vartheta_t)}(Z_t) \mathrm{d}t  +\sigma_{\Pi_\theta(\vartheta_t)}(Z_t)\mathrm{d}B_t, \label{eq:general_mf_system_x}
\end{align}
where the drift $b_\theta:\sX\times\sV\to\sX\times\sV$ and diffusion coefficient
$\sigma_\theta:\sX\times\sV\to\sL(\sB,\sX\times\sV)$ are chosen such that the system \eqref{eq:general_mf_system_x} has invariant distribution $\nu_\theta$ with $\sX$-marginal $\pi_\theta$ and $\nu_t = \mathrm{Law}(Z_t)$ with the function $G_{\nu_t}(\vartheta_t)$ defined as
\begin{align*}
G_{\nu_t}(\vartheta_t)&:=\int_{\sX\times\sV} F(\Pi_\theta(\vartheta_t),\Pi_x(z))\nu_t(\mathrm{d} z).
\end{align*}
Next, we unpack the implications and possibilities provided by the generality of \eqref{eq:general_mf_system_opt}--\eqref{eq:general_mf_system_x}. The generality of $\Psi_t$ allows one to use optimisers whose optima are fixed points of the optimiser dynamics, including gradient descent and continuous-time analogues of \textsc{RMSProp}, \textsc{ADAM}, and related adaptive methods \citep{da2020general,barakat_convergence_2021, heredia2026modelingadagradrmspropadam}. Similarly, in \eqref{eq:general_mf_system_x}, one may choose a frozen sampler on the augmented space $\sX\times\sV$ whose invariant law $\nu_\theta$ has $\sX$-marginal $\pi_\theta$. Inspired by \citet{Ma_SGMCMC}, we describe one sufficient Euclidean construction. Recall $\sX=\mathbb{R}^{d_x}$, $\sV=\mathbb{R}^{d_v}$, write $z=(x,v)$, and consider functions $H:\Theta\times\sX\times\sV\to\mathbb{R}$ and $g:\sX\times\sV\to\mathbb{R}$ such that
\begin{equation*}
    H(\theta,z) = E(\theta,x) + g(x, v).
\end{equation*}
where $E$ is such that $\pi_\theta(\md x)=C_\theta^{-1}\exp(-E(\theta,x))\,\md x$. Assume also that
$C_g:=\int_{\sV}\exp(-g(x,v))\,\md v < \infty$ and independent of $x$. Then the measure $\nu_\theta \propto \exp(-H(\theta,x,v))$ on $\sX\times\sV$ has $\sX$-marginal $\pi_\theta$. Next, consider the diffusion in \eqref{eq:general_mf_system_x} with drift and diffusion coefficients given by
\begin{align}\label{eq:generalised_langevin_drift_diffusion}
    &b_\theta(z)
    =
    -[\mathrm{D}_\theta(z)+\mathrm{Q}_\theta(z)]\nabla_z H(\theta,z)
    +\xi_\theta(z),\\
    &\sigma_\theta(z)\sigma_\theta(z)^\top
    =
    2\mathrm{D}_\theta(z).
\end{align}
where $\mathrm{D}_\theta(z)\in\mathbb{R}^{(d_x+d_v)\times(d_x+d_v)}$ is symmetric positive semidefinite, $\mathrm{Q}_\theta(z)\in\mathbb{R}^{(d_x+d_v)\times(d_x+d_v)}$ is skew-symmetric, and both are continuously differentiable in $z$. The correction term is defined coordinatewise by
\begin{align}
    \xi_{\theta,i}(z)
    :=
    \sum_{j=1}^{d_x+d_v}
    \frac{\partial}{\partial z_j}
    \bigl(\mathrm{D}_{\theta,ij}(z)+\mathrm{Q}_{\theta,ij}(z)\bigr),
\end{align}
where $i=1,\ldots,d_x+d_v$. All derivatives here are with respect to the sampler variable $z$, with $\theta$ fixed. Assume additionally that $e^{-H(\theta,\cdot)}$ is integrable, that the coefficients and $H$ are sufficiently regular for the stationary Fokker--Planck equation, and that the relevant boundary/no-flux or decay conditions justify integration by parts. In this setting, the generalised Langevin dynamics with drift and diffusion given in \eqref{eq:generalised_langevin_drift_diffusion} have stationary distribution $\nu_\theta \propto e^{-H(\theta,z)}$ provided that the following identity is satisfied \citep{Ma_SGMCMC,yao2025accelerating}
\begin{equation*}
    \sum_{i,j=1}^{d_x+d_v}\frac{\partial^2}{\partial z_i\partial z_j}\left(\mathrm{Q}_{\theta,ij}(z)e^{-H(\theta,z)}\right)= 0.
\end{equation*}
Under these conditions, the displayed identity holds automatically because \(\mathrm{Q}_\theta\) is sufficiently smooth and skew-symmetric and mixed derivatives commute, hence $\nu_\theta$ is invariant for the frozen sampler, and the preceding marginal condition gives $(\Pi_x)_\#\nu_\theta=\pi_\theta$. We note that this construction can be generalised to arbitrary manifolds using the construction provided in \citet{barp2021unifying}.

Next, we provide a basic result that demonstrates the importance of our framework for a wide class of optimisers and samplers. We start with a well-posedness assumption for the mean-field system \eqref{eq:general_mf_system_opt}--\eqref{eq:general_mf_system_x}.
\begin{assumption}[Well-posedness]\label{ass:strong_sol_exist_unique}
For every $\vartheta_0\in\Theta\times\sM$ and every random variable $Z_0$, independent of the driving Brownian motion, with law $\nu_0\in\mathcal P_2(\sX\times\sV)$, the mean-field system \eqref{eq:general_mf_system_opt}--\eqref{eq:general_mf_system_x} admits a pathwise unique strong solution on every finite time interval. Moreover, $\nu_t:=\mathrm{Law}(Z_t)\in\mathcal P_2(\sX\times\sV)$ for all finite $t\ge0$.
\end{assumption}
Standard Lipschitz and linear-growth assumptions on the combined drift and diffusion coefficients provide sufficient conditions for
Assumption~\ref{ass:strong_sol_exist_unique}.
\begin{assumption}\label{ass:existence_minimizer} There exists an optimiser state \(\vartheta_\star\in\Theta\times\sM\), with \(\Pi_\theta(\vartheta_\star)=\theta_\star\), such that
\[
\nabla_\theta\ell(\theta_\star)=0
\qquad\text{and}\qquad
\Psi_t(\vartheta_\star,0)=0
\quad\text{for all }t\ge0.
\]
Moreover, the frozen sampler at \(\theta_\star\) has an invariant measure \(\nu_{\theta_\star}\in\mathcal P_2(\sX\times\sV)\) whose \(\sX\)-marginal is \(\pi_{\theta_\star}\), i.e.
\[
(\Pi_x)_\#\nu_{\theta_\star}=\pi_{\theta_\star}.
\]
\end{assumption}
This assumption simply states that the optimiser map $\Psi_t$ has a fixed point at $\vartheta_\star$ which is generally satisfied by well-defined optimisation methods. Next we show that the mean-field system \eqref{eq:general_mf_system_opt}--\eqref{eq:general_mf_system_x} has a stationary law at the optimiser--sampler pair $(\vartheta_\star,\nu_{\theta_\star})$.
\begin{theorem}\label{thm:invariant_measure} Let \Cref{ass:strong_sol_exist_unique} and \Cref{ass:existence_minimizer} hold. Let
\[
\mu_\star:=\delta_{\vartheta_\star}\otimes\nu_{\theta_\star}.
\]
Then \(\mu_\star\) is a stationary law for the coupled mean-field system \eqref{eq:general_mf_system_opt}--\eqref{eq:general_mf_system_x}: if the system is initialized from \(\mu_\star\), its law remains \(\mu_\star\) for all \(t\ge0\).
\end{theorem}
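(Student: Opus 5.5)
Start the system from $\vartheta_0=\vartheta_\star$ and $Z_0\sim\nu_{\theta_\star}$, with $Z_0$ independent of $B$. By \Cref{ass:strong_sol_exist_unique} there is a pathwise unique strong solution. The plan is to construct an explicit candidate solution whose law is $\mu_\star$ at every time. Uniqueness then identifies it with the solution.

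For the candidate, set $\tilde\vartheta_t\equiv\vartheta_\star$, and let $\tilde Z_t$ solve the frozen sampler SDE
\[
\md \tilde Z_t=b_{\theta_\star}(\tilde Z_t)\,\md t+\sigma_{\theta_\star}(\tilde Z_t)\,\md B_t,\qquad \tilde Z_0=Z_0 .
\]
Existence and uniqueness of $\tilde Z$ follow from \Cref{ass:strong_sol_exist_unique}. Indeed, once the optimiser component is fixed to $\vartheta_\star$, the $Z$-equation of \eqref{eq:general_mf_system_x} is exactly this SDE; alternatively, one can read well-posedness of the frozen SDE as part of the standing regularity. Since $\nu_{\theta_\star}$ is invariant for the frozen sampler (\Cref{ass:existence_minimizer}), $\mathrm{Law}(\tilde Z_t)=\nu_{\theta_\star}$ for all $t\ge0$.

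Next, check that $(\tilde\vartheta,\tilde Z)$ solves the coupled system. The $Z$-equation holds because $\Pi_\theta(\tilde\vartheta_t)=\theta_\star$. For the $\vartheta$-equation we need $\Psi_t(\vartheta_\star,G_{\nu_t}(\vartheta_\star))=0$ with $\nu_t=\nu_{\theta_\star}$. Using $(\Pi_x)_\#\nu_{\theta_\star}=\pi_{\theta_\star}$, a change of variables gives
\[
G_{\nu_{\theta_\star}}(\vartheta_\star)=\int F(\theta_\star,\Pi_x(z))\,\nu_{\theta_\star}(\md z)=\int F(\theta_\star,x)\,\pi_{\theta_\star}(\md x)=\nabla_\theta\ell(\theta_\star)=0 .
\]
This step uses \eqref{eq:obj_func} and the integrability $F(\theta_\star,\cdot)\in L^1(\pi_{\theta_\star})$. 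Hence $\Psi_t(\vartheta_\star,0)=0$ for all $t$, so the constant path satisfies \eqref{eq:general_mf_system_opt}.

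By pathwise uniqueness, the true solution coincides with $(\tilde\vartheta,\tilde Z)$. Its law at time $t$ is therefore $\delta_{\vartheta_\star}\otimes\nu_{\theta_\star}=\mu_\star$: the first component is deterministic, so the joint law is the product. The main subtlety is the self-consistency of the McKean--Vlasov structure, since $G$ depends on $\nu_t$, which itself depends on the candidate. This is handled by the fact that the candidate's marginal is known a priori to be stationary, and we verify the fixed-point relation directly rather than solving for it. A minor point is that the frozen SDE must be well-posed; I would justify this through the same Lipschitz-type conditions underlying \Cref{ass:strong_sol_exist_unique}.
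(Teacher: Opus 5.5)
Your proposal is correct and matches the paper's proof step for step: you take the constant optimiser path $\vartheta_t\equiv\vartheta_\star$ and the frozen sampler started from $\nu_{\theta_\star}$, compute $G_{\nu_{\theta_\star}}(\vartheta_\star)=\nabla_\theta\ell(\theta_\star)=0$ via $(\Pi_x)_\#\nu_{\theta_\star}=\pi_{\theta_\star}$, and then identify the solution by pathwise uniqueness. One small caveat concerns your first justification that the frozen SDE is well-posed "from" Assumption~\ref{ass:strong_sol_exist_unique}: it is circular, because the coupled solution is not known in advance to have a constant optimiser component. Your alternative reading is the right one, namely that well-posedness of the frozen sampler is implicit in the invariance statement of Assumption~\ref{ass:existence_minimizer}, and this is exactly what the paper tacitly uses.
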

See Appendix~\ref{app:thm_invariant_measure} for the proof. If \(\Psi_t\equiv\Psi\) is time-independent, then the coupled mean-field dynamics are time-homogeneous. In this case \(\mu_\star\) is an invariant measure in the usual semigroup sense. This formulation covers various algorithms in the literature as special cases and brings great flexibility to the design of diffusion-based stochastic optimisers. We first highlight some concrete examples from the literature below.

\subsubsection{Existing literature as special cases}
\glsreset{mmle}
\glsreset{pgd}

\textbf{PGD.} Consider the problem of \gls*{mmle} given a model $p_\theta(x, y)$ with fixed data $y$. The loss function cast as a minimisation is $\ell(\theta) = - \log p_\theta(y)$ and the gradient takes the form \eqref{eq:obj_func} with $F(\theta,x)=-\nabla_\theta \log p_\theta(x,y)$ and $\pi_\theta(x) = p_\theta(x \mid y)$. \citet{pmlr-v206-kuntz23a} proposed \gls*{pgd} to solve this problem, where the mean-field limit can be written as
\begin{align*}
\md \theta_t &=
\left[
\int \nabla_\theta \log p_{\theta_t}(x,y)\,
\nu_t(\md x)
\right] \md t,\\
    \md X_t
&=
\nabla_x\log p_\theta(X_t,y)\, \md t
+
\sqrt{2}\,\md B_t.
\end{align*}
This is a special case of \eqref{eq:general_mf_system_opt}--\eqref{eq:general_mf_system_x} with $\Psi_t(\vartheta_t,G_{q_t})=-G_{q_t}$, $\vartheta_t=\theta_t$, $Z_t=X_t$, $\mathrm{D}_\theta(x)=I_d$, $\mathrm{Q}_\theta(x)=0$, and $g(x,v)=0$ in, so that the drift and diffusion coefficients are $b_\theta(x)=\nabla_x\log p_\theta(x,y)$, and $\sigma_\theta(x)=\sqrt{2}I_d$. The sampler is an overdamped Langevin diffusion targeting $\pi_\theta(x)=p_\theta(x\mid y)$. Using Theorem~\ref{thm:invariant_measure}, we see that $\delta_{\theta_\star}\otimes\pi_{\theta_\star}$ is an invariant measure of the system for any stationary point $\theta_\star$ of $\ell$; in particular, for any minimiser, since $\ell\in C^1$ on $\Theta$.

\textbf{Implicit diffusion.} This is also the mean-field limit of the implicit diffusion framework \citep[Section~4.1]{marion2025implicitdiffusionefficientoptimization} which reads as
\begin{align*}
\md \theta_t &= - \varepsilon_t \int F(\theta_t, x) \nu_t(\md x) \md t,\\
\md X_t &= - \nabla_x E(\theta_t, X_t) \md t + \sqrt{2} \md B_t,
\end{align*}
for the optimisation of loss functions where $\nabla \ell(\theta) = \mathbb{E}_{\pi_\theta}[F(\theta, x)]$. For any deterministic schedule \(t\mapsto\varepsilon_t\) satisfying the regularity needed for well-posedness, this is a special case of \eqref{eq:general_mf_system_opt}--\eqref{eq:general_mf_system_x} with \(\Psi_t(\theta,g)=-\varepsilon_tg\), \(Z_t=X_t\), \(b_\theta=-\nabla_xE(\theta,\cdot)\), and \(\sigma_\theta=\sqrt2I_{d_x}\). The choice \(\varepsilon_t\equiv1\) gives the time-homogeneous optimiser used in the \gls*{pgd} example.

\textbf{MPGD.} Accelerated methods were also proposed for the problem of \gls*{mmle} which introduce auxiliary momentum variables and extend the \gls*{pgd}, which we term \gls*{mpgd} \citep{lim2024momentumparticlemaximumlikelihood}. This is again a special case of our framework with the loss choices identical to the previous case and with the auxiliary variables $m\in \sM=\mathbb{R}^{d_\theta}$ and $v\in \sV=\mathbb{R}^{d_v}$ such that the optimiser state becomes $\vartheta_t=(\theta_t,m_t)$ and the sampler state is $Z_t=(X_t,V_t)$. When \(\Psi_t\) is the time-homogeneous heavy-ball ODE, take \(d_v=d_x\),
\[
\begin{aligned}
g(x,v)&:=\frac{1}{2M_x}\|v\|^2,\\
\mathrm D&=
\begin{pmatrix}
\mathbf 0&\mathbf 0\\
\mathbf 0&\gamma_xI_{d_x}
\end{pmatrix},
\qquad
\mathrm Q=
\begin{pmatrix}
\mathbf 0&-I_{d_x}\\
I_{d_x}&\mathbf 0
\end{pmatrix}.
\end{aligned}
\]
Then
\[
\begin{aligned}
b_\theta(x,v)
&=
\left(
\frac{v}{M_x},
-\nabla_xE(\theta,x)-\frac{\gamma_x}{M_x}v
\right),\\
\sigma_\theta\sigma_\theta^\top
&=
\operatorname{diag}(0,2\gamma_xI_{d_x}),
\end{aligned}
\]
so the sampler is precisely the mass-\(M_x\) underdamped Langevin diffusion used by \gls*{mpgd} \citep{lim2024momentumparticlemaximumlikelihood}. Theorem~\ref{thm:invariant_measure} therefore shows that
\[
\delta_{\theta_\star}\otimes\delta_0\otimes\pi_{\theta_\star}
\otimes\mathcal N(0,M_xI_{d_x})
\]
is stationary whenever \(\nabla_\theta\ell(\theta_\star)=0\).

\subsection{Space and time discretisations}

In order to obtain an SDE that is amenable to discretisation and numerical implementation, we approximate the mean-field law by an empirical measure of \(N\) particles driven by mutually independent Brownian motions. Under the standard i.i.d.\ initialisation, the particles interact through their common optimiser state and therefore are exchangeable rather than independent. Their empirical gradient \(G_{\nu_t^N}(\vartheta_t^N)\) is used as an approximation of the corresponding mean-field gradient. This gives the following finite-dimensional interacting SDE
\begin{align}
\mathrm{d} \vartheta^N_t &= \Psi_t(\vartheta^N_t,G_{\nu_t^N}(\vartheta_t^N) ) \mathrm{d} t,\label{eq:general_particle_system_opt}\\
\mathrm{d} Z^i_t &= b_{\Pi_\theta(\vartheta^N_t)}(Z^i_t) \mathrm{d}t  +\sigma_{\Pi_\theta(\vartheta^N_t)}(Z^i_t)\mathrm{d}B^i_t,\label{eq:general_particle_system_x}
\end{align}
where $\{B^i_t\}_{i=1}^N$ are mutually independent Brownian motions on $\sB$ and the empirical measure $\nu_t^N$ is defined as $\nu_t^N =({1}/{N}) \sum_{i=1}^N \delta_{Z^i_t}$ and
\begin{align}
G_{\nu_t^N}(\vartheta_t^N)&:=\frac{1}{N}\sum_{i=1}^N F(\Pi_{\theta}(\vartheta^N_t),\Pi_{x}Z^i_t). \nonumber
\end{align}
Certain choices of $\Psi_t$ and the drift and diffusion coefficients in \eqref{eq:general_particle_system_x}, together with suitable time-discretisation (e.g. Euler-Maruyama) schemes, give rise to practical algorithms. We leave the design of such algorithms to Section~\ref{sec:algorithm} where we develop applications. We next consider theoretical analysis of the general mean-field system and its interacting-particle approximation.

%% file: sections/main/analysis.tex
We now study stability and finite-particle error for the general mean-field system
\eqref{eq:general_mf_system_opt}--\eqref{eq:general_mf_system_x}. In order to do this, given two solutions $(\vartheta_t,\nu_t),(\tilde{\vartheta}_t,\tilde{\nu}_t)$, we define the distance metric \citep{pgd_theory}
\begin{equation*}
    \mathbf{d}((\vartheta_t,\nu_t),(\tilde{\vartheta}_t,\tilde{\nu}_t)):=\sqrt{\|\vartheta_t-\tilde\vartheta_t\|^2
    +
    W_2^2(\nu_t,\tilde\nu_t)},
\end{equation*}
and introduce the main assumptions required for our convergence results.

\begin{assumption}[Twisted joint strong monotonicity]\label{ass:twisted_joint_strong_monotonicity}
Suppose that there exist symmetric positive definite matrices $D_\vartheta$ and $D_z$, and a constant $\rho>0$, such that, for every $t\ge0$, every $\vartheta,\tilde\vartheta\in\Theta\times\sM$, every $\nu,\tilde\nu\in\mathcal P_2(\sX\times\sV)$, and every coupling $\Lambda\in\Gamma(\nu,\tilde\nu)$, the following inequality holds:
\begin{align}
&\left\langle \vartheta-\tilde\vartheta,\,
    D_\vartheta
    \left[
    \Psi_t(\vartheta,G_\nu(\vartheta))
    -
    \Psi_t(\tilde\vartheta,G_{\tilde\nu}(\tilde\vartheta))
    \right]
    \right\rangle \notag\\
&\quad+
\int
\Big[
    \left\langle z-\tilde z,\,
    D_z
    \left[
    b_{\Pi_\theta(\vartheta)}(z)
    -
    b_{\Pi_\theta(\tilde\vartheta)}(\tilde z)
    \right]
    \right\rangle \notag\\
&\quad+
    \frac{1}{2}\left\|
    D_z^{1/2}
    \left[
    \sigma_{\Pi_\theta(\vartheta)}(z)
    -
    \sigma_{\Pi_\theta(\tilde\vartheta)}(\tilde z)
    \right]
    \right\|_{\mathrm{HS}}^2
\Big]\Lambda(\mathrm dz,\mathrm d\tilde z) \notag\\
&\le
-\rho\left(
    \|\vartheta-\tilde\vartheta\|_{D_\vartheta}^2
    +
    \int\|z-\tilde z\|_{D_z}^2\,
    \Lambda(\mathrm dz,\mathrm d\tilde z)
\right).
\label{eq:joint_strong_monotonicity_twisted}
\end{align}
\end{assumption}
\ref{ass:twisted_joint_strong_monotonicity} is the contractivity condition for the coupled optimiser--sampler system in the twisted metrics \(\|\cdot\|^2_{D_\vartheta}\) and \(\|\cdot\|^2_{D_z}\). When \(D_\vartheta=I_{d_\theta+d_m}\) and \(D_z=I_{d_x+d_v}\), it reduces to the usual joint dissipativity condition in the product Euclidean metric.

\begin{remark}
\ref{ass:twisted_joint_strong_monotonicity} is a genuinely joint
contractivity condition: dissipativity of the exact-gradient optimiser and
of each frozen sampler does not by itself suffice;
Appendix~\ref{app:cross_assumptions} gives a counterexample demonstrating this
obstruction.  Appendix~\ref{app:separated_assumptions} gives a componentwise
small-gain criterion that combines separate dissipativity and Lipschitz bounds
for the optimiser response, gradient observation, sampler drift, and diffusion
through a positive-definite cross-coupling matrix.
Appendix~\ref{app:cor_projected_strong_convexity} instead gives a structured
criterion based on strong monotonicity of the projected field
\(\Phi=(F,\nabla_xE)\), or joint strong convexity of an underlying potential
when this field is conservative, together with sufficient dissipation of the
auxiliary variables. Finally, Appendix~\ref{app:mpgd_result} gives a concrete
hypocoercive construction for a heavy-ball optimiser coupled to an
underdamped sampler.
\end{remark}

\begin{theorem}\label{thm:twisted_joint_strong_monotonicity}
Suppose \ref{ass:strong_sol_exist_unique} and \ref{ass:twisted_joint_strong_monotonicity} hold with positive definite matrices \(D_\vartheta,D_z\) and constant \(\rho>0\). Let \((\vartheta_t,\nu_t)\) and \((\tilde\vartheta_t,\tilde\nu_t)\) be any two solutions of \eqref{eq:general_mf_system_opt}--\eqref{eq:general_mf_system_x} with initial sampler laws in \(\mathcal P_2(\sX\times\sV)\). Define
\[
\kappa
:=
\frac{\lambda_{\max}(\mathrm{diag}(D_\vartheta,D_z))}
{\lambda_{\min}(\mathrm{diag}(D_\vartheta,D_z))}.
\]
Then, for every \(t\ge0\),
\begin{equation*}
\mathbf d((\vartheta_t,\nu_t),(\tilde\vartheta_t,\tilde\nu_t))
\le
\sqrt{\kappa}\,e^{-\rho t}
\mathbf d((\vartheta_0,\nu_0),(\tilde\vartheta_0,\tilde\nu_0)). 
\end{equation*}
If, in addition, \ref{ass:existence_minimizer} holds, define
\(\mu_t:=\delta_{\vartheta_t}\otimes\nu_t\) and
\(\mu_\star:=\delta_{\vartheta_\star}\otimes\nu_{\theta_\star}\).
Then, for every initial product law
\(\mu_0=\delta_{\vartheta_0}\otimes\nu_0\), with
\(\nu_0\in\mathcal P_2(\sX\times\sV)\),
\[
W_2(\mu_t,\mu_\star)
\le
\sqrt{\kappa}\,e^{-\rho t}W_2(\mu_0,\mu_\star).
\]
Consequently, \(\mu_\star\) is the unique finite-second-moment stationary law among laws of the form \(\delta_\vartheta\otimes\nu\). For time-dependent \(\Psi_t\), stationary means that initializing the dynamics from the law leaves the law unchanged for every \(t\ge0\).
\end{theorem}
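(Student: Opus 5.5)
The plan is a synchronous coupling argument in the twisted metric, followed by Grönwall's inequality. Fix the two solutions $(\vartheta_t,\nu_t)$ and $(\tilde\vartheta_t,\tilde\nu_t)$ with initial data $(\vartheta_0,\nu_0)$ and $(\tilde\vartheta_0,\tilde\nu_0)$. Take $\Lambda_0\in\Gamma(\nu_0,\tilde\nu_0)$ to be a $W_2$-optimal coupling, and realise $(Z_0,\tilde Z_0)\sim\Lambda_0$ on a common probability space, independent of one Brownian motion $B_t$ that drives both sampler equations. By \Cref{ass:strong_sol_exist_unique}, the mean-field system started from $\vartheta_0,Z_0$ (resp.\ $\tilde\vartheta_0,\tilde Z_0$) has a pathwise unique strong solution. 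Hence $\mathrm{Law}(Z_t)=\nu_t$ and $\mathrm{Law}(\tilde Z_t)=\tilde\nu_t$: the mean-field law depends only on the initial law, by uniqueness. Define the Lyapunov functional
\[
\mathcal E_t:=\|\vartheta_t-\tilde\vartheta_t\|_{D_\vartheta}^2+\mathbb E\|Z_t-\tilde Z_t\|_{D_z}^2 .
\]

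The key step is to differentiate $\mathcal E_t$. The optimiser part is deterministic, and the chain rule gives $2\langle\vartheta_t-\tilde\vartheta_t,D_\vartheta[\Psi_t(\vartheta_t,G_{\nu_t}(\vartheta_t))-\Psi_t(\tilde\vartheta_t,G_{\tilde\nu_t}(\tilde\vartheta_t))]\rangle$. For the sampler part, apply Itô's formula to $\|Z_t-\tilde Z_t\|_{D_z}^2$. The drift contributes $2\langle Z-\tilde Z,D_z(b-\tilde b)\rangle$, and the quadratic variation contributes $\|D_z^{1/2}(\sigma-\tilde\sigma)\|_{\mathrm{HS}}^2$, because the same $B_t$ drives both processes. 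The martingale term is handled by localisation with stopping times; the finite second moments from \Cref{ass:strong_sol_exist_unique} let us remove the localisation, e.g.\ by working with the integrated form and using Fatou/monotone convergence. This technical point is the main obstacle, since no growth bounds on $b,\sigma$ are assumed beyond well-posedness. If needed, I would argue through the integral inequality $\mathcal E_t-\mathcal E_s\le-2\rho\int_s^t\mathcal E_r\,\md r$ rather than a pointwise derivative.

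Summing the two pieces and taking expectations gives exactly twice the left side of \eqref{eq:joint_strong_monotonicity_twisted}, evaluated with the coupling $\Lambda=\mathrm{Law}(Z_t,\tilde Z_t)\in\Gamma(\nu_t,\tilde\nu_t)$. \Cref{ass:twisted_joint_strong_monotonicity} holds for every coupling, so $\tfrac{\md}{\md t}\mathcal E_t\le-2\rho\,\mathcal E_t$, and Grönwall gives $\mathcal E_t\le e^{-2\rho t}\mathcal E_0$.

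To convert back to $\mathbf d$, write $\lambda_{\min},\lambda_{\max}$ for the extreme eigenvalues of $\mathrm{diag}(D_\vartheta,D_z)$. Since $(Z_t,\tilde Z_t)$ is a coupling, $W_2^2(\nu_t,\tilde\nu_t)\le\mathbb E\|Z_t-\tilde Z_t\|^2$, so $\lambda_{\min}\mathbf d_t^2\le\mathcal E_t$. At time $0$ the coupling is optimal, so $\mathcal E_0\le\lambda_{\max}\mathbf d_0^2$. Combining, $\mathbf d_t\le\sqrt\kappa\,e^{-\rho t}\mathbf d_0$.

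For the second claim, \Cref{thm:invariant_measure} shows that the solution started at $(\vartheta_\star,\nu_{\theta_\star})$ stays there for all $t\ge0$. Apply the first bound with this solution as $(\tilde\vartheta_t,\tilde\nu_t)$. For product measures with a Dirac first factor, every coupling of $\delta_{\vartheta}\otimes\nu$ and $\delta_{\tilde\vartheta}\otimes\tilde\nu$ must pair $\vartheta$ with $\tilde\vartheta$. Hence
\[
W_2^2(\delta_\vartheta\otimes\nu,\delta_{\tilde\vartheta}\otimes\tilde\nu)=\|\vartheta-\tilde\vartheta\|^2+W_2^2(\nu,\tilde\nu)=\mathbf d^2,
\]
which yields the stated $W_2(\mu_t,\mu_\star)$ bound. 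Uniqueness follows directly: if $\delta_{\vartheta}\otimes\nu$ with $\nu\in\mathcal P_2$ is stationary, then $W_2(\delta_\vartheta\otimes\nu,\mu_\star)\le\sqrt\kappa e^{-\rho t}W_2(\delta_\vartheta\otimes\nu,\mu_\star)$ for all $t$. Letting $t\to\infty$ forces the distance to be zero.
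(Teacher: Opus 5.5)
Your proposal is correct and follows the paper's proof essentially step for step. The shared steps are: a synchronous coupling started from a $W_2$-optimal initial coupling; the chain rule and It\^o's formula applied to $\|\vartheta_t-\tilde\vartheta_t\|_{D_\vartheta}^2+\mathbb E\|Z_t-\tilde Z_t\|_{D_z}^2$; the monotonicity assumption applied to $\Lambda_t=\mathrm{Law}(Z_t,\tilde Z_t)$; Gr\"onwall; the eigenvalue comparison giving $\sqrt\kappa$; and finally Theorem~\ref{thm:invariant_measure}, the Dirac-product identity for $W_2$, and the $t\to\infty$ argument for uniqueness. Your remark on localising the martingale term is extra care that the paper leaves implicit, not a departure from its argument.
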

See Appendix~\ref{app:thm_joint_strong_monotonicity} for the proof.

We now consider the particle approximation and bound the error to the mean-field system. For this, we first introduce an auxiliary $N$-copy mean field system, which is a synchronously coupled copy of the interacting particle system. Let \(\bar\vartheta_0\) be deterministic, and let
\(\bar Z_0^1,\ldots,\bar Z_0^N\) be i.i.d.\ with common law
\(\bar\nu_0\in\mathcal P_2(\sX\times\sV)\), independently of the mutually independent Brownian motions $(B_t^1,\ldots,B_t^N)_{t\ge0}$. We define the system as
\begin{align}
    \mathrm d\bar\vartheta_t
    &=
    \Psi_t(\bar\vartheta_t,G_{\bar\nu_t}(\bar\vartheta_t))\,\mathrm dt,\qquad\qquad\qquad \label{eq:n-copy-1}\\
    \mathrm d\bar Z_t^i
    &=
    b_{\Pi_\theta(\bar\vartheta_t)}(\bar Z_t^i)\,\mathrm dt
    +
    \sigma_{\Pi_\theta(\bar\vartheta_t)}(\bar Z_t^i)\,\mathrm dB_t^i
    \label{eq:n-copy-2}
\end{align}
where $\bar\nu_t=\mathrm{Law}(\bar Z_t^i)$ for $i=1,\ldots,N$. Then \(\bar\vartheta_t\) is deterministic and
\(\bar Z_t^1,\ldots,\bar Z_t^N\) are i.i.d.\ with common law \(\bar\nu_t\).
Define
\[
    \bar\nu_t^{[N]}
    :=
    \frac1N\sum_{i=1}^N\delta_{\bar Z_t^i}.
\]
We now state further assumptions that is required our our parameter error bound.
\begin{assumption}[Lipschitz gradient response]\label{ass:psi_F_lipschitz}
Assume that $\Psi_t$ is uniformly Lipschitz in its gradient argument: for all
$t\ge0$, all $\vartheta\in\Theta\times\sM$, and all
$g,\tilde g\in\mathbb R^{d_\theta}$,
\[
    \|\Psi_t(\vartheta,g)-\Psi_t(\vartheta,\tilde g)\|
    \le
    L_\Psi\|g-\tilde g\|.
\]
\end{assumption}

\begin{assumption}[Uniform gradient fluctuation]\label{ass:mf_bounded_second_moment}
For the mean-field copy used in Theorem~\ref{thm:finite_particle_error}, assume
that the stochastic gradient has uniformly bounded variance:
\[
    \sup_{t\ge0}
    \mathbb E
    \left[
    \left\|
    F(\Pi_\theta(\bar\vartheta_t),\Pi_x(\bar Z_t))
    -
    G_{\bar\nu_t}(\bar\vartheta_t)
    \right\|^2
    \right]
    \le
    V_F
    <
    \infty.
\]
\end{assumption}
{\color{red}\ref{ass:mf_bounded_second_moment}}
is a uniform variance bound for the stochastic-gradient integrand.  It follows,
for example, if \(F(\theta,\cdot)\) is globally Lipschitz with a Lipschitz
constant uniform in \(\theta\) and
\(\sup_{t\geq0}\mathbb E\|\Pi_x(\bar Z_t)\|^2<\infty\).  More generally, a
linear-growth bound uniform along the optimiser path, together with the corresponding uniform second-moment bounds, is sufficient.

\begin{theorem}[Finite-particle interaction error]
\label{thm:finite_particle_error}
Suppose \ref{ass:strong_sol_exist_unique}, \ref{ass:twisted_joint_strong_monotonicity}--\ref{ass:mf_bounded_second_moment} hold.
Let $(\vartheta_t^N,Z_t^1,\ldots,Z_t^N)$ solve the interacting particle system
\eqref{eq:general_particle_system_opt}--\eqref{eq:general_particle_system_x}
with the same Brownian motions and the synchronous initial conditions
\(\vartheta_0^N=\bar\vartheta_0\) and \(Z_0^i=\bar Z_0^i\).
Let \((\bar\vartheta_t,\bar Z_t^1,\ldots,\bar Z_t^N)\) solve the \(N\)-copy mean-field system \eqref{eq:n-copy-1}--\eqref{eq:n-copy-2}. Then, for every
$t\ge0$,
\[
    \mathbb E\left[\mathbf{d}\left(\left(\vartheta^N_t,\nu_t^N\right),\left(\bar\vartheta_t,\bar\nu_t^{[N]}\right)\right)
    \right]
    \le
    \tfrac{L_\Psi}{\rho}
    \sqrt{\tfrac{\kappa V_F}{N}}
    \left(1-e^{-\rho t}\right),
\]
where $\kappa$ is the condition number of
$\mathrm{diag}(D_\vartheta,D_z)$.
\end{theorem}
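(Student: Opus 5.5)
We use a synchronous-coupling energy estimate in the twisted metric. Define
\[
\Delta_t := \|\vartheta^N_t-\bar\vartheta_t\|_{D_\vartheta}^2+\frac1N\sum_{i=1}^N\|Z^i_t-\bar Z^i_t\|_{D_z}^2 .
\]
The coupling $\frac1N\sum_i\delta_{(Z^i_t,\bar Z^i_t)}$ belongs to $\Gamma(\nu^N_t,\bar\nu^{[N]}_t)$. Hence $W_2^2(\nu_t^N,\bar\nu_t^{[N]})\le \frac1N\sum_i\|Z^i_t-\bar Z^i_t\|^2$, and therefore
\[
\mathbf d^2\le \Delta_t/\lambda_{\min}(\mathrm{diag}(D_\vartheta,D_z)).
\]
It suffices to show $\mathbb E\sqrt{\Delta_t}\le \frac{L_\Psi}{\rho}\sqrt{\lambda_{\max}V_F/N}\,(1-e^{-\rho t})$, since $\Delta_0=0$ by the synchronous initial conditions.

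\textbf{Step 1 (Itô and monotonicity).} Apply Itô's formula to $\Delta_t$. The martingale terms come from $\langle Z^i-\bar Z^i, D_z(\sigma-\bar\sigma)\,\mathrm dB^i\rangle$. In the drift of the optimiser component, add and subtract $\Psi_t(\bar\vartheta_t,G_{\bar\nu^{[N]}_t}(\bar\vartheta_t))$. Both particle systems then evolve through the empirical-gradient map $\vartheta\mapsto\Psi_t(\vartheta,G_{\nu}(\vartheta))$ evaluated at $(\vartheta^N_t,\nu^N_t)$ and $(\bar\vartheta_t,\bar\nu^{[N]}_t)$. Their joint drift and diffusion difference is therefore exactly the left-hand side of \ref{ass:twisted_joint_strong_monotonicity} with $\nu=\nu^N_t$, $\tilde\nu=\bar\nu^{[N]}_t$ and $\Lambda$ the empirical coupling above. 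This works because \ref{ass:twisted_joint_strong_monotonicity} holds for every pair of measures in $\mathcal P_2$, including empirical ones. The assumption bounds this part by $-2\rho\Delta_t$. The leftover term is $2\langle \vartheta^N_t-\bar\vartheta_t, D_\vartheta R_t\rangle$, where
\[
R_t:=\Psi_t(\bar\vartheta_t,G_{\bar\nu^{[N]}_t}(\bar\vartheta_t))-\Psi_t(\bar\vartheta_t,G_{\bar\nu_t}(\bar\vartheta_t)).
\]
By \ref{ass:psi_F_lipschitz}, $\|R_t\|\le L_\Psi\|G_{\bar\nu^{[N]}_t}(\bar\vartheta_t)-G_{\bar\nu_t}(\bar\vartheta_t)\|$. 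The quantity inside the norm is the average of $N$ i.i.d. centred vectors, because $\bar\vartheta_t$ is deterministic and the $\bar Z^i_t$ are i.i.d. with law $\bar\nu_t$. By \ref{ass:mf_bounded_second_moment}, its second moment is at most $V_F/N$.

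\textbf{Step 2 (pass to $\sqrt{\Delta}$).} By Cauchy–Schwarz,
\[
\langle a,D_\vartheta R\rangle\le \|a\|_{D_\vartheta}\|D_\vartheta^{1/2}R\|\le \sqrt{\Delta_t}\,\sqrt{\lambda_{\max}}\,\|R_t\|.
\]
This gives $\mathrm d\Delta_t\le(-2\rho\Delta_t+2\sqrt{\lambda_{\max}}\sqrt{\Delta_t}\,\|R_t\|)\,\mathrm dt+\mathrm dM_t$. Apply Itô to $\sqrt{\Delta_t+\varepsilon}$. The second-order correction has a favourable sign, because $x\mapsto\sqrt{x+\varepsilon}$ is concave. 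Localise the martingale, take expectations and let $\varepsilon\to0$. This yields
\[
\frac{\mathrm d}{\mathrm dt}\mathbb E\sqrt{\Delta_t}\le -\rho\,\mathbb E\sqrt{\Delta_t}+\sqrt{\lambda_{\max}}\,L_\Psi\sqrt{V_F/N}.
\]
Here we use $\mathbb E\|G_{\bar\nu^{[N]}}-G_{\bar\nu}\|\le\sqrt{V_F/N}$ by Jensen.

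\textbf{Step 3 (Grönwall).} With zero initial value, Grönwall's inequality gives
\[
\mathbb E\sqrt{\Delta_t}\le \frac{L_\Psi}{\rho}\sqrt{\lambda_{\max}V_F/N}\,(1-e^{-\rho t}).
\]
Dividing by $\sqrt{\lambda_{\min}}$ then produces the factor $\sqrt\kappa$.

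\textbf{Main obstacle.} The delicate point is Step 2. The passage to $\sqrt{\Delta}$ must be made rigorous near $\Delta=0$, which the $\varepsilon$-regularisation handles, together with localisation of the stochastic integral. One must also check that the quadratic-variation correction really is dropped with the right sign rather than merely bounded. If this route fails, the fallback is to work with $\mathbb E\Delta_t$ and Young's inequality. That gives a rate of the same order $N^{-1/2}$, but with slightly worse constants.
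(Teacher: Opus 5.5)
Your argument is correct and uses the same ingredients as the paper's proof: the twisted energy $\Delta_t$ (the paper's $\widehat\xi_t^N$), the add-and-subtract of $\Psi_t(\bar\vartheta_t,G_{\bar\nu_t^{[N]}}(\bar\vartheta_t))$, the monotonicity assumption applied to the empirical coupling, the $L_\Psi$-Lipschitz bound, the i.i.d.\ variance bound $V_F/N$, and the $\lambda_{\min}$ comparison. It differs only in where the square root is taken. The paper takes expectations first. It bounds the cross term by Cauchy--Schwarz in $L^2(\mathbb P)$, obtaining $\frac{\mathrm d}{\mathrm dt}\mathbb E\widehat\xi_t^N\le-2\rho\,\mathbb E\widehat\xi_t^N+2L_\Psi\sqrt{\lambda_\vartheta^+V_F/N}\,(\mathbb E\widehat\xi_t^N)^{1/2}$, and then runs a scalar comparison for the deterministic function $u_t:=(\mathbb E\widehat\xi_t^N)^{1/2}$. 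This gives the same constant and the same factor $1-e^{-\rho t}$, but for the stronger quantity $(\mathbb E\,\mathbf d^2)^{1/2}$; the stated bound on $\mathbb E\,\mathbf d$ then follows by Jensen. Your fallback is therefore more pessimistic than necessary. Working with $\mathbb E\Delta_t$ loses nothing if you use Cauchy--Schwarz plus an ODE for $\sqrt{\mathbb E\Delta_t}$ instead of Young's inequality; Young would only degrade $1-e^{-\rho t}$ to $\sqrt{1-e^{-\rho t}}$. In that route the only delicate point is the deterministic division by $u_t$ near $u_t=0$, which the same $\varepsilon$-shift handles without any stochastic calculus for concave functions. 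Your pathwise route controls only $\mathbb E\sqrt{\Delta_t}$, which is weaker since $\mathbb E\sqrt{\Delta_t}\le\sqrt{\mathbb E\Delta_t}$. It is more robust in one respect, though: with localisation and Fatou it never needs $\mathbb E\Delta_t<\infty$ or the stochastic integral to be a true martingale. The paper uses both tacitly, even though its well-posedness assumption concerns only the mean-field system and not the particle system. If you write it up, carry out the $\varepsilon$-step in integral form on $e^{\rho t}\sqrt{\Delta_t+\varepsilon}$ rather than differentiating $\mathbb E\sqrt{\Delta_t}$ directly. Use $\Delta/\sqrt{\Delta+\varepsilon}\ge\sqrt{\Delta+\varepsilon}-\sqrt\varepsilon$ to turn the regularisation into an $O(\sqrt\varepsilon)$ error.
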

The proof is given in Appendix~\ref{app:proof_finite_particle_error}. Using Theorems \ref{thm:twisted_joint_strong_monotonicity} and \ref{thm:finite_particle_error}, we may provide a full parameter error bound of the particle system \eqref{eq:general_particle_system_opt}--\eqref{eq:general_particle_system_x}.

\begin{theorem}[Parameter error bound of particle system]\label{thm:parameter_error_full}
    Suppose \ref{ass:strong_sol_exist_unique}--\ref{ass:mf_bounded_second_moment} hold. Assume that \(\vartheta_0^N\) is deterministic and that \(Z_0^1,\ldots,Z_0^N\) are i.i.d.\ with common law \(\nu_0\), independently of the mutually independent Brownian motions. Then the expected parameter error satisfies
\begin{equation*}
\mathbb{E}\left\|\theta^N_t-\theta_\star\right\|\le \sqrt{\kappa}\,e^{-\rho t}
\mathbf d_0+\tfrac{L_\Psi}{\rho}
\sqrt{\tfrac{\kappa V_F}{N}}
\left(1-e^{-\rho t}\right),
\end{equation*}
where
\(\mathbf d_0:=\mathbf d((\vartheta^N_0,\nu_0),(\vartheta_\star,\nu_{\theta_\star}))\)
is the initial distance from the stationary mean-field solution. Consequently,
\begin{align*}
\lim_{N\to\infty}\limsup_{t\to\infty}
\mathbb E\|\theta_t^N-\theta_\star\|=0.
\end{align*}
\end{theorem}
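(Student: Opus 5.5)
The plan is a triangle inequality through the auxiliary $N$-copy mean-field system \eqref{eq:n-copy-1}--\eqref{eq:n-copy-2}, started synchronously: $\bar\vartheta_0=\vartheta_0^N$ and $\bar Z_0^i=Z_0^i$, so that $\bar\nu_0=\nu_0$. Since $\theta=\Pi_\theta(\vartheta)$ and $\Pi_\theta$ is a coordinate projection (so $1$-Lipschitz), we have
\[
\|\theta_t^N-\theta_\star\|\le\|\vartheta_t^N-\bar\vartheta_t\|+\|\bar\vartheta_t-\vartheta_\star\|.
\]
We bound the two terms separately.

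\textbf{Deterministic term.} Here $\bar\vartheta_t$ is deterministic and, together with $\bar\nu_t=\mathrm{Law}(\bar Z_t^i)$, solves the mean-field system \eqref{eq:general_mf_system_opt}--\eqref{eq:general_mf_system_x}. By Theorem~\ref{thm:invariant_measure}, $(\vartheta_\star,\nu_{\theta_\star})$ is a stationary solution; Assumption~\ref{ass:existence_minimizer} gives $\nu_{\theta_\star}\in\mathcal P_2$. Applying the first estimate of Theorem~\ref{thm:twisted_joint_strong_monotonicity} to these two solutions gives
\[
\|\bar\vartheta_t-\vartheta_\star\|\le\mathbf d((\bar\vartheta_t,\bar\nu_t),(\vartheta_\star,\nu_{\theta_\star}))\le\sqrt{\kappa}\,e^{-\rho t}\mathbf d_0 .
\]

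\textbf{Stochastic term.} Here $\|\vartheta_t^N-\bar\vartheta_t\|\le\mathbf d((\vartheta_t^N,\nu_t^N),(\bar\vartheta_t,\bar\nu_t^{[N]}))$, because the Wasserstein part of $\mathbf d$ is nonnegative. Taking expectations and invoking Theorem~\ref{thm:finite_particle_error}, whose hypotheses are exactly the synchronous coupling set up above, bounds this term by $\tfrac{L_\Psi}{\rho}\sqrt{\kappa V_F/N}\,(1-e^{-\rho t})$. Adding the two bounds gives the stated inequality.

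\textbf{Limit.} Letting $t\to\infty$ kills the first term and leaves $\limsup_{t}\mathbb E\|\theta_t^N-\theta_\star\|\le\tfrac{L_\Psi}{\rho}\sqrt{\kappa V_F/N}$, which tends to $0$ as $N\to\infty$.

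\textbf{Main obstacle.} The delicate step is checking that the hypotheses line up. Assumption~\ref{ass:mf_bounded_second_moment} must be applied to the specific mean-field copy started at $(\vartheta_0^N,\nu_0)$, and $\rho$, $\kappa$ must be the same constants in both theorems. Stationarity of $(\vartheta_\star,\nu_{\theta_\star})$ must also hold in the sense needed: as a solution started from a $\mathcal P_2$ law whose law stays fixed, which Theorem~\ref{thm:invariant_measure} provides under Assumption~\ref{ass:strong_sol_exist_unique}. Once these are checked, the rest is bookkeeping.
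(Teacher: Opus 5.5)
Your proposal is correct and follows essentially the same route as the paper's proof. Both arguments use a triangle inequality through the synchronously started $N$-copy mean-field system, control the deterministic term by Theorems~\ref{thm:invariant_measure} and~\ref{thm:twisted_joint_strong_monotonicity}, and control the stochastic term by Theorem~\ref{thm:finite_particle_error}; the only addition is that you spell out the $t\to\infty$, $N\to\infty$ limit, which the paper leaves implicit.
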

The proof is given in Appendix~\ref{app:cor_parameter_error_full} and follows from Theorems~\ref{thm:twisted_joint_strong_monotonicity}--\ref{thm:finite_particle_error}.

Theorem~\ref{thm:parameter_error_full} concerns the continuous-time particle
system.  To describe at a high level how time discretisation would enter the
bound, let \(h>0\), let \(t_k:=kh\), and denote by
\(\widehat\theta_k^{N,h}\) the parameter iterate produced by a numerical
discretisation of the particle dynamics.  Suppose that the discretisation can
be coupled to the continuous-time system so that, for some error function
\(\varepsilon_{\mathrm{disc}}\) such that $\mathbb E\|\widehat\theta_k^{N,h}-\theta_{t_k}^N\|
\leq
\varepsilon_{\mathrm{disc}}(h,t_k,N),
$
where \(\varepsilon_{\mathrm{disc}}(h,t_k,N)\to0\) as \(h\to0\) -- for example,
on each finite horizon \(T\), one may have
\(\sup_{t_k\leq T}\varepsilon_{\mathrm{disc}}(h,t_k,N)
\leq C_T h^\alpha\) for some \(\alpha>0\).  The triangle inequality would then give the end-to-end estimate
\begin{align*}
\mathbb E\|\widehat\theta_k^{N,h}-\theta_\star\|
\leq
\mathbb{E}\| \theta_{t_k}^N-\theta_\star\|
+
\varepsilon_{\mathrm{disc}}(h,t_k,N).
\end{align*}
The first term can be bounded by Theorem~\ref{thm:parameter_error_full}. Thus the time-discretisation error simply adds to the finite-particle error.
Establishing such sampler- and optimiser-specific discretisation rates,
especially uniformly over an infinite time horizon, requires a separate
stability and numerical-analysis argument.  We leave this aspect to future
work, since the primary aim of the present paper is to demonstrate the
framework and establish its theoretical foundations.

%% file: sections/main/algorithms.tex
In this section, we provide two instantiations to demonstrate the generality of our framework. We first tackle the \gls*{mmle} problem by developing a higher-order proximal Langevin method, building on the proximal interacting particle Langevin methods of \citet{encinar2025proximal}. Second, we provide a general momentum optimiser which generalises \gls*{mpgd} of \citet{lim2024momentumparticlemaximumlikelihood} for general purpose optimisation problems. We apply this method to training \glspl*{ebm}.
\subsection{MMLE with Higher-Order Langevin Dynamics}
\subfile{mmle_mfpo_image_debluring}\label{sec:mmle_image_deblurring}

\subsection{A Momentum Particle Optimiser}\label{sec:momentum_mfpo}
\subfile{momentum_mfpo}

%% file: sections/main/mmle_mfpo_image_debluring.tex
To demonstrate applicability in a higher-dimensional setting, we consider a Bayesian image deblurring problem, following \cite{encinar2025proximal}. Here we are given an information destroying blurring operator, $B$, acting on an image and the goal is to recover a latent clean image, $x\in\mathbb{R}^{d_x}$, from a noisy blurred observation $y=Bx^\star+\epsilon$ and ground-truth image $x^\star\in\mathbb{R}^{d_x}$. Applying the anisotropic \gls*{totalvariation} prior, $p_\theta(x)=C^{-1}_\theta \exp(-e^{\theta}\mathrm{TV}(x))$ we have the posterior,
\begin{equation*}
    \pi_\theta(x)\propto \exp\left(-\frac{1}{2\sigma^2}\|y-Bx\|_2^2-e^\theta\mathrm{TV}(x)\right),
\end{equation*}
where the total variation is given by $\mathrm{TV}(x):=\|\nabla_d x\|_1$, with $\nabla_d$ the non-differentiable two-dimensional discrete gradient operator. We additionally assume $C_\theta<\infty$ as a modelling assumption such that the prior is well-defined. Framed as the \gls*{mmle} of $p_\theta(y)$, we may write the loss function in the form of \eqref{eq:obj_func} with,
\begin{equation*}
    \nabla_\theta\ell(\theta)=\mathbb{E}_{X\sim\pi_\theta}\left[e^\theta\mathrm{TV}(X)-d_x\right].
\end{equation*}
Following \cite{encinar2025proximal} we introduce proximal techniques and \gls*{my} approximations to deal with the non-differentiability of the total variation, which we discuss in Appendix~\ref{app:mmle_details}. We compare the baseline \gls*{mypgd} against a higher-order Langevin scheme following the sampler dynamics introduced in \cite{high_order_langevin_JMLR}. Using gradient descent on the parameters we have the particle SDE:
\begin{equation}\label{eq:high_order_langevin_scheme}
    \begin{aligned}
        \md \theta_t&=-\frac{1}{N}\sum_{i=1}^NF(\theta_t,X^i_t)\md t,\\
        \md X^i_t&=U^i_t\md t,\\
        \md U^i_t&=-\nabla_xE^\lambda(\theta_t,X^i_t)\md t+\gamma V_t^i\md t,\\
        \md V^i_t&=-\gamma U^i_t\md t-\alpha V^i_t\md t+\sqrt{2\alpha}\md B^i_t,
    \end{aligned}
\end{equation}
where we note that for \gls*{mmle} we have $F(\theta,x)=\nabla_\theta E(\theta,x)$. We also replace the non-differentiable $x$-gradient with $E^\lambda$, its \gls*{my} approximation, to obtain a hybrid proximal scheme, which we refer to as \gls*{myholpgd}. Both methods share the same optimiser dynamics but differ in the choice of Langevin scheme for the sampler dynamics.

\begin{figure}[t]
\centering
\includegraphics[width=0.7\textwidth]{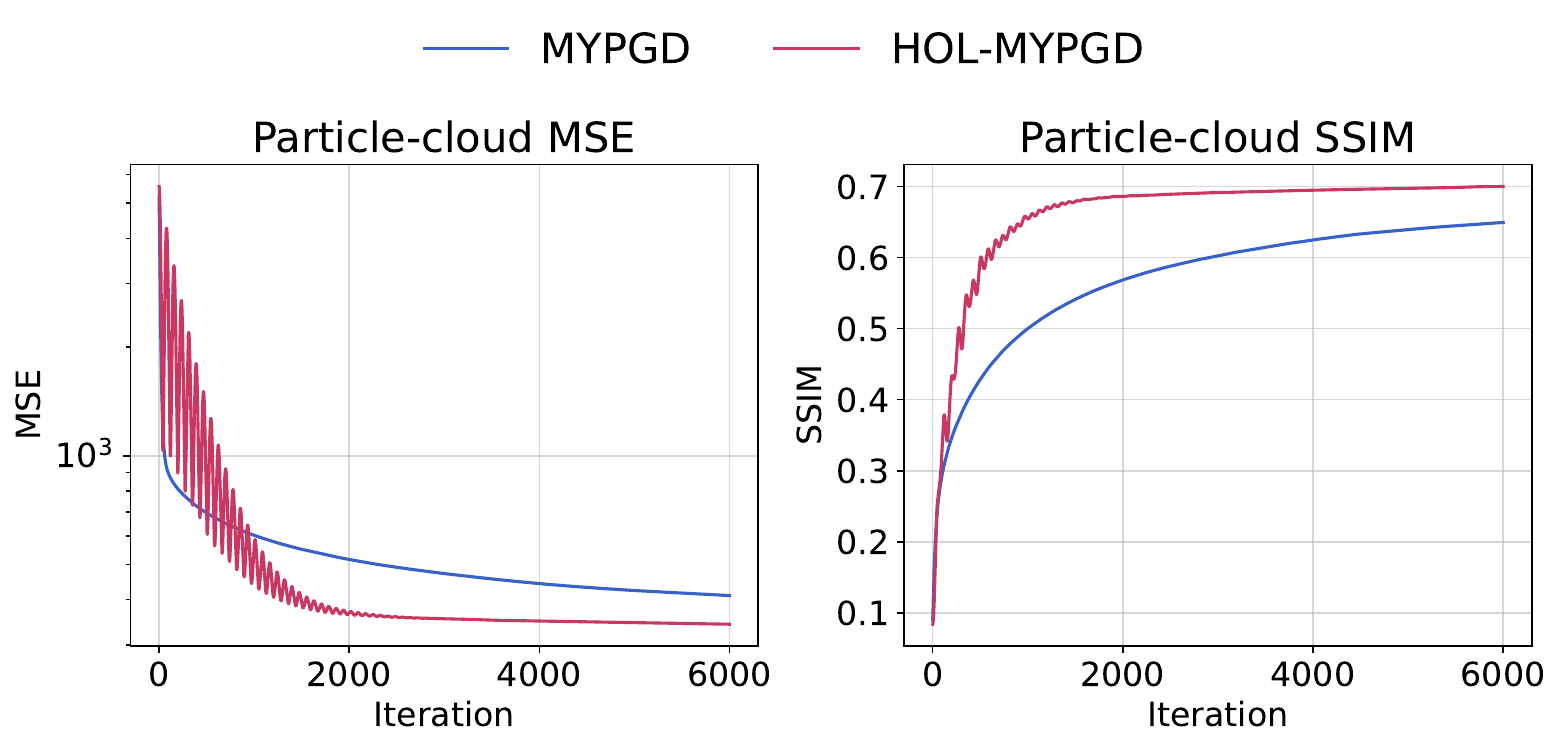}
\caption{Evolution of MSE (left) and SSIM (right) of the particle cloud mean across iterations, comparing methods \gls*{mypgd} and \gls*{myholpgd} for setup A, with $N=20$ particles and $\theta_0=-15$.}
\label{fig:deblurring_example_graphs}
\end{figure}

We consider different setups A and B with different blur and noise levels, with full experimental details provided in Appendix~\ref{app:mmle_details}. Comparing MSE and SSIM between methods for setup A in Figure~\ref{fig:deblurring_example_graphs}, we find superior performance with the high-order Langevin scheme; other ground-truth and blurring-operation examples for setup B are provided in the appendix. We also provide image reconstructions from particles in Figure~\ref{fig:deblurring_example} where we see \gls*{myholpgd} yields sharper reconstructions.

\begin{figure*}[t! tbp]
\centering
\includegraphics[width=1.0\textwidth]{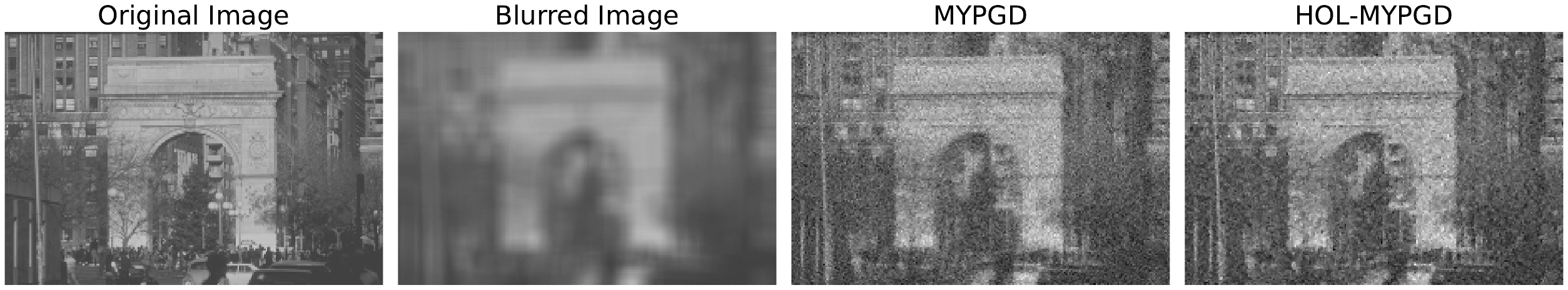}
\caption{Ground truth (left), blurred image (middle-left), and comparisons to single particle reconstructions produced by methods \gls*{mypgd} (middle-right) and \gls*{myholpgd} (right) for setup A.}
\label{fig:deblurring_example}
\end{figure*}

%% file: sections/main/momentum_mfpo.tex
We consider a particular choice of optimiser and sampler to model our system \eqref{eq:general_mf_system_opt}--\eqref{eq:general_mf_system_x}, which may be viewed as a generalisation of \gls*{mpgd}  \citep{lim2024momentumparticlemaximumlikelihood}. Here, both the optimiser and sampler dynamics involve momentum-acceleration, in the spirit of Polyak's Heavy Ball and \gls*{nag} \citep{Nesterov1983AMF}. With $m\in\sM=\mathbb{R}^{d_\theta}$, we model parameter dynamics with
\begin{equation} \label{eq:accelerated_optimiser}
    \begin{aligned}
        \md\theta_t &=m_t\md t,\\
        \md m_t &= -\int F (\theta_t, x)\pi_{\theta_t}(\md x)\md t-\gamma m_t\md t,
    \end{aligned}
\end{equation}
we see that $(\theta,m)=(\theta_\star,0)$ is a fixed point of \eqref{eq:accelerated_optimiser} with $\theta_\star$ such that $\nabla_\theta\ell(\theta_\star)=0$. Using also \gls*{uld} sampling dynamics with  friction coefficient $\gamma>0$, we have an invariant measure with $\pi_\theta$ marginal for frozen $\theta$. We present the \gls*{mmfpo} framework
\begin{equation}\label{eq:alg_meanfield}
    \begin{aligned}
        \md\theta_t &=m_t\md t,\\
        \md m_t &= -\int{F}\left(\theta_t, x\right)\nu_{t}(\md x, \md v)\md t-\gamma m_t\md t
        ,\\
        \md X_t&=V_t \md t,\\
        \md V_t&=-\nabla_x E(\theta_t, X_t)\md t-\gamma V_t\md t+\sqrt{2\gamma}\md B_t,
    \end{aligned}
\end{equation}
where $v\in\sV=\mathbb{R}^{d_x}$. Note that under a change of variables and time-rescaling, we can write the standard \gls*{uld} dynamics in a form without an explicit mass coefficient. We may also marginalise $\nu_t$ and integrate over only the $X$-marginal of the full sampler law.

In practice, we use the particle approximation to \eqref{eq:alg_meanfield} in which we maintain a collection of $\{(X^i_t,V^i_t)\}_{i=1}^N$ each with the same dynamics as the mean-field $(X_t,V_t)$ with mutually independent Brownian motions and use these particles for \gls*{montecarlo} estimations. The particle system can be written fully as:
\begin{equation}\label{eq:alg_underdamped}
    \begin{aligned}
        \md\theta_t &=m_t\md t,\\
        \md m_t &= -\frac{1}{N}\sum_{i=1}^N{F}\left(\theta_t, X^i_t\right)\md t-\gamma m_t\md t
        ,\\
        \md X^i_t&=V_t^i \md t,\\
        \md V^i_t&=-\nabla_x E(\theta_t, X^i_t)\md t-\gamma V_t^i\md t+\sqrt{2\gamma}\md B^i_t.
    \end{aligned}
\end{equation}
We propose a splitting scheme discretisation for implementation, inspired by the \textit{OBABO} scheme of \citet{obabo_ref_leimkuhler_matthews} and refer to Appendix~\ref{app:mmfpo_details} for more details.

\subfile{mmfpo_experiments}

%% file: sections/main/mmfpo_experiments.tex
\subsubsection{Training EBMs}\label{sec:ebm_training_2d}

Next we consider the problem of training \glspl*{ebm} on a collection of synthetic two-dimensional datasets given in Appendix \ref{app:training_ebm_2d}, all with closed form densities for performance evaluation. For each dataset, we parametrise the model energy with a neural network, which we train by minimising the negative log-likelihood of the data. We assume the model is normalisable of the form,
\begin{equation*}
    p_\theta(x)=Z_\theta^{-1}e^{-E(\theta,x)},\qquad Z_\theta=\int_{\mathbb{R}^{d_x}}e^{-E(\theta,x)}\md x,
\end{equation*}
where $E(\theta,x)$ is the energy function parametrised by $\theta\in\mathbb{R}^{d_\theta}$ and $Z_\theta$ is the normalisation constant. Given a dataset of $M$ observations denoted $\{y^m\}_{m=1}^M$, the gradient of the negative log-likelihood is given,
\begin{equation*}
    \nabla_\theta\ell(\theta) = \mathbb{E}_{\pi_\theta}\left[-\nabla_\theta E(\theta,X)+\frac{1}{M}\sum_{m=1}^M\nabla_\theta E(\theta, y^m)\right],
\end{equation*}
where the data-points $\{y^{m}\}_{m=1}^M$ are known as the positive samples and the particles $\{X^n\}_{n=1}^N$, used in the implementation and drawn from the model $\pi_\theta$ to approximate the expectation, are known as the negative samples. Replacing the fixed data averaging with an expectation over an underlying data distribution $p_\text{data}$ makes this equivalent to minimising the (forward) \gls*{kl} divergence between the model distribution, $\pi_\theta$, and $p_{\text{data}}$ with
\begin{equation}
    \ell(\theta)=\mathrm{KL}(p_{\text{data}}\Vert \pi_\theta).
\end{equation}

\begin{figure*}[h]
    \centering
    \includegraphics[width=\textwidth]{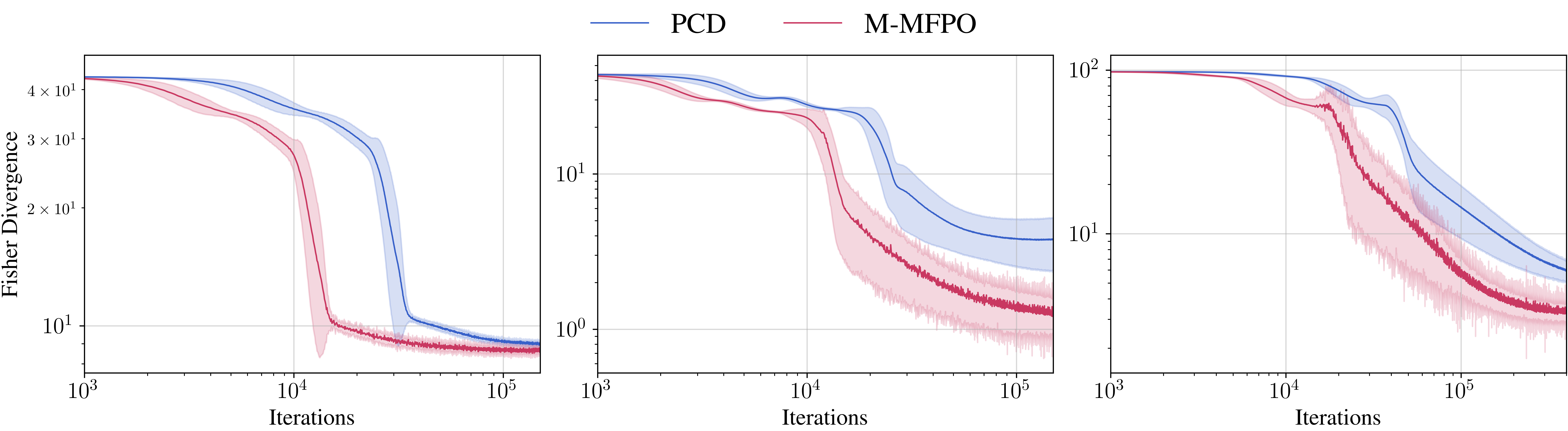}
    \caption{Comparison of Fisher divergence over iterations for the \textit{rings} dataset (left), \textit{beads} dataset (middle), and \textit{lattice} dataset (right) for \gls*{pcd} and \gls*{mmfpo} methods for $N=10,000$ particles. Curves show the mean across 25 independent runs, with shaded bands indicating ±1 standard deviation across runs.}
    \label{fig:training_comparisons}
\end{figure*}

Here, we can see the \gls*{pcd} algorithm is a special case of the \gls*{impdiff} framework. In this experiment, we tune each method separately to showcase a fair comparison of both methods using separately selected hyperparameters. We evaluate performance with respect to the number of outer iterations, which we recall is the same fixing the number of gradient back-propagations using our discretisation, and report the Fisher Divergence in figure \ref{fig:training_comparisons}. We note that even after tuning, there is a clear acceleration of \gls*{mmfpo} in the logarithmic scale. We also report the learnt probability densities and freshly generated samples in figure \ref{fig:models} and see both methods produce models of visually similar qualities.

\begin{figure*}[h]
\begin{minipage}[t]{0.32\textwidth}
        \centering
        \includegraphics[width=\textwidth]{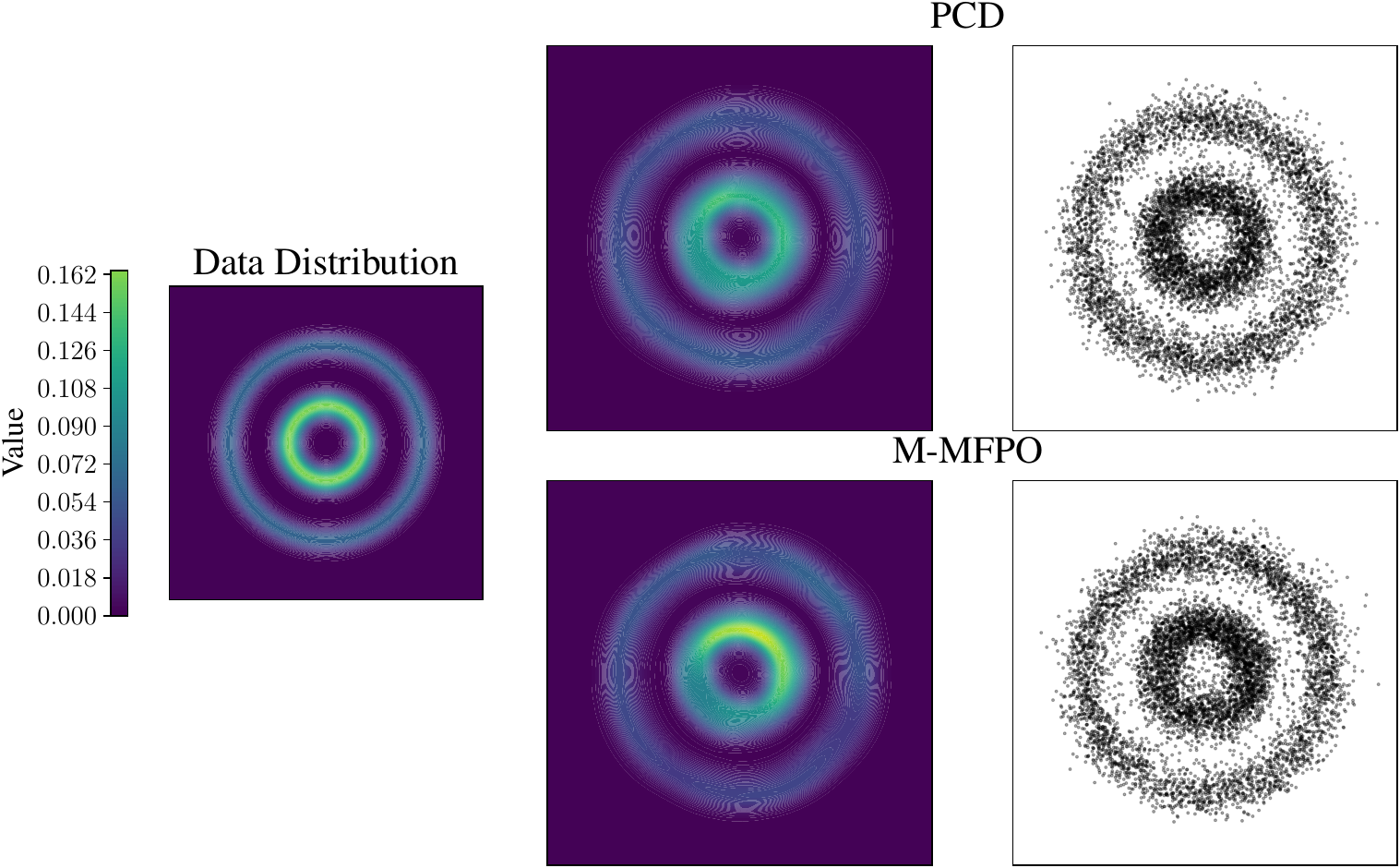}
    \end{minipage}
    \hfill
    \begin{minipage}[t]{0.32\textwidth}
        \centering
        \includegraphics[width=\textwidth]{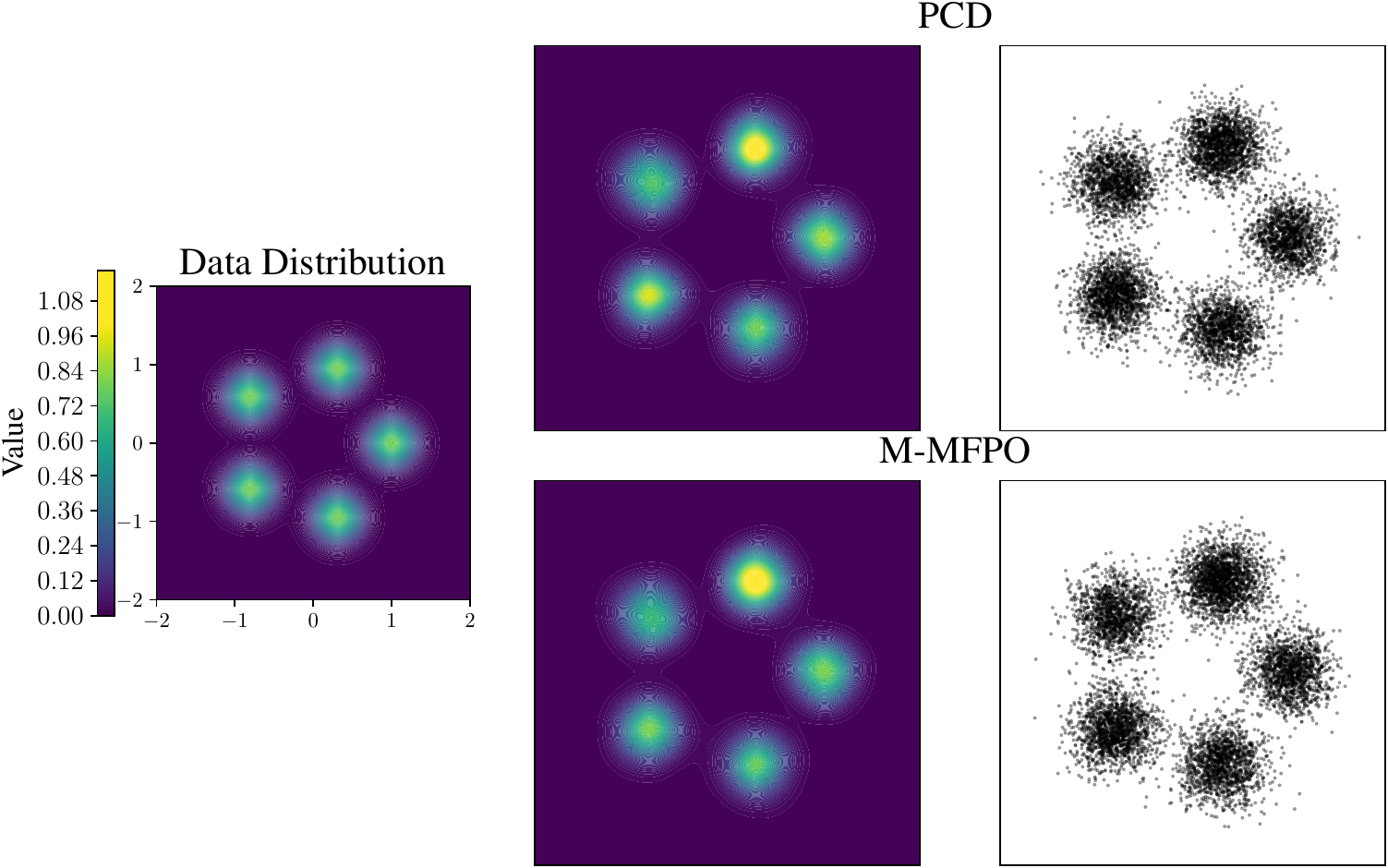}
    \end{minipage}
    \hfill
    \begin{minipage}[t]{0.32\textwidth}
        \centering
        \includegraphics[width=\textwidth]{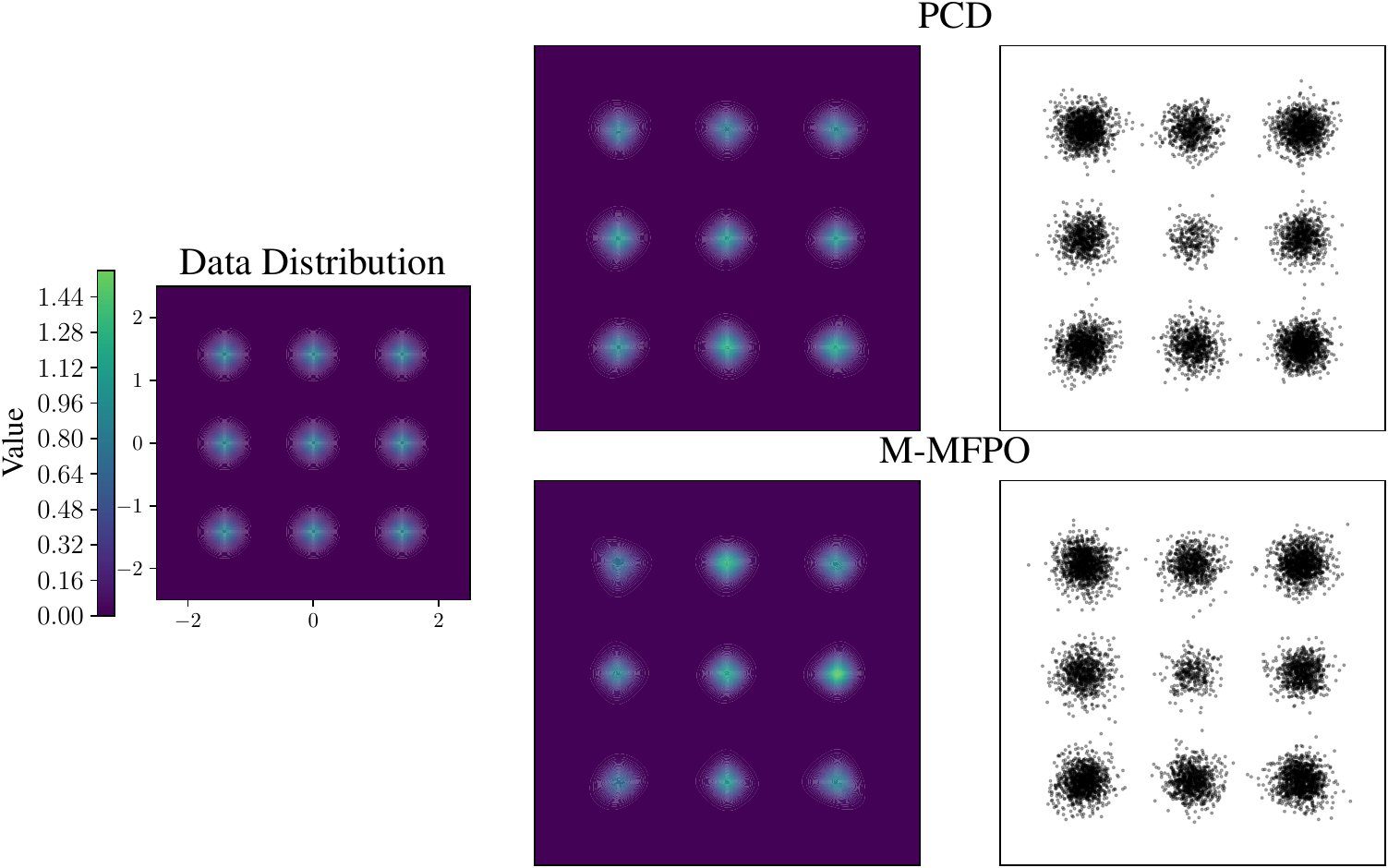}
    \end{minipage}
    \caption{Comparison of learnt densities and fresh samples from the model for the \textit{rings} dataset (left), \textit{beads} dataset (middle), and \textit{lattice} dataset (right).}
    \label{fig:models}
\end{figure*}

%% file: sections/main/conclusion.tex
We introduced a general diffusion-based framework for stochastic optimisation
when the loss gradient is an expectation under a parameter-dependent
distribution. The framework couples an optimiser with a continuously evolving
sampler and accommodates overdamped, non-reversible, underdamped, and
higher-order dynamics within a common mean-field formulation. Under
well-posedness, existence of the stationary pair, and twisted joint strong
monotonicity, the mean-field system contracts exponentially to that stationary
solution. For the interacting particle system, we
obtained an explicit \(N^{-1/2}\) interaction-error bound and a parameter convergence bound.

The experiments illustrate the practical flexibility of the framework. In
Bayesian image deblurring, the higher-order Langevin variant improves the
reported MSE and SSIM and produces sharper reconstructions than the proximal
particle-gradient baseline. In the two-dimensional energy-based-model
experiments, the momentum method reaches lower Fisher divergence more rapidly
under the matched (gradient-evaluation) budget, while retaining
comparable visual sample quality.

\subsection{Limitations and Future Work}

The theory relies on global contractivity and uniform-moment assumptions, which
can be restrictive for the non-convex objectives encountered in large neural
models. Moreover, the guarantees concern the continuous-time systems:
establishing finite-step error and stability for the proposed splitting schemes
requires a separate discretisation analysis. The time-dependent contraction
result is conditional on verifying its monotonicity rate for each schedule. Future work should therefore develop local or non-convex guarantees, analyse discretisation and adaptive preconditioning.

%% file: sections/appendix/theory_proofs.tex
\subsection{Proof of Theorem \ref{thm:invariant_measure}}\label{app:thm_invariant_measure}
Set \(\vartheta_0=\vartheta_\star\) and \(Z_0\sim\nu_{\theta_\star}\). We construct a solution to \eqref{eq:general_mf_system_opt}--\eqref{eq:general_mf_system_x} with initial law \(\delta_{\vartheta_\star}\otimes\nu_{\theta_\star}\) and show that this law is preserved.

Set \(\vartheta_t\equiv\vartheta_\star\). Since \(\Pi_\theta(\vartheta_\star)=\theta_\star\), let \(Z_t\) solve the frozen sampler equation
\[
    \mathrm dZ_t
    =
    b_{\theta_\star}(Z_t)\,\mathrm dt
    +
    \sigma_{\theta_\star}(Z_t)\,\mathrm dB_t,
    \qquad
    Z_0\sim\nu_{\theta_\star}.
\]
By invariance of \(\nu_{\theta_\star}\), \(\mathrm{Law}(Z_t)=\nu_{\theta_\star}\) for every \(t\ge0\). Hence
\begin{align*}
G_{\nu_t}(\vartheta_t)
&=
\int_{\sX\times\sV}
F(\Pi_\theta(\vartheta_t),\Pi_x(z))\,\nu_t(\mathrm dz)\\
&=
\int_{\sX\times\sV}
F(\theta_\star,\Pi_x(z))\,\nu_{\theta_\star}(\mathrm dz)\\
&=
\int_{\sX}F(\theta_\star,x)\,\pi_{\theta_\star}(\mathrm dx)
=
\nabla_\theta\ell(\theta_\star)
=0,
\end{align*}
where we used \((\Pi_x)_\#\nu_{\theta_\star}=\pi_{\theta_\star}\). Assumption~\Cref{ass:existence_minimizer} gives \(\Psi_t(\vartheta_\star,0)=0\), and therefore \(\mathrm d\vartheta_t=0\). Thus \((\vartheta_t,Z_t)\) solves the coupled system and has law \(\delta_{\vartheta_\star}\otimes\nu_{\theta_\star}\) for all \(t\ge0\). By pathwise uniqueness in Assumption~\ref{ass:strong_sol_exist_unique}, this constructed solution is the unique solution with the prescribed initial data. This proves that \(\mu_\star\) is invariant.

\subsection{Proof of Theorem \ref{thm:twisted_joint_strong_monotonicity}}\label{app:thm_joint_strong_monotonicity}

Let \((\vartheta_t,\nu_t)\) and \((\tilde\vartheta_t,\tilde\nu_t)\) be two solutions. Choose an optimal coupling \(\Lambda_0\in\Gamma(\nu_0,\tilde\nu_0)\), take \((Z_0,\tilde Z_0)\sim\Lambda_0\), and drive the two sampler equations with the same Brownian motion:
\begin{align*}
    \mathrm dZ_t
    &=
    b_{\Pi_\theta(\vartheta_t)}(Z_t)\,\mathrm dt
    +
    \sigma_{\Pi_\theta(\vartheta_t)}(Z_t)\,\mathrm dB_t,\\
    \mathrm d\tilde Z_t
    &=
    b_{\Pi_\theta(\tilde\vartheta_t)}(\tilde Z_t)\,\mathrm dt
    +
    \sigma_{\Pi_\theta(\tilde\vartheta_t)}(\tilde Z_t)\,\mathrm dB_t.
\end{align*}
By uniqueness, the marginal laws of \(Z_t\) and \(\tilde Z_t\) are \(\nu_t\) and \(\tilde\nu_t\). Hence \(\Lambda_t:=\mathrm{Law}(Z_t,\tilde Z_t)\) is a coupling of \(\nu_t\) and \(\tilde\nu_t\).

Define
\[
    D_t
    :=
    \|\vartheta_t-\tilde\vartheta_t\|_{D_\vartheta}^2
    +
    \mathbb E\|Z_t-\tilde Z_t\|_{D_z}^2.
\]
For the optimiser states,
\begin{align*}
    \frac{\mathrm d}{\mathrm dt}
    \|\vartheta_t-\tilde\vartheta_t\|_{D_\vartheta}^2
    =
    2\left\langle \vartheta_t-\tilde\vartheta_t,\,D_{\vartheta}\big(
    \Psi_t(\vartheta_t,G_{\nu_t}(\vartheta_t))
    -
    \Psi_t(\tilde\vartheta_t,G_{\tilde\nu_t}(\tilde\vartheta_t))\big)
    \right\rangle .
\end{align*}
For the sampler variables, Ito's formula under the synchronous coupling gives
\begin{align*}
    \frac{\mathrm d}{\mathrm dt}
    \mathbb E\|Z_t-\tilde Z_t\|_{D_z}^2
    &=
    \mathbb E\Big[
    2\left\langle Z_t-\tilde Z_t,\,
   D_z\big( b_{\Pi_\theta(\vartheta_t)}(Z_t)
    -
    b_{\Pi_\theta(\tilde\vartheta_t)}(\tilde Z_t)\big)
    \right\rangle\\
    &\qquad+
    \left\|D_z^{1/2}\big(
    \sigma_{\Pi_\theta(\vartheta_t)}(Z_t)
    -
    \sigma_{\Pi_\theta(\tilde\vartheta_t)}(\tilde Z_t)
    \big)\right\|_{\mathrm{HS}}^2
    \Big].
\end{align*}
Adding the two identities and applying \eqref{eq:joint_strong_monotonicity_twisted} with \(\Lambda=\Lambda_t\) yields
\[
    \frac{\mathrm d}{\mathrm dt}D_t
    \le
    -2\rho D_t\qquad\Rightarrow\qquad D_t\le e^{-2\rho t}D_0
\]
by Gronwall's lemma. Let
\(\lambda_\vartheta^-,\lambda_\vartheta^+\) be the smallest and largest
eigenvalues of \(D_\vartheta\), and let \(\lambda_z^-,\lambda_z^+\) be the
corresponding eigenvalues of \({D_z}\). Since \(\Lambda_0\) is optimal,
\begin{align*}
    D_0
    &=
    \|\vartheta_0-\tilde\vartheta_0\|_{D_\vartheta}^2
    + \mathbb E\|Z_0-\tilde Z_0\|_{D_z}^2\\
    &\le \max(\lambda_\vartheta^+,\lambda_z^+)\big(\|\vartheta_0-\tilde\vartheta_0\|^2
    + \mathbb E\|Z_0-\tilde Z_0\|^2\big)
    \\
    &= \max(\lambda_\vartheta^+,\lambda_z^+)\big(\|\vartheta_0-\tilde\vartheta_0\|^2
    +W_2^2(\nu_0,\tilde\nu_0)\big),
\end{align*}
and for every \(t\ge0\),
\begin{align*}
    \|\vartheta_t-\tilde\vartheta_t\|^2 +W_2^2(\nu_t,\tilde\nu_t)
    &\le
    \|\vartheta_t-\tilde\vartheta_t\|^2+\mathbb E\|Z_t-\tilde Z_t\|^2\\
    &\le\frac{1}{\min(\lambda^-_\vartheta,\lambda^-_z)}\big(\|\vartheta_t-\tilde\vartheta_t\|^2_{D_\vartheta}+\mathbb E\|Z_t-\tilde Z_t\|_{D_z}^2\big)
\end{align*}
we arrive at,
\begin{align*}
    \mathbf{d}((\vartheta_t,\nu_t),(\tilde{\vartheta}_t,\tilde{\nu}_t))^2
    \le
    \frac{\max(\lambda_\vartheta^+,\lambda_z^+)}{\min(\lambda^-_\vartheta,\lambda^-_z)}e^{-2\rho t}
    \mathbf{d}((\vartheta_0,\nu_0),(\tilde{\vartheta}_0,\tilde{\nu}_0))^2,
\end{align*}
proving the contraction result.

Now assume in addition that \ref{ass:existence_minimizer} holds.
By Theorem~\ref{thm:invariant_measure},
\((\vartheta_\star,\nu_{\theta_\star})\) is a stationary solution. Moreover,
for deterministic optimiser states,
\[
\begin{aligned}
W_2^2\!\left(
\delta_\vartheta\otimes\nu,\,
\delta_{\tilde\vartheta}\otimes\tilde\nu
\right)
&=
\|\vartheta-\tilde\vartheta\|^2
+
W_2^2(\nu,\tilde\nu).
\end{aligned}
\]
Applying the preceding contraction with
\((\tilde\vartheta_t,\tilde\nu_t)
=(\vartheta_\star,\nu_{\theta_\star})\) proves
\[
W_2(\mu_t,\mu_\star)
\leq
\sqrt\kappa\,e^{-\rho t}W_2(\mu_0,\mu_\star).
\]
Finally, let \(\delta_\vartheta\otimes\nu\) be any other stationary law of
the stated product form with finite second moment. Applying this estimate
between that stationary law and \(\mu_\star\) gives, for every \(t\geq0\),
\[
W_2(\delta_\vartheta\otimes\nu,\mu_\star)
\leq
\sqrt\kappa\,e^{-\rho t}
W_2(\delta_\vartheta\otimes\nu,\mu_\star).
\]
Letting \(t\to\infty\) shows that the left-hand side is zero. Hence
\(\delta_\vartheta\otimes\nu=\mu_\star\), proving uniqueness within the class asserted in the theorem.

\subsection{Proof of Theorem \ref{thm:finite_particle_error}}\label{app:proof_finite_particle_error}

Set
\[
\begin{aligned}
    \Delta\vartheta_t&:=\vartheta_t^N-\bar\vartheta_t,
    &
    \Delta Z_t^i&:=Z_t^i-\bar Z_t^i,\\
    \xi_t^N
    &:=
    \|\Delta\vartheta_t\|^2
    +
    \frac1N\sum_{i=1}^N\|\Delta Z_t^i\|^2,
    &
    \widehat\xi_t^N
    &:=
    \|\Delta\vartheta_t\|_{D_\vartheta}^2
    +
    \frac1N\sum_{i=1}^N\|\Delta Z_t^i\|_{{D_z}}^2.
\end{aligned}
\]
By Ito's formula and the synchronous coupling,
\begin{align*}
\frac{\mathrm d}{\mathrm dt}\mathbb E[\widehat\xi_t^N]
&=
\mathbb E\Bigg[
2\left\langle
    \Delta\vartheta_t,
    D_\vartheta
    \left[
    \Psi_t(\vartheta_t^N,G_{\nu_t^N}(\vartheta_t^N))
    -
    \Psi_t(\bar\vartheta_t,G_{\bar\nu_t}(\bar\vartheta_t))
    \right]
\right\rangle\\
&\quad+
\frac1N\sum_{i=1}^N
\Big\{
2\left\langle
    \Delta Z_t^i,
    {D_z}
    \left[
    b_{\Pi_\theta(\vartheta_t^N)}(Z_t^i)
    -
    b_{\Pi_\theta(\bar\vartheta_t)}(\bar Z_t^i)
    \right]
\right\rangle\\
&\qquad\qquad+
\left\|
    {D_z}^{1/2}
    \left[
    \sigma_{\Pi_\theta(\vartheta_t^N)}(Z_t^i)
    -
    \sigma_{\Pi_\theta(\bar\vartheta_t)}(\bar Z_t^i)
    \right]
\right\|_{\mathrm{HS}}^2
\Big\}
\Bigg].
\end{align*}
Add and subtract
\(D_\vartheta\Psi_t(\bar\vartheta_t,
G_{\bar\nu_t^{[N]}}(\bar\vartheta_t))\). Applying
Assumption~\Cref{ass:twisted_joint_strong_monotonicity} to the empirical
coupling
\[
    \Lambda_t^N
    :=
    \frac1N\sum_{i=1}^N
    \delta_{(Z_t^i,\bar Z_t^i)}
\]
gives
\begin{align*}
\frac{\mathrm d}{\mathrm dt}\mathbb E[\widehat\xi_t^N]
&\le
-2\rho\,\mathbb E[\widehat\xi_t^N]\\
&\quad+
2\mathbb E
\left[
\left\langle
    \Delta\vartheta_t,
    D_\vartheta
    \left[
    \Psi_t(\bar\vartheta_t,G_{\bar\nu_t^{[N]}}(\bar\vartheta_t))
    -
    \Psi_t(\bar\vartheta_t,G_{\bar\nu_t}(\bar\vartheta_t))
    \right]
\right\rangle
\right].
\end{align*}
By Cauchy--Schwarz and Assumption~\Cref{ass:psi_F_lipschitz},
\begin{align*}
&\left|
\mathbb E
\left[
\left\langle
    \Delta\vartheta_t,
    D_\vartheta
    \left[
    \Psi_t(\bar\vartheta_t,G_{\bar\nu_t^{[N]}}(\bar\vartheta_t))
    -
    \Psi_t(\bar\vartheta_t,G_{\bar\nu_t}(\bar\vartheta_t))
    \right]
\right\rangle
\right]
\right|\\
&\qquad\le
    L_\Psi\sqrt{\lambda_\vartheta^+}\,
    \mathbb E[\widehat\xi_t^N]^{1/2}
    \mathbb E
    \left[
    \left\|
    G_{\bar\nu_t^{[N]}}(\bar\vartheta_t)
    -
    G_{\bar\nu_t}(\bar\vartheta_t)
    \right\|^2
    \right]^{1/2},
\end{align*}
where \(\lambda_\vartheta^+\) is the largest eigenvalue of \(D_\vartheta\).
Since \(\bar\vartheta_0\) is deterministic, \(\bar\vartheta_t\) is
deterministic. Moreover, because the sampler initial states are i.i.d.\ and
independent of the mutually independent Brownian motions, the random vectors
\[
    Y_i
    :=
    F(\Pi_\theta(\bar\vartheta_t),\Pi_x(\bar Z_t^i))
    -
    G_{\bar\nu_t}(\bar\vartheta_t),
    \qquad i=1,\ldots,N,
\]
are independent and centred. Therefore, by
Assumption~\ref{ass:mf_bounded_second_moment},
\begin{align*}
    \mathbb E
    \left[
    \left\|
    G_{\bar\nu_t^{[N]}}(\bar\vartheta_t)
    -
    G_{\bar\nu_t}(\bar\vartheta_t)
    \right\|^2
    \right]
    &=
    \mathbb E\left\|\frac1N\sum_{i=1}^NY_i\right\|^2=
    \frac1{N^2}\sum_{i=1}^N\mathbb E\|Y_i\|^2
    \le
    \frac{V_F}{N}.
\end{align*}
Therefore
\[
    \frac{\mathrm d}{\mathrm dt}
    \mathbb E[\widehat\xi_t^N]
    \le
    -2\rho\,\mathbb E[\widehat\xi_t^N]
    +
    2L_\Psi
    \sqrt{\frac{\lambda_\vartheta^+V_F}{N}}
    \mathbb E[\widehat\xi_t^N]^{1/2}.
\]
Writing \(u_t:=\mathbb E[\widehat\xi_t^N]^{1/2}\), this implies
\[
    \frac{\mathrm d}{\mathrm dt}u_t
    \le
    -\rho u_t
    +
    L_\Psi
    \sqrt{\frac{\lambda_\vartheta^+V_F}{N}}.
\]
The synchronous initialisation gives \(u_0=0\), and hence
\[
    u_t
    \le
    \frac{L_\Psi}{\rho}
    \sqrt{\frac{\lambda_\vartheta^+V_F}{N}}
    (1-e^{-\rho t}).
\]
Finally,
\[
    W_2^2(\nu_t^N,\bar\nu_t^{[N]})
    \le
    \frac1N\sum_{i=1}^N\|Z_t^i-\bar Z_t^i\|^2,
\]
and
\[
    \xi_t^N
    \le
    \frac{1}{\min(\lambda_\vartheta^-,{\lambda_z^-})}
    \widehat\xi_t^N.
\]
Thus
\[
    \left(
    \mathbb E
    \left[
        \|\vartheta_t^N-\bar\vartheta_t\|^2
        +
        W_2^2(\nu_t^N,\bar\nu_t^{[N]})
    \right]
    \right)^{1/2}
    \le
    \frac{L_\Psi}{\rho}
    \sqrt{
        \frac{\lambda_\vartheta^+V_F}
             {N\min(\lambda_\vartheta^-,{\lambda_z^-})}
    }
    (1-e^{-\rho t})
    \le
    \frac{L_\Psi}{\rho}
    \sqrt{\frac{\kappa V_F}{N}}
    (1-e^{-\rho t}),
\]
which, after applying Jensen's inequality, is the desired estimate.

\subsection{Proof of Theorem \ref{thm:parameter_error_full}}\label{app:cor_parameter_error_full}
Construct the \(N\)-copy mean-field
system and the interacting particle system with the coupling from
Theorem~\ref{thm:finite_particle_error}: set
\(\bar\vartheta_0=\vartheta_0^N\), let the mean-field sampler copies have
common initial law \(\nu_0\), and use the same initial sampler states and
Brownian motions in each particle--mean-field pair. Since every
mean-field sampler copy has law \(\bar\nu_t\), the common optimiser path
\(\bar\vartheta_t\) is the optimiser path of the one-copy mean-field system
started from \((\vartheta_0^N,\nu_0)\).

Write \(\bar\theta_t:=\Pi_\theta(\bar\vartheta_t)\). The triangle inequality
and the fact that \(\Pi_\theta\) is a Euclidean coordinate projection give
\[
\begin{aligned}
\mathbb E\|\theta_t^N-\theta_\star\|
&\leq
\mathbb E\|\theta_t^N-\bar\theta_t\|
+
\|\bar\theta_t-\theta_\star\|\\
&\leq
\mathbb E\left[
\mathbf d\left(
(\vartheta_t^N,\nu_t^N),
(\bar\vartheta_t,\bar\nu_t^{[N]})
\right)
\right] +
\mathbf d\left(
(\bar\vartheta_t,\bar\nu_t),
(\vartheta_\star,\nu_{\theta_\star})
\right).
\end{aligned}
\]
By Theorem~\ref{thm:finite_particle_error}, the first term is bounded by
\[
\frac{L_\Psi}{\rho}
\sqrt{\frac{\kappa V_F}{N}}
(1-e^{-\rho t}).
\]
By Theorem~\ref{thm:invariant_measure},
\((\vartheta_\star,\nu_{\theta_\star})\) is a stationary mean-field
solution. Therefore
Theorem~\ref{thm:twisted_joint_strong_monotonicity} gives
\[
\begin{aligned}
&\mathbf d\left(
(\bar\vartheta_t,\bar\nu_t),
(\vartheta_\star,\nu_{\theta_\star})
\right)\leq
\sqrt\kappa\,e^{-\rho t}
\mathbf d\left(
(\bar\vartheta_0,\nu_0),
(\vartheta_\star,\nu_{\theta_\star})
\right)
=
\sqrt\kappa\,e^{-\rho t}\mathbf d_0,
\end{aligned}
\]
where \(\bar\vartheta_0=\vartheta_0^N\). Combining the last three displays
yields
\[
\mathbb E\|\theta_t^N-\theta_\star\|
\leq
\sqrt\kappa\,e^{-\rho t}\mathbf d_0
+
\frac{L_\Psi}{\rho}
\sqrt{\frac{\kappa V_F}{N}}
(1-e^{-\rho t}),
\]
which is the claimed parameter-error estimate.

%% file: sections/appendix/assumptions.tex
This appendix explains the role and verification of
\ref{ass:twisted_joint_strong_monotonicity}.  We first show why separate stability of the
optimiser and frozen sampler is not sufficient.  We then give two
complementary sufficient-condition routes: a general small-gain criterion
based on separate dissipativity and Lipschitz bounds, and a structured
criterion for projected fields with auxiliary variables.  The latter is
illustrated by a concrete heavy-ball/underdamped construction.

\subsection{Why joint cross-coupling control is necessary}
\label{app:cross_assumptions}

Separate dissipativity of the exact-gradient optimiser and of each frozen
sampler does not imply stability of the coupled dynamics.  Consider the
two-dimensional non-augmented system $Z_t=X_t$.  Let $a>0$,
$\omega\in\mathbb R$, and
\begin{align*}
M:=
\begin{pmatrix}
a&-\omega\\
\omega&a
\end{pmatrix}.
\end{align*}
Define
\begin{align*}
\ell(\theta):=\frac12\|\theta\|^2,
\qquad
\pi_\theta:=\mathcal N(\theta,a^{-1}I_2),
\qquad
F(\theta,x):=x,
\end{align*}
and choose the time-independent optimiser map
\begin{align*}
\Psi(\theta,g):=-g,
\qquad
\Psi(\theta,G_\nu(\theta))=-\mathbb E_\nu[X],
\qquad
\mathrm dX_t
=M(\theta_t-X_t)\,\mathrm dt+\sqrt2\,\mathrm dB_t.
\end{align*}
The gradient identity is valid because
\begin{align*}
\int F(\theta,x)\,\pi_\theta(\mathrm dx)
=\mathbb E_{\pi_\theta}[X]
=\theta
=\nabla_\theta\ell(\theta).
\end{align*}
The exact-gradient optimiser is strongly dissipative:
\begin{align*}
\left\langle
\theta-\tilde\theta,\,
\Psi(\theta,\nabla\ell(\theta))
-\Psi(\tilde\theta,\nabla\ell(\tilde\theta))
\right\rangle
=-\|\theta-\tilde\theta\|^2.
\end{align*}
The frozen sampler drift is also strongly dissipative.  Indeed, the
skew-symmetric part of $M$ contributes zero to its quadratic form, and hence
\begin{align*}
\langle x-\tilde x,b_\theta(x)-b_\theta(\tilde x)\rangle
&=-\langle x-\tilde x,M(x-\tilde x)\rangle\\
&=-a\|x-\tilde x\|^2.
\end{align*}
Moreover, $\pi_\theta$ is the invariant law of the frozen sampler.  Its mean
is $\theta$, and its covariance $\Sigma=a^{-1}I_2$ solves
\begin{align*}
M\Sigma+\Sigma M^\top
=a^{-1}(M+M^\top)
=2I_2.
\end{align*}

Nevertheless, with $m_t:=\mathbb E[X_t]$, the coupled first moments obey
\begin{align*}
\dot\theta_t=-m_t,
\qquad
\dot m_t=M(\theta_t-m_t),
\end{align*}
or equivalently
\begin{align*}
\frac{\mathrm d}{\mathrm dt}
\begin{pmatrix}\theta_t\\m_t\end{pmatrix}
=
\begin{pmatrix}
0&-I_2\\
M&-M
\end{pmatrix}
\begin{pmatrix}\theta_t\\m_t\end{pmatrix}.
\end{align*}
If $\mu$ is an eigenvalue of $M$, the corresponding eigenvalues
$\lambda$ of this block system satisfy
\begin{align*}
\lambda^2+\mu\lambda+\mu=0.
\end{align*}
For $a=1/2$ and $\omega=1$, the eigenvalues of $M$ are
$\mu=1/2\pm i$, and the block system has eigenvalues
\begin{align*}
0.1561565506\pm1.4232893066i,
\qquad
-0.6561565506\pm0.4232893066i.
\end{align*}
Two eigenvalues have positive real part, so generic first moments grow
exponentially.  Thus \ref{ass:twisted_joint_strong_monotonicity} must control
the optimiser--sampler cross-coupling;
separate componentwise stability alone cannot provide the required
contraction.

\subsection{A general separated sufficient condition}
\label{app:separated_assumptions}

We first give a general small-gain route to
\ref{ass:twisted_joint_strong_monotonicity}.  Fix symmetric
positive-definite matrices
\begin{align*}
D_\vartheta\in
\mathbb R^{(d_\theta+d_m)\times(d_\theta+d_m)},
\qquad
D_z\in
\mathbb R^{(d_x+d_v)\times(d_x+d_v)}.
\end{align*}
The following assumptions separate the optimiser response, the stochastic
gradient observation map, and the frozen sampler.

\begin{assumption}[Contractive optimiser response]
\label{ass:psi_assumptions}
There exist constants $\rho_\Psi>0$ and $L_\Psi\geq0$ such that, for every
$t\geq0$, every
$\vartheta,\tilde\vartheta\in\Theta\times\sM$, and every
$\nu,\tilde\nu\in\mathcal P_2(\sX\times\sV)$,
\begin{align*}
&\left\langle
\vartheta-\tilde\vartheta,\,
D_\vartheta
\left[
\Psi_t(\vartheta,G_\nu(\vartheta))
-\Psi_t(\tilde\vartheta,G_\nu(\tilde\vartheta))
\right]
\right\rangle
\leq
-\rho_\Psi
\|\vartheta-\tilde\vartheta\|_{D_\vartheta}^2,\\
&\left\|
\Psi_t(\vartheta,G_\nu(\vartheta))
-\Psi_t(\vartheta,G_{\tilde\nu}(\vartheta))
\right\|
\leq
L_\Psi
\left\|G_\nu(\vartheta)-G_{\tilde\nu}(\vartheta)\right\|.
\end{align*}
\end{assumption}

\begin{assumption}[Lipschitz gradient observation]
\label{ass:lipschitz_F}
There exists $L_F\geq0$ such that, for every
$\vartheta\in\Theta\times\sM$ and every
$z,\tilde z\in\sX\times\sV$,
\begin{align*}
\left\|
F(\Pi_\theta(\vartheta),\Pi_x(z))
-F(\Pi_\theta(\vartheta),\Pi_x(\tilde z))
\right\|
\leq
L_F\|z-\tilde z\|.
\end{align*}
\end{assumption}

\begin{assumption}[Contractive frozen sampler and Lipschitz coupling]
\label{ass:SDE_assumptions}
There exist constants $\rho_b>0$ and $L_b,L_\sigma\geq0$ such that, for
every $\vartheta,\tilde\vartheta\in\Theta\times\sM$ and every
$z,\tilde z\in\sX\times\sV$,
\begin{align*}
&\left\langle
z-\tilde z,\,
D_z
\left[
b_{\Pi_\theta(\vartheta)}(z)
-b_{\Pi_\theta(\vartheta)}(\tilde z)
\right]
\right\rangle
\leq
-\rho_b\|z-\tilde z\|_{D_z}^2,\\
&\left\|
b_{\Pi_\theta(\vartheta)}(z)
-b_{\Pi_\theta(\tilde\vartheta)}(z)
\right\|
\leq
L_b\|\vartheta-\tilde\vartheta\|,\\
&\left\|
\sigma_{\Pi_\theta(\vartheta)}(z)
-\sigma_{\Pi_\theta(\tilde\vartheta)}(\tilde z)
\right\|_{\mathrm{HS}}
\leq
L_\sigma
\sqrt{
\|\vartheta-\tilde\vartheta\|^2+\|z-\tilde z\|^2
}.
\end{align*}
\end{assumption}

\begin{corollary}[Separate sufficient conditions for
\ref{ass:twisted_joint_strong_monotonicity}]
\label{cor:separate_monotonicity}
Let
\begin{align*}
\kappa:=
\frac{\lambda_{\max}(\operatorname{diag}(D_\vartheta,D_z))}
{\lambda_{\min}(\operatorname{diag}(D_\vartheta,D_z))}.
\end{align*}
Define
\begin{align*}
H:=
\begin{pmatrix}
\rho_\Psi-\frac12\kappa L_\sigma^2&
-\frac12\sqrt\kappa(L_\Psi L_F+L_b)\\
-\frac12\sqrt\kappa(L_\Psi L_F+L_b)&
\rho_b-\frac12\kappa L_\sigma^2
\end{pmatrix}.
\end{align*}
Under \ref{ass:psi_assumptions},
\ref{ass:lipschitz_F}, and \ref{ass:SDE_assumptions},
\ref{ass:twisted_joint_strong_monotonicity} holds if
\begin{equation}
\label{eq:cor3_indv_cond_joint}
\begin{aligned}
2\rho_\Psi-\kappa L_\sigma^2&>0,\\
2\rho_b-\kappa L_\sigma^2&>0,\\
(2\rho_\Psi-\kappa L_\sigma^2)
(2\rho_b-\kappa L_\sigma^2)
&>
\kappa(L_\Psi L_F+L_b)^2.
\end{aligned}
\end{equation}
More precisely, under \eqref{eq:cor3_indv_cond_joint}, $H$ is positive
definite and \ref{ass:twisted_joint_strong_monotonicity} holds with rate
$\rho=\lambda_{\min}(H)>0$.
A simpler sufficient condition is
\begin{equation}
\label{eq:cor3_indv_cond_separate}
\rho_\Psi
>
\frac{\kappa L_\sigma^2}{2}
+
\frac{\sqrt\kappa}{2}(L_\Psi L_F+L_b),
\qquad
\rho_b
>
\frac{\kappa L_\sigma^2}{2}
+
\frac{\sqrt\kappa}{2}(L_\Psi L_F+L_b).
\end{equation}
Consequently, if \ref{ass:strong_sol_exist_unique} also holds,
Theorem~\ref{thm:twisted_joint_strong_monotonicity} gives exponential
contraction.
\end{corollary}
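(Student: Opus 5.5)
To verify \ref{ass:twisted_joint_strong_monotonicity} under the stated hypotheses, I fix $t$, $\vartheta,\tilde\vartheta$, $\nu,\tilde\nu$ and a coupling $\Lambda\in\Gamma(\nu,\tilde\nu)$. I abbreviate $a:=\|\vartheta-\tilde\vartheta\|_{D_\vartheta}$ and $c:=(\int\|z-\tilde z\|_{D_z}^2\,\mathrm d\Lambda)^{1/2}$. Each of the three terms on the left of \eqref{eq:joint_strong_monotonicity_twisted} is then bounded by a quadratic form in $(a,c)$. The goal is to show that the sum is at most $-(a,c)H(a,c)^\top\le-\lambda_{\min}(H)(a^2+c^2)$, which gives the claim with $\rho=\lambda_{\min}(H)$.

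For the optimiser term, I add and subtract $\Psi_t(\tilde\vartheta,G_\nu(\tilde\vartheta))$. The first difference is handled by the contractive part of \ref{ass:psi_assumptions} and gives at most $-\rho_\Psi a^2$. For the second difference I use the Lipschitz part of \ref{ass:psi_assumptions} and then \ref{ass:lipschitz_F} under $\Lambda$, with Jensen:
\[
\|G_\nu(\tilde\vartheta)-G_{\tilde\nu}(\tilde\vartheta)\|\le\int\|F(\tilde\theta,\Pi_x z)-F(\tilde\theta,\Pi_x\tilde z)\|\,\mathrm d\Lambda\le L_F\Big(\int\|z-\tilde z\|^2\mathrm d\Lambda\Big)^{1/2}.
\]
Cauchy--Schwarz in the $D_\vartheta$ inner product then bounds this cross term by $\|D_\vartheta^{1/2}\|\,a\,L_\Psi L_F\,\lambda_{\min}(D_z)^{-1/2}c$. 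The eigenvalue factors combine to at most $\sqrt\kappa$, since $\lambda_{\max}(D_\vartheta)^{1/2}/\lambda_{\min}(D_z)^{1/2}\le\sqrt\kappa$. This step needs care: $\kappa$ is defined from the block-diagonal matrix, so every mixed ratio $\lambda_{\max}(\cdot)/\lambda_{\min}(\cdot)$ across the two blocks is at most $\kappa$.

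For the drift term, I split $b_\theta(z)-b_{\tilde\theta}(\tilde z)=[b_\theta(z)-b_\theta(\tilde z)]+[b_\theta(\tilde z)-b_{\tilde\theta}(\tilde z)]$. The first bracket gives $-\rho_b\|z-\tilde z\|_{D_z}^2$ pointwise. The second gives at most $\|D_z^{1/2}(z-\tilde z)\|\,\lambda_{\max}(D_z)^{1/2}L_b\|\vartheta-\tilde\vartheta\|$. Integrating against $\Lambda$, using Cauchy--Schwarz and converting $\|\vartheta-\tilde\vartheta\|\le\lambda_{\min}(D_\vartheta)^{-1/2}a$ yields $-\rho_b c^2+\sqrt\kappa L_b\,ac$.

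For the diffusion term, $\tfrac12\|D_z^{1/2}(\sigma-\tilde\sigma)\|_{\mathrm{HS}}^2\le\tfrac12\lambda_{\max}(D_z)L_\sigma^2(\|\vartheta-\tilde\vartheta\|^2+\|z-\tilde z\|^2)$. Converting both norms to the twisted norms gives at most $\tfrac12\kappa L_\sigma^2(a^2+c^2)$ after integration. Summing the three estimates produces exactly $-(\rho_\Psi-\tfrac12\kappa L_\sigma^2)a^2-(\rho_b-\tfrac12\kappa L_\sigma^2)c^2+\sqrt\kappa(L_\Psi L_F+L_b)ac$, which is $-(a,c)H(a,c)^\top$.

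Positive definiteness of $H$ under \eqref{eq:cor3_indv_cond_joint} is the $2\times2$ Sylvester criterion: positive diagonal entries and a positive determinant, after multiplying through by $4$. For \eqref{eq:cor3_indv_cond_separate}, I use $2|ac|\le a^2+c^2$ to absorb the off-diagonal term into each diagonal entry, which shows diagonal dominance and hence positive definiteness. The final sentence of the corollary is then Theorem~\ref{thm:twisted_joint_strong_monotonicity} applied directly. The main obstacle is the eigenvalue bookkeeping, which must keep each cross constant within $\sqrt\kappa$ and the diffusion constant within $\kappa$. If a constant comes out slightly worse, I expect a factor mismatch that might require restating the constant in $H$.
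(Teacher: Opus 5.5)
Your proposal is correct and follows the paper's proof essentially step for step. It uses the same add-and-subtract of $\Psi_t(\tilde\vartheta,G_\nu(\tilde\vartheta))$, the same splitting of the drift, the same eigenvalue conversions giving the constants $\sqrt\kappa$ and $\kappa$, and the same reduction to $-(a,c)H(a,c)^\top\le-\lambda_{\min}(H)(a^2+c^2)$. Your absorption of the cross term via $2|ac|\le a^2+c^2$ is just the paper's strict diagonal dominance argument written out explicitly, and your constants match $H$ exactly, so nothing needs restating.
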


\begin{proof}
Fix arbitrary $t\geq0$,
$\vartheta,\tilde\vartheta\in\Theta\times\sM$,
$\nu,\tilde\nu\in\mathcal P_2(\sX\times\sV)$, and
$\Lambda\in\Gamma(\nu,\tilde\nu)$.  Define
\begin{align*}
a:=\|\vartheta-\tilde\vartheta\|_{D_\vartheta},
\qquad
b:=
\left(
\int\|z-\tilde z\|_{D_z}^2\,
\Lambda(\mathrm dz,\mathrm d\tilde z)
\right)^{1/2}.
\end{align*}
Also write
\begin{align*}
\lambda_-:=
\lambda_{\min}(\operatorname{diag}(D_\vartheta,D_z)),
\qquad
\lambda_+:=
\lambda_{\max}(\operatorname{diag}(D_\vartheta,D_z)),
\end{align*}
so that $\kappa=\lambda_+/\lambda_-$.

For the optimiser term, add and subtract
$\Psi_t(\tilde\vartheta,G_\nu(\tilde\vartheta))$.
\ref{ass:psi_assumptions} bounds the same-measure term by
$-\rho_\Psi a^2$.  For the remaining term, Cauchy--Schwarz in the
$D_\vartheta$-metric gives
\begin{align*}
&\left|
\left\langle
\vartheta-\tilde\vartheta,\,
D_\vartheta
\left[
\Psi_t(\tilde\vartheta,G_\nu(\tilde\vartheta))
-\Psi_t(\tilde\vartheta,G_{\tilde\nu}(\tilde\vartheta))
\right]
\right\rangle
\right|\leq
a\sqrt{\lambda_+}\,L_\Psi
\|G_\nu(\tilde\vartheta)-G_{\tilde\nu}(\tilde\vartheta)\|.
\end{align*}
The coupling identity, \ref{ass:lipschitz_F}, and
Cauchy--Schwarz imply
\begin{align*}
\|G_\nu(\tilde\vartheta)-G_{\tilde\nu}(\tilde\vartheta)\|
&\leq
L_F
\left(
\int\|z-\tilde z\|^2\,
\Lambda(\mathrm dz,\mathrm d\tilde z)
\right)^{1/2}\leq
\frac{L_F}{\sqrt{\lambda_-}}\,b.
\end{align*}
Therefore the full optimiser contribution is at most
\begin{align*}
-\rho_\Psi a^2+\sqrt\kappa L_\Psi L_F\,ab.
\end{align*}

For the sampler drift, split
\begin{align*}
b_{\Pi_\theta(\vartheta)}(z)
-b_{\Pi_\theta(\tilde\vartheta)}(\tilde z)
&=
\left[
b_{\Pi_\theta(\vartheta)}(z)
-b_{\Pi_\theta(\vartheta)}(\tilde z)
\right]+
\left[
b_{\Pi_\theta(\vartheta)}(\tilde z)
-b_{\Pi_\theta(\tilde\vartheta)}(\tilde z)
\right].
\end{align*}
\ref{ass:SDE_assumptions} and the same norm comparisons bound the
integrated drift contribution by
\begin{align*}
-\rho_b b^2+\sqrt\kappa L_b\,ab.
\end{align*}
The diffusion contribution satisfies
\begin{align*}
&\int
\left\|D_z^{1/2}
\left[
\sigma_{\Pi_\theta(\vartheta)}(z)
-\sigma_{\Pi_\theta(\tilde\vartheta)}(\tilde z)
\right]
\right\|_{\mathrm{HS}}^2
\Lambda(\mathrm dz,\mathrm d\tilde z)\leq
\kappa L_\sigma^2(a^2+b^2).
\end{align*}
Including the factor $1/2$ in
\eqref{eq:joint_strong_monotonicity_twisted}, its left-hand side is therefore
at most
\begin{align*}
-
\begin{pmatrix}a\\b\end{pmatrix}^{\!\top}
H
\begin{pmatrix}a\\b\end{pmatrix}.
\end{align*}
The three inequalities in \eqref{eq:cor3_indv_cond_joint} are precisely the
positive-diagonal and positive-determinant conditions for $H$.  It is
therefore positive definite.  Since its smallest eigenvalue is
$\rho=\lambda_{\min}(H)>0$, the preceding display is at most
$-\rho(a^2+b^2)$, which is
\ref{ass:twisted_joint_strong_monotonicity}.  Under
\eqref{eq:cor3_indv_cond_separate}, each diagonal entry is strictly larger
than the absolute value of the off-diagonal entry, so strict diagonal
dominance gives the simpler sufficient condition.
\end{proof}

\subsection{A sufficient condition based on the projected field}
\label{app:cor_projected_strong_convexity}

For systems with position-like and auxiliary variables, the following route
uses strong monotonicity of the projected $(\theta,x)$-field and separate
dissipation of the auxiliary coordinates.

\begin{assumption}[Strongly monotone projected field]
\label{ass:auxiliary_separation}
Assume
\begin{align*}
\Theta=\mathbb R^{d_\theta},
\qquad
\sM=\mathbb R^{d_m},
\qquad
\sX=\mathbb R^{d_x},
\qquad
\sV=\mathbb R^{d_v}.
\end{align*}
Write
$\vartheta=(\theta,m)$, $z=(x,v)$, and let
$\Pi_\theta(\vartheta)=\theta$, $\Pi_x(z)=x$.
Assume that the potential-dependent force in the sampler is the score
$\nabla_x\log\pi_\theta(x)=-\nabla_xE(\theta,x)$.  Define the projected
dissipative field by using the negative of this score in its $x$-component:
\begin{align*}
\Phi(\theta,x):=
\begin{pmatrix}
F(\theta,x)\\
\nabla_xE(\theta,x)
\end{pmatrix}.
\end{align*}
Assume also that $\Phi$ has at most linear growth: there exists
$C_\Phi\geq0$ such that
\begin{align*}
\|\Phi(\theta,x)\|
\leq
C_\Phi\bigl(1+\|\theta\|+\|x\|\bigr)
\qquad
\text{for all }(\theta,x)\in
\mathbb R^{d_\theta}\times\mathbb R^{d_x}.
\end{align*}
In particular, $G_\nu(\vartheta)$ and the projected-field integrals used
below are finite for laws in $\mathcal P_2(\sX\times\sV)$.
There exists $\rho_\Phi>0$ such that, for every
$\theta,\tilde\theta\in\mathbb R^{d_\theta}$ and every
$x,\tilde x\in\mathbb R^{d_x}$,
\begin{align*}
\left\langle
(\theta,x)-(\tilde\theta,\tilde x),\,
\Phi(\theta,x)-\Phi(\tilde\theta,\tilde x)
\right\rangle
\geq
\rho_\Phi
\left(
\|\theta-\tilde\theta\|^2+\|x-\tilde x\|^2
\right).
\end{align*}
\end{assumption}

\begin{assumption}[Dissipation of auxiliary variables]
\label{ass:auxiliary_dissipation}
Work under the decomposition in
\ref{ass:auxiliary_separation}.  There exist constants
$\beta>0$, $A_\theta,A_x\geq0$, and symmetric positive-definite matrices
\begin{align*}
D_\vartheta&\in
\mathbb R^{(d_\theta+d_m)\times(d_\theta+d_m)},&
D_z&\in
\mathbb R^{(d_x+d_v)\times(d_x+d_v)},\\
D_m&\in\mathbb R^{d_m\times d_m},&
D_v&\in\mathbb R^{d_v\times d_v},
\end{align*}
such that the following holds.  For every $t\geq0$, every
$\vartheta=(\theta,m)$, $\tilde\vartheta=(\tilde\theta,\tilde m)$, every
$\nu,\tilde\nu\in\mathcal P_2(\sX\times\sV)$, and every
$\Lambda\in\Gamma(\nu,\tilde\nu)$, define
\begin{align*}
R_\Lambda
:=
\|\theta-\tilde\theta\|^2
+
\int\|x-\tilde x\|^2\,
\Lambda(\mathrm dz,\mathrm d\tilde z).
\end{align*}
Then
\begin{equation}
\label{eq:optimiser_auxiliary_dissipation}
\begin{aligned}
&\left\langle
\vartheta-\tilde\vartheta,\,
D_\vartheta
\left[
\Psi_t(\vartheta,G_\nu(\vartheta))
-\Psi_t(\tilde\vartheta,G_{\tilde\nu}(\tilde\vartheta))
\right]
\right\rangle\leq
-\beta
\left\langle
\theta-\tilde\theta,\,
G_\nu(\vartheta)-G_{\tilde\nu}(\tilde\vartheta)
\right\rangle
-\|m-\tilde m\|_{D_m}^2
+A_\theta R_\Lambda,
\end{aligned}
\end{equation}
and
\begin{equation}
\label{eq:sampler_auxiliary_dissipation}
\begin{aligned}
\int
\left\langle
z-\tilde z,\,
D_z
\left[
b_{\Pi_\theta(\vartheta)}(z)
-b_{\Pi_\theta(\tilde\vartheta)}(\tilde z)
\right]
\right\rangle
\Lambda(\mathrm dz,\mathrm d\tilde z) \leq &
-\beta
\int
\left\langle
x-\tilde x,\,
\nabla_xE(\theta,x)-\nabla_xE(\tilde\theta,\tilde x)
\right\rangle
\Lambda(\mathrm dz,\mathrm d\tilde z)\\
&
-\int\|v-\tilde v\|_{D_v}^2\,
\Lambda(\mathrm dz,\mathrm d\tilde z)
+A_x R_\Lambda.
\end{aligned}
\end{equation}
\end{assumption}

\ref{ass:auxiliary_separation} controls the joint projected field
in $(\theta,x)$, whereas
\ref{ass:auxiliary_dissipation} requires the auxiliary coordinates
$(m,v)$ to dissipate strongly enough to absorb the cross terms generated by
the twisted metrics.  In the conservative case
$\Phi=\nabla_{(\theta,x)}\varphi$, joint
$\rho_\Phi$-strong convexity of $\varphi$ implies the required strong
monotonicity.  For \gls*{mmle}, this applies when $F=\nabla_\theta E$, so
that $\varphi=E$ is the negative joint log-density up to an additive
constant independent of $(\theta,x)$;
see \citet{pmlr-v206-kuntz23a,akyildiz2025interacting}.

\begin{corollary}[Projected-field sufficient condition for
\ref{ass:twisted_joint_strong_monotonicity}]
\label{cor:projected_strong_convexity}
Suppose \ref{ass:auxiliary_separation} and
\ref{ass:auxiliary_dissipation} hold, and suppose that
$\sigma_\theta(z)$ is independent of $(\theta,z)$.  If
\begin{align*}
c:=\beta\rho_\Phi-A_\theta-A_x>0,
\end{align*}
define
\begin{align*}
K_\vartheta:=
\begin{pmatrix}
cI_{d_\theta}&0\\
0&D_m
\end{pmatrix},
\qquad
K_z:=
\begin{pmatrix}
cI_{d_x}&0\\
0&D_v
\end{pmatrix},
\end{align*}
and
\begin{align*}
\rho:=
\min\left\{
\lambda_{\min}\!\left(
D_\vartheta^{-1/2}K_\vartheta D_\vartheta^{-1/2}
\right),
\lambda_{\min}\!\left(
D_z^{-1/2}K_zD_z^{-1/2}
\right)
\right\}.
\end{align*}
Then $\rho>0$ and
\ref{ass:twisted_joint_strong_monotonicity} holds with the
matrices $D_\vartheta,D_z$ and rate $\rho$.  Consequently, if
\ref{ass:strong_sol_exist_unique} also holds,
Theorem~\ref{thm:twisted_joint_strong_monotonicity} gives exponential
contraction.
\end{corollary}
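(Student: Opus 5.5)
We add the two inequalities of \ref{ass:auxiliary_dissipation} and then use the strong monotonicity of the projected field $\Phi$ to turn the $\beta$-weighted cross terms into a negative quadratic form in $(\theta,x)$. Fix $t\ge0$, states $\vartheta=(\theta,m)$ and $\tilde\vartheta=(\tilde\theta,\tilde m)$, laws $\nu,\tilde\nu$, and a coupling $\Lambda\in\Gamma(\nu,\tilde\nu)$.

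\textbf{Diffusion term.} Since $\sigma_\theta(z)$ is independent of $(\theta,z)$, the Hilbert--Schmidt term in \eqref{eq:joint_strong_monotonicity_twisted} vanishes identically. Only the optimiser and drift terms remain, and we bound them by \eqref{eq:optimiser_auxiliary_dissipation} and \eqref{eq:sampler_auxiliary_dissipation}.

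\textbf{Combining the $\beta$-terms.} We rewrite the optimiser cross term as an integral against $\Lambda$. Because $\Lambda$ has marginals $\nu$ and $\tilde\nu$,
\[
G_\nu(\vartheta)-G_{\tilde\nu}(\tilde\vartheta)=\int\bigl[F(\theta,x)-F(\tilde\theta,\tilde x)\bigr]\Lambda(\mathrm dz,\mathrm d\tilde z),
\]
and all integrals are finite by the linear growth of $\Phi$ and finite second moments. Hence
\[
\langle\theta-\tilde\theta,G_\nu(\vartheta)-G_{\tilde\nu}(\tilde\vartheta)\rangle=\int\langle\theta-\tilde\theta,F(\theta,x)-F(\tilde\theta,\tilde x)\rangle\,\mathrm d\Lambda .
\]
Adding this to the sampler cross term gives
\[
-\beta\int\bigl\langle(\theta,x)-(\tilde\theta,\tilde x),\,\Phi(\theta,x)-\Phi(\tilde\theta,\tilde x)\bigr\rangle\,\mathrm d\Lambda.
\]
Applying \ref{ass:auxiliary_separation} pointwise inside the integral bounds this by $-\beta\rho_\Phi R_\Lambda$; here we use that $\|\theta-\tilde\theta\|^2$ is constant under $\Lambda$. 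The left-hand side of \eqref{eq:joint_strong_monotonicity_twisted} is therefore at most
\[
-c\,R_\Lambda-\|m-\tilde m\|_{D_m}^2-\int\|v-\tilde v\|_{D_v}^2\,\mathrm d\Lambda
=-\|\vartheta-\tilde\vartheta\|_{K_\vartheta}^2-\int\|z-\tilde z\|_{K_z}^2\,\mathrm d\Lambda,
\]
with $c=\beta\rho_\Phi-A_\theta-A_x$.

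\textbf{Changing metric.} For symmetric positive definite $D$ and $K$, set $w=D^{1/2}u$. Then $u^\top Ku=w^\top D^{-1/2}KD^{-1/2}w\ge\lambda_{\min}(D^{-1/2}KD^{-1/2})\|u\|_D^2$. Applying this to $(D_\vartheta,K_\vartheta)$ and $(D_z,K_z)$ gives the bound $-\rho(\|\vartheta-\tilde\vartheta\|^2_{D_\vartheta}+\int\|z-\tilde z\|^2_{D_z}\,\mathrm d\Lambda)$, which is \ref{ass:twisted_joint_strong_monotonicity}. Positivity of $\rho$ follows because $c>0$ and $D_m,D_v\succ0$ make $K_\vartheta$ and $K_z$ positive definite, so the congruent matrices $D^{-1/2}KD^{-1/2}$ are positive definite as well.

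\textbf{Conclusion.} The exponential contraction under \ref{ass:strong_sol_exist_unique} then follows directly from Theorem~\ref{thm:twisted_joint_strong_monotonicity}.

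The only delicate step is justifying the coupling representation of $G_\nu-G_{\tilde\nu}$, in particular integrability. The linear growth assumption on $\Phi$ is there for exactly this purpose, so no real obstacle is expected; the rest is bookkeeping.
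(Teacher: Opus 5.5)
Your proof is correct and follows the paper's argument essentially step for step. The diffusion term vanishes, and the coupling identity for $G_\nu(\vartheta)-G_{\tilde\nu}(\tilde\vartheta)$ merges the two $\beta$-weighted terms into the projected-field bound $-\beta\rho_\Phi R_\Lambda$; the resulting $K_\vartheta,K_z$ quadratic forms are then compared with the $D_\vartheta,D_z$ norms via $\rho$. You also spell out two points the paper leaves implicit: why $\rho>0$ (positive definiteness of $K_\vartheta,K_z$, preserved under congruence), and the explicit change-of-metric inequality.
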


\begin{proof}
Fix arbitrary $t\geq0$,
$\vartheta,\tilde\vartheta\in\Theta\times\sM$,
$\nu,\tilde\nu\in\mathcal P_2(\sX\times\sV)$, and
$\Lambda\in\Gamma(\nu,\tilde\nu)$.
Write
\begin{align*}
\Delta\vartheta&:=\vartheta-\tilde\vartheta,&
\Delta\Psi_t
&:=
\Psi_t(\vartheta,G_\nu(\vartheta))
-\Psi_t(\tilde\vartheta,G_{\tilde\nu}(\tilde\vartheta)),\\
\Delta z&:=z-\tilde z,&
\Delta b
&:=
b_{\Pi_\theta(\vartheta)}(z)
-b_{\Pi_\theta(\tilde\vartheta)}(\tilde z).
\end{align*}
Because the diffusion coefficient is independent of $(\theta,z)$, the
diffusion-difference term in
\eqref{eq:joint_strong_monotonicity_twisted} is zero.  Adding
\eqref{eq:optimiser_auxiliary_dissipation} and
\eqref{eq:sampler_auxiliary_dissipation} gives
\begin{align*}
\langle\Delta\vartheta,D_\vartheta\Delta\Psi_t\rangle
+
\int\langle\Delta z,D_z\Delta b\rangle\,
\Lambda(\mathrm dz,\mathrm d\tilde z)\leq &
-\beta
\left\langle
\theta-\tilde\theta,\,
G_\nu(\vartheta)-G_{\tilde\nu}(\tilde\vartheta)
\right\rangle\\
&\qquad
-\beta\int
\left\langle
x-\tilde x,\,
\nabla_xE(\theta,x)-\nabla_xE(\tilde\theta,\tilde x)
\right\rangle
\Lambda(\mathrm dz,\mathrm d\tilde z)\\
&\qquad
-\|m-\tilde m\|_{D_m}^2
-\int\|v-\tilde v\|_{D_v}^2\,
\Lambda(\mathrm dz,\mathrm d\tilde z)
+(A_\theta+A_x)R_\Lambda.
\end{align*}
The marginal identities for $\Lambda$ imply
\begin{align*}
G_\nu(\vartheta)-G_{\tilde\nu}(\tilde\vartheta)
=
\int
\left[
F(\theta,x)-F(\tilde\theta,\tilde x)
\right]
\Lambda(\mathrm dz,\mathrm d\tilde z).
\end{align*}
The two terms multiplied by $-\beta$ therefore combine into
\begin{align*}
-\beta
\int
\left\langle
(\theta,x)-(\tilde\theta,\tilde x),\,
\Phi(\theta,x)-\Phi(\tilde\theta,\tilde x)
\right\rangle
\Lambda(\mathrm dz,\mathrm d\tilde z),
\end{align*}
which is at most $-\beta\rho_\Phi R_\Lambda$.  Hence
\begin{align*}
&\langle\Delta\vartheta,D_\vartheta\Delta\Psi_t\rangle
+
\int\langle\Delta z,D_z\Delta b\rangle\,
\Lambda(\mathrm dz,\mathrm d\tilde z)\\
&\quad\leq
-cR_\Lambda
-\|m-\tilde m\|_{D_m}^2
-\int\|v-\tilde v\|_{D_v}^2\,
\Lambda(\mathrm dz,\mathrm d\tilde z)\\
&\quad=
-\|\Delta\vartheta\|_{K_\vartheta}^2
-\int\|\Delta z\|_{K_z}^2\,
\Lambda(\mathrm dz,\mathrm d\tilde z).
\end{align*}
By the definition of $\rho$,
\begin{align*}
K_\vartheta\succeq\rho D_\vartheta,
\qquad
K_z\succeq\rho D_z.
\end{align*}
Substitution into the preceding display gives exactly
\eqref{eq:joint_strong_monotonicity_twisted}.
\end{proof}

\subsubsection{Application to momentum-enriched dynamics}
\label{app:mpgd_result}

\begin{proposition}[A hypocoercive verification of
\ref{ass:twisted_joint_strong_monotonicity}]
Suppose \ref{ass:auxiliary_separation} holds and that
$\Phi$ is $L_\Phi$-Lipschitz.  Take $d_m=d_\theta$ and $d_v=d_x$, and
consider the unit-mass heavy-ball optimiser and underdamped sampler with a
common friction coefficient $\gamma>0$:
\begin{align*}
\Psi_t(\vartheta,G_\nu(\vartheta))
=
\begin{pmatrix}
m\\
-G_\nu(\vartheta)-\gamma m
\end{pmatrix},
\qquad
b_\theta(x,v)
=
\begin{pmatrix}
v\\
-\nabla_xE(\theta,x)-\gamma v
\end{pmatrix},
\end{align*}
with constant sampler diffusion
\begin{align*}
\sigma_\theta(x,v)
=
\begin{pmatrix}
0_{d_x\times d_x}\\
\sqrt{2\gamma}\,I_{d_x}
\end{pmatrix}.
\end{align*}
If
\begin{align*}
\gamma>\frac{\sqrt2\,L_\Phi}{\sqrt{\rho_\Phi}},
\end{align*}
then there exist symmetric positive-definite matrices
$D_\vartheta,D_z$ and a rate $\rho>0$ for which
\ref{ass:twisted_joint_strong_monotonicity} holds.
\end{proposition}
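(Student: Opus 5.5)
The plan is to verify \ref{ass:twisted_joint_strong_monotonicity} directly, with the same computation as in the proof of Corollary~\ref{cor:projected_strong_convexity}. We take the standard hypocoercive "twisted" quadratic form on the position/momentum pairs $(\theta,m)$ and $(x,v)$, using the \emph{same} constants in both blocks:
\[
D_\vartheta=\begin{pmatrix} b\gamma I_{d_\theta} & bI_{d_\theta}\\ bI_{d_\theta} & I_{d_\theta}\end{pmatrix},\qquad
D_z=\begin{pmatrix} b\gamma I_{d_x} & bI_{d_x}\\ bI_{d_x} & I_{d_x}\end{pmatrix},\qquad b:=\gamma/2 .
\]
These are positive definite because $b\gamma>b^2$, i.e.\ $b<\gamma$. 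Since $\sigma_\theta$ is constant, the Hilbert--Schmidt term in \eqref{eq:joint_strong_monotonicity_twisted} vanishes, so only the two drift terms need to be controlled.

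\textbf{Step 1 (algebra).} Write $q=\Delta\theta$, $p=\Delta m$, and $\Delta f=G_\nu(\vartheta)-G_{\tilde\nu}(\tilde\vartheta)$. The heavy-ball field gives
\[
\langle \Delta\vartheta, D_\vartheta\Delta\Psi_t\rangle
=(b\gamma-b\gamma)\langle q,p\rangle+(b-\gamma)\|p\|^2-b\langle q,\Delta f\rangle-\langle p,\Delta f\rangle .
\]
The choice of the $(1,1)$ entry $b\gamma$ is made precisely so that the $\langle q,p\rangle$ cross term cancels. The sampler block gives the identical expression pointwise in $(z,\tilde z)$, with $(\Delta x,\Delta v,\nabla_xE(\theta,x)-\nabla_xE(\tilde\theta,\tilde x))$ in place of $(q,p,\Delta f)$, integrated against $\Lambda$.

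\textbf{Step 2 (monotone part).} As in the proof of Corollary~\ref{cor:projected_strong_convexity}, we use the marginal identity
\[
\Delta f=\int [F(\theta,x)-F(\tilde\theta,\tilde x)]\,\Lambda(\mathrm dz,\mathrm d\tilde z).
\]
With it, the two $-b\langle\cdot,\cdot\rangle$ terms combine into $-b\int\langle\Delta(\theta,x),\Delta\Phi\rangle\,\mathrm d\Lambda$. By \ref{ass:auxiliary_separation} this is at most $-b\rho_\Phi R_\Lambda$.

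\textbf{Step 3 (main obstacle: the momentum--force cross terms).} The remaining terms are $-\langle p,\Delta f\rangle-\int\langle\Delta v,\Delta\nabla_xE\rangle\,\mathrm d\Lambda$. Bounding them \emph{separately}, as the two inequalities of \ref{ass:auxiliary_dissipation} would require, loses a factor $2$ and would only give the condition $\gamma^2>4L_\Phi^2/\rho_\Phi$. So we do \emph{not} invoke the corollary as a black box. Instead we apply Young's inequality with parameter $\varepsilon$ to both cross terms jointly, bounding their sum by
\[
\varepsilon\Big(\|p\|^2+\int\|\Delta v\|^2\,\mathrm d\Lambda\Big)+\frac{1}{4\varepsilon}\Big(\|\Delta f\|^2+\int\|\Delta\nabla_xE\|^2\,\mathrm d\Lambda\Big).
\]
Jensen gives $\|\Delta f\|^2\le\int\|\Delta F\|^2\,\mathrm d\Lambda$. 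Since $\|\Delta F\|^2+\|\Delta\nabla_xE\|^2=\|\Delta\Phi\|^2\le L_\Phi^2\|\Delta(\theta,x)\|^2$, the last bracket is at most $L_\Phi^2R_\Lambda$.

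\textbf{Step 4 (choice of constants).} Take $\varepsilon=\gamma/4$. The left-hand side of \eqref{eq:joint_strong_monotonicity_twisted} is then bounded by
\[
-cR_\Lambda-(\gamma-b-\varepsilon)\Big(\|p\|^2+\int\|\Delta v\|^2\,\mathrm d\Lambda\Big),
\qquad
c=\frac{\gamma\rho_\Phi}{2}-\frac{L_\Phi^2}{\gamma},
\qquad
\gamma-b-\varepsilon=\frac{\gamma}{4}>0 .
\]
The condition $c>0$ is exactly $\gamma^2>2L_\Phi^2/\rho_\Phi$, which is the hypothesis. This bound equals $-\|\Delta\vartheta\|_{K_\vartheta}^2-\int\|\Delta z\|_{K_z}^2\,\mathrm d\Lambda$, with $K_\vartheta,K_z$ block-diagonal with blocks $cI$ and $(\gamma/4)I$, as in Corollary~\ref{cor:projected_strong_convexity}. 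Since $K_\vartheta,K_z\succ0$, setting
\[
\rho:=\min\{\lambda_{\min}(D_\vartheta^{-1/2}K_\vartheta D_\vartheta^{-1/2}),\,\lambda_{\min}(D_z^{-1/2}K_zD_z^{-1/2})\}>0
\]
gives $K\succeq\rho D$ in each block, which concludes the verification of \ref{ass:twisted_joint_strong_monotonicity}.
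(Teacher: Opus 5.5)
Your proof is correct and is essentially the paper's argument. Your $D_\vartheta,D_z$ coincide, up to the positive factor $\gamma/(2\beta)$, with the paper's hypocoercive matrices (entries $\beta\gamma,\beta,\delta=2\beta/\gamma$), and the steps are the same: cancelling the $\langle q,p\rangle$ term, assembling the projected field via the marginal identity, applying Young to the momentum--force terms, and concluding with $K\succeq\rho D$. The paper merely packages the last part through Corollary~\ref{cor:projected_strong_convexity}.

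One correction to your Step~3 remark: bounding the two cross terms separately (each by $L_\Phi^2R_\Lambda$, as the paper does) does not force $\gamma^2>4L_\Phi^2/\rho_\Phi$. That loss comes only from your fixed choice $\varepsilon=\gamma/4$. The paper takes the Young weight $(1-\varepsilon)\beta$ arbitrarily close to the available momentum dissipation $\beta$ and obtains exactly $\gamma^2>2L_\Phi^2/\rho_\Phi$. Your sharper joint bound $\|\Delta F\|^2+\|\Delta\nabla_xE\|^2=\|\Delta\Phi\|^2$, combined with $\varepsilon$ close to $\gamma/2$, would even give the weaker threshold $\gamma^2>L_\Phi^2/\rho_\Phi$.
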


\begin{proof}
For a coupling $\Lambda\in\Gamma(\nu,\tilde\nu)$, set
\begin{align*}
\Delta\theta:=\theta-\tilde\theta,
\qquad
\Delta m:=m-\tilde m,
\qquad
\Delta G:=G_\nu(\vartheta)-G_{\tilde\nu}(\tilde\vartheta),
\end{align*}
and recall
\begin{align*}
R_\Lambda
=
\|\Delta\theta\|^2
+
\int\|x-\tilde x\|^2\,
\Lambda(\mathrm dz,\mathrm d\tilde z).
\end{align*}
The coupling identity, Jensen's inequality, and Lipschitz continuity of
$\Phi$ give
\begin{align*}
\|\Delta G\|^2
&=
\left\|
\int
\left[F(\theta,x)-F(\tilde\theta,\tilde x)\right]
\Lambda(\mathrm dz,\mathrm d\tilde z)
\right\|^2\leq
L_\Phi^2R_\Lambda.
\end{align*}

The strict friction condition allows us to choose
$0<\varepsilon<1$ such that
\begin{align*}
\frac{2L_\Phi^2}{(1-\varepsilon)\gamma^2}<\rho_\Phi.
\end{align*}
Fix any $\beta>0$, and set
\begin{align*}
\delta:=\frac{2\beta}{\gamma},
\qquad
\eta:=(1-\varepsilon)\beta.
\end{align*}
Choose the hypocoercive matrices
\begin{align*}
D_\vartheta
=
\begin{pmatrix}
\beta\gamma I_{d_\theta}&\beta I_{d_\theta}\\
\beta I_{d_\theta}&\delta I_{d_\theta}
\end{pmatrix},
\qquad
D_z
=
\begin{pmatrix}
\beta\gamma I_{d_x}&\beta I_{d_x}\\
\beta I_{d_x}&\delta I_{d_x}
\end{pmatrix}.
\end{align*}
They are positive definite by the Schur-complement criterion because
$\delta>\beta/\gamma$.

Direct multiplication gives
\begin{align*}
\langle\Delta\vartheta,D_\vartheta\Delta\Psi_t\rangle
&=
-\beta\langle\Delta\theta,\Delta G\rangle
-(\delta\gamma-\beta)\|\Delta m\|^2
-\delta\langle\Delta m,\Delta G\rangle.
\end{align*}
Young's inequality yields
\begin{align*}
-\delta\langle\Delta m,\Delta G\rangle
\leq
\eta\|\Delta m\|^2
+
\frac{\delta^2}{4\eta}\|\Delta G\|^2.
\end{align*}
Since $\delta\gamma-\beta=\beta$, it follows that
\begin{align*}
\langle\Delta\vartheta,D_\vartheta\Delta\Psi_t\rangle
\leq
-\beta\langle\Delta\theta,\Delta G\rangle
-\varepsilon\beta\|\Delta m\|^2
+
\frac{\beta L_\Phi^2}{(1-\varepsilon)\gamma^2}R_\Lambda.
\end{align*}
Thus \eqref{eq:optimiser_auxiliary_dissipation} holds with
\begin{align*}
D_m=\varepsilon\beta I_{d_\theta},
\qquad
A_\theta
=\frac{\beta L_\Phi^2}{(1-\varepsilon)\gamma^2}.
\end{align*}

For the sampler, write
\begin{align*}
\Delta H_x
:=
\nabla_xE(\theta,x)-\nabla_xE(\tilde\theta,\tilde x).
\end{align*}
The same calculation gives, pointwise,
\begin{align*}
\langle\Delta z,D_z\Delta b\rangle
&\leq
-\beta\langle x-\tilde x,\Delta H_x\rangle
-\varepsilon\beta\|v-\tilde v\|^2\\
&\quad+
\frac{\beta}{(1-\varepsilon)\gamma^2}
\|\Delta H_x\|^2.
\end{align*}
Because $\nabla_xE$ is the second component of the
$L_\Phi$-Lipschitz field $\Phi$,
\begin{align*}
\|\Delta H_x\|^2
\leq
L_\Phi^2
\left(
\|\theta-\tilde\theta\|^2+\|x-\tilde x\|^2
\right).
\end{align*}
After integration, \eqref{eq:sampler_auxiliary_dissipation} holds with
\begin{align*}
D_v=\varepsilon\beta I_{d_x},
\qquad
A_x
=\frac{\beta L_\Phi^2}{(1-\varepsilon)\gamma^2}.
\end{align*}
Finally,
\begin{align*}
A_\theta+A_x
=
\frac{2\beta L_\Phi^2}{(1-\varepsilon)\gamma^2}
<
\beta\rho_\Phi.
\end{align*}
Corollary~\ref{cor:projected_strong_convexity} therefore verifies
\ref{ass:twisted_joint_strong_monotonicity} in the displayed
hypocoercive metrics.
\end{proof}

%% file: sections/appendix/mmle_details.tex
Here we discuss the Bayesian image-deblurring experiments introduced in Section~\ref{sec:mmle_image_deblurring}. Given a blurred and noisy observation $y=Bx+\varepsilon$, obtained through an ill-conditioned blurring operator $B$ and $\varepsilon\sim\mathcal{N}(0,\sigma^2)$, the goal is to recover clean latent images $x$. We equip this ill-conditioned inverse problem with a total-variation prior, with strength governed by an unknown scalar parameter $\theta$. The hyper-parameter $\theta$ typically requires manual tuning but instead we estimate an optimal value by framing this as a high-dimensional \gls*{mmle} problem. With the total-variation prior, we have a non-differentiable posterior over the latent image which closely follows the setup in Section 5.3 of \citet{encinar2025proximal} and the image reconstruction problems in \citet{durmus2018efficient,goldman2022gradient}.

Let $x^\star\in\mathbb{R}^{d_x}$ denote the ground-truth image, with dimension $d_x=n_w\times n_h$. We consider  the blurring operator $B\in\mathbb{R}^{d_x\times d_x}$ following \citet{encinar2025proximal}. In particular, $B$ is a uniform local averaging operator taking the value of an interior pixel $x^\star_{ij}$ and replacing with the uniform average of a $P\times P$ neighbourhood of pixels, with truncation and renormalisation at the image boundaries. We equip the latent image $x^\star$ with the anisotropic total-variation log-prior,
\begin{equation*}
    p_\theta(x)=C(\theta)^{-1}\exp\left(-e^{\theta}\mathrm{TV}(x)\right),\qquad C(\theta)=\int_{\mathbb{R}^{d_x}}\exp\left(-e^{\theta}\mathrm{TV}(x)\right)\md x.
\end{equation*}
Positive one-homogeneity of the total-variation function and a change of variables $u=e^\theta x$ give the normalisation constant $C(\theta)\propto e^{d_x\theta}$ which we assume to be finite for modelling purposes. The parameter $e^\theta>0$ controls the strength of this prior, which typically requires manual tuning \citep{durmus2018efficient,goldman2022gradient}. Note we exponentiate $\theta$ to ensure positivity. Here, $\mathrm{TV}(x):=\|\nabla_dx\|_1$ is the anisotropic total-variation, with $\nabla_d$ the two-dimensional discrete gradient operator, which is non-differentiable, and $\|\cdot\|_1$ is the $\ell_1$ norm. The full posterior is therefore of the form,
\begin{equation*}
    \pi_\theta(x)=p_\theta(x\mid y)\propto\exp\left(-\frac{1}{2\sigma^2}\|y-Bx\|^2_2-e^\theta\mathrm{TV}(x)+\log C(\theta)\right),
\end{equation*}
where $y$ is considered as fixed data. We may derive the gradient of the negative log-marginal likelihood using Fisher's identity,
\begin{align*}
    \nabla_\theta\ell(\theta)
    &=
    -\mathbb{E}_{x\sim \pi_\theta}
    [\nabla_\theta\log p_\theta(x,y)] .
\end{align*}
Since the log-likelihood
\begin{equation*}
    \log p(y\mid x) = -\frac{1}{2\sigma^2}\|y-Bx\|^2_2 + \text{constant}
\end{equation*}
is independent of \(\theta\), we have
\begin{equation*}
    \nabla_\theta\log p_\theta(x,y)= d_x-e^\theta\mathrm{TV}(x),
\end{equation*}
and hence
\begin{equation*}
    \nabla_\theta\ell(\theta)
    =
    \mathbb{E}_{x\sim \pi_\theta}
    [e^\theta\mathrm{TV}(x)]-d_x .
\end{equation*}
Thus the objective function is expressed as in $\eqref{eq:obj_func}$ by an expectation over $\pi_\theta$ with $F(\theta,x)=e^{\theta}\mathrm{TV}(x)-d_x$.

\subsubsection{Proximal Dynamics}

Sampling from $\pi_\theta$ for a fixed $\theta$ is challenging as the total-variation, $\mathrm{TV}(x)$, is non-differentiable. However, following \citet{encinar2025proximal}, particle algorithms applicable to the \gls*{mmle} problem may be extended to non-differentiable $p_\theta(x,y)$. In particular, letting
\begin{equation*}
    E(\theta,x):=\frac{1}{2\sigma^2}\|y-Bx\|^2+e^\theta\mathrm{TV}(x)-d_x\theta
\end{equation*}
such that $\pi_\theta(x)\propto \exp(-E(\theta,x))$, we decompose $E(\theta,x)=g_1(\theta,x)+g_2(\theta,x)$ with $g_1,g_2$ convex lower-bounded functions, $g_1$ differentiable and $g_2$ proper and lower semi-continuous. We now replace $g_2$ with the $\lambda$-\gls*{my} approximation denoted by $g_2^\lambda$ and defined,
\begin{equation*}
\begin{aligned}
    g_2^\lambda(x)&:=\min_{x'\in\mathbb{R}^{d_x}}\left\{g_2(x')+\frac{1}{2\lambda}\|x'-x\|^2\right\}\\
    &=g_2\left(\mathrm{prox}^{\lambda}_{g_2}(x)\right)+\frac{1}{2\lambda}\left\|\mathrm{prox}^{\lambda}_{g_2}(x)-x\right\|^2,
\end{aligned}
\end{equation*}
where $\mathrm{prox}^\lambda_{g_2}(x)$ is the $\lambda$-proximity mapping or proximal operator function of $g_2$ for any $\lambda>0$. We denote the full \gls*{my} approximation with $E^\lambda(\theta,x)=g_1(\theta,x)+g_2^\lambda(\theta,x)$, which is always differentiable and converges pointwise to $E(\theta,x)$ in the limit $\lambda\rightarrow0$. The gradient is given for all $x\in\mathbb{R}^d$ by
\begin{equation*}
    \nabla_xE^\lambda(\theta,x)=\nabla_x g_1(x)+\frac{1}{\lambda}(x-\mathrm{prox}^\lambda_{g_2}(\theta,x)),
\end{equation*}
where we treat $\theta$ as fixed in the minimisation over $\mathbb{R}^{d_x}$. Since $E(\theta,x)$ is continuously differentiable with respect to $\theta$, we may take $\nabla_\theta E$ gradients directly. In our setting we have $g_1=\|y-Bx\|^2_2/2\sigma^2-d_x\theta$ and $g_2=e^\theta\mathrm{TV}(x)$. Now $g_1$ is differentiable, while $g_2$ is proper and lower semi-continuous, allowing us to write the explicit gradients,
\begin{align}
    \nabla_xE^\lambda(\theta,x)&=\frac{1}{\sigma^2}B^T(Bx-y)+\frac{1}{\lambda}\left(x-\mathrm{prox}^\lambda_{e^\theta\mathrm{TV}}(\theta,x)\right),\label{eq:prox_grad_x}\\
    \nabla_\theta E(\theta,x)&=e^\theta\mathrm{TV}(x)-d_x,\label{eq:prox_grad_theta}
\end{align}
with
\begin{equation*}
    \mathrm{prox}^\lambda_{e^\theta\mathrm{TV}}(\theta,x)=\underset{x'\in\mathbb{R}^{d_x}}{\mathrm{argmin}}\left\{e^\theta\mathrm{TV}(x')+\frac{1}{2\lambda}\|x'-x\|^2\right\}.
\end{equation*}
Using these approximations and noting that for \gls*{mmle} we have $F(\theta,x)=\nabla_\theta E(\theta,x)$, we may write the proximal versions of the \gls*{pgd} particle systems with,
\begin{equation*}
    \begin{aligned}
        \md \theta^N_t&=-\frac{1}{N}\sum_{i=1}^N\nabla_\theta E(\theta^N_t,X^i_t)\md t,\\
        \md X^i_t&=-\nabla_xE^\lambda(\theta_t,X^i_t)\md t+\sqrt{2}\md B_t^i,
    \end{aligned}
\end{equation*}
which we substitute with the gradient expressions \eqref{eq:prox_grad_x}--\eqref{eq:prox_grad_theta}. We discretise the system with a Lie-Trotter splitting of the parameter and sampling dynamics, running a forward Euler step for a parameter update followed by an Euler-Maruyama step for a sampling update. This is our \gls*{mypgd} baseline as introduced in \citet{encinar2025proximal}. Similarly, we obtain the proximal version of \gls*{myholpgd},
\begin{equation*}\label{eq:high_order_langevin_scheme_proximal}
    \begin{aligned}
        \md \theta_t&=-\frac{1}{N}\sum_{i=1}^N\nabla_\theta E(\theta_t,X^i_t)\md t,\\
        \md X^i_t&=U^i_t\md t,\\
        \md U^i_t&=-\nabla_xE^\lambda(\theta_t,X^i_t)\md t+\gamma V_t^i\md t,\\
        \md V^i_t&=-\gamma U^i_t\md t-\alpha V^i_t\md t+\sqrt{2\alpha}\md B_t,
    \end{aligned}
\end{equation*}
where $\alpha,\gamma>0$, and Brownian motions $\{B^i\}^N_{i=1}$ are independent. We discretise the higher-order Langevin parts using the splitting scheme introduced in \citet{high_order_langevin_JMLR} and discuss a full splitting scheme discretisation in the following section. We note that we apply the \gls*{my} approximation only to the sampling dynamics, while the parameter update uses the original gradient $\nabla_\theta E(\theta,x)$. This yields the hybrid dynamics used by \citet{encinar2025proximal}. For any finite $\lambda$, the parameter update is evaluated using particles approximating the smoothed law $\pi_\theta^\lambda\propto e^{-E^\lambda(\theta,\cdot)}$, and thus incurs a small smoothing bias.

\subsubsection{\gls*{myholpgd} Discretisation}

We consider a Lie-Trotter splitting between the parameter updates and the sampling updates like with the \gls*{mypgd} implementation. In particular, at each iteration we sequentially step parameters using a forward Euler scheme followed by a sampling step following the discretisation scheme in \citet{high_order_langevin_JMLR}. This scheme splits the higher-order Langevin dynamics into an exactly solvable linear evolution followed by a high-order quadrature approximation of an integrated non-linear gradient term $\nabla_x E^\lambda$. Introducing the time discretisation $0=t_0<t_1<\cdots<t_K=T$ with $t_k=kh$ and step-size $h=T/K$ we denote $(\hat X^i_n,\hat U^i_n,\hat V^i_n)$ the discrete time approximations of $(X^i_{t_n},U^i_{t_n},V^i_{t_n})$. Then we have the Gaussian update rule,
\begin{equation*}
    \begin{aligned}
        \hat\theta_{n+1}&=\hat\theta_n-\frac{h}{N}\sum_{i=1}^N\nabla_\theta E(\hat\theta_n,\hat X^i_n),\\
        \hat X^i_{n+1}&=\hat X^i_{n}-\frac{h}{2}\Delta E^\lambda\left(\hat\theta_{n+1},\hat{X}^i_n,\hat U^i_n\right)+\mu_{12}\hat U^i_n+\mu_{13}\hat V^i_n+\xi_n^{i,1}, \\
        \hat U^i_{n+1}&=-\Delta E^\lambda\left(\hat\theta_{n+1},\hat{X}^i_n,\hat U^i_n\right)+\mu_{22}\hat U^i_n+\mu_{23}\hat V^i_n+\xi^{i,2}_n,\\
        \hat V^i_{n+1}&=\mu_{31}\Delta E^\lambda\left(\hat\theta_{n+1},\hat{X}^i_n,\hat U^i_n\right)+\mu_{32}\hat U^i_n+\mu_{33}\hat V^i_n+\xi^{i,3}_n,
    \end{aligned}
\end{equation*}
with $\xi^i_n=(\xi^{i,1}_n,\xi^{i,2}_n,\xi^{i,3}_n)\sim\mathcal{N}(0,\Sigma)$. The vector $\Delta E^\lambda(\theta,x,u)$ is chosen to be an approximation of the integral $\int_0^h\nabla_xE^{\lambda}(\theta,x+ut)dt$. We note that in \citet{high_order_langevin_JMLR}, a Lagrange interpolation is used for a high order approximation with high-order mixing guarantees, however we choose to take the simple endpoint approximation, $\Delta E^\lambda(\theta,x,u)=h\nabla_xE^\lambda(\theta,x)$ for a fair comparison against \gls*{mypgd}. We refer the reader to Section~3 of \citet{high_order_langevin_JMLR} for a more detailed review and derivation of the scheme. The constants $\mu_{12}$--$\mu_{33}$ are defined explicitly,

\begin{equation*}
    \begin{alignedat}{2}
        \mu_{12}&=\left(1+\tfrac{\gamma^2}{\alpha^2}\right)h-\tfrac{\gamma^2}{2\alpha}h^2-\tfrac{\gamma^2}{\alpha^3}\left(1-e^{-\alpha h}\right),\qquad&
    \mu_{13}&=\tfrac{\gamma}{\alpha}h-\tfrac{\gamma}{\alpha^2}\left(1-e^{-\alpha h}\right),\\
    \mu_{22}&=1+\tfrac{\gamma^2}{\alpha^2}\left(1-\alpha h - e^{-\alpha h}\right),&
    \mu_{23}&=\tfrac{\gamma}{\alpha}\left(1-e^{-\alpha h}\right),\\
    \mu_{31}&=\tfrac{\gamma}{\alpha}-\tfrac{\gamma}{\alpha^2h}\left(1-e^{-\alpha h}\right),&
    \mu_{32}&=\tfrac{\gamma^3}{\alpha^2}h+\tfrac{\gamma^3}{\alpha^2}he^{-\alpha h}-\left(\tfrac{2\gamma^3}{\alpha^3}+\tfrac{\gamma}{\alpha}\right)\left(1-e^{-\alpha h}\right),\\
    \mu_{33}&=e^{-\alpha h}\left(1+\tfrac{\gamma^2}{\alpha}h\right)-\tfrac{\gamma^2}{\alpha^2}\left(1-e^{-\alpha h}\right),
    \end{alignedat}
\end{equation*}
and $\Sigma=\begin{pmatrix}
    \sigma_{11}I_{d_x} & \sigma_{12}I_{d_x} & \sigma_{13}I_{d_x} \\
    \sigma_{12}I_{d_x} & \sigma_{22}I_{d_x} & \sigma_{23}I_{d_x} \\
    \sigma_{13}I_{d_x} & \sigma_{23}I_{d_x} & \sigma_{33}I_{d_x}
\end{pmatrix}$ with,

\begin{align*}
    \sigma_{11}&=\tfrac{2\gamma^2}{\alpha^3}h-\tfrac{2\gamma^2}{\alpha^2}h^2+\tfrac{2\gamma^2}{3\alpha}h^3-\tfrac{4\gamma^2}{\alpha^3}he^{-\alpha h}+\tfrac{\gamma^2}{\alpha^4}\left(1-e^{-2\alpha h}\right),\\
    \sigma_{12}&=\tfrac{\gamma^2}{\alpha^3}\left(\alpha h - 1 + e^{-\alpha h}\right)^2,\\
    \sigma_{13}&=-\tfrac{\gamma^3}{\alpha^2}h^2\left(2e^{-\alpha h}+1\right)+\left(\tfrac{2\gamma^3}{\alpha^3}-\tfrac{\gamma^3}{\alpha^3}e^{-2\alpha h}-\tfrac{4\gamma^3}{\alpha^3}e^{-\alpha h}-\tfrac{2\gamma}{\alpha}e^{-\alpha h}\right)h+\left(\tfrac{3\gamma^3}{2\alpha^4}+\tfrac{\gamma}{\alpha^2}\right)\left(1-e^{-2\alpha h}\right),\\
    \sigma_{22}&=\tfrac{2\gamma^2}{\alpha}h-\tfrac{4\gamma^2}{\alpha^2}\left(1-e^{-\alpha h}\right)+\tfrac{\gamma^2}{\alpha^2}\left(1-e^{-2\alpha h}\right),\\
    \sigma_{23}&=\tfrac{\gamma^3}{\alpha^2}\left(e^{-2\alpha h}-2e^{-\alpha h}-2\right)h+\tfrac{3\gamma^3}{2\alpha^3}\left(e^{-2\alpha h}-4e^{-\alpha h}+3\right)+\tfrac{\gamma}{\alpha}\left(1-e^{-\alpha h}\right)^2,\\
    \sigma_{33}&=-\tfrac{\gamma^4}{\alpha^2}h^2e^{-2\alpha h}+\left(-\tfrac{2\gamma^2}{\alpha}e^{-2\alpha h}+\tfrac{\gamma^4}{\alpha^3}\left(-3e^{-2\alpha h}+4e^{-\alpha h}+2\right)\right)h\\
    &\qquad +\tfrac{\gamma^4}{2\alpha^4}\left(-5e^{-2\alpha h}+16 e^{-\alpha h}-11\right)+\tfrac{\gamma^2}{\alpha^2}\left(-3e^{-2\alpha h}+4e^{-\alpha h}-1\right)+1-e^{-2\alpha h}.
\end{align*}

\subsubsection{Image Deblurring Experimental Details}

We present two distinct blurring setups A and B with different blur and noise severity. Setup A is a severe blurring with patch size $P=10$ and added noise $\sigma=1.0$, whereas setup B is a less severe blurring with patch size $P=8$ and $\sigma=0.8$. We apply the problem to a variety of black and white images of varying sizes, which we scale to approximately the same dimension $d_x$.

Across all setups we set $N=20$ particles with independent initial states
\begin{equation*}
    X^i_0\overset{i.i.d}{\sim}\mathcal{N}(50\mathbf{1}_{d_x},30^2I_{d_x}),\qquad i=1,\ldots,N,
\end{equation*}
where each component is drawn from a Gaussian distribution with mean $50$ and standard deviation $30$ and set $\theta_0=-15$. We run $K=6000$ iterations with a shared step-size $h=0.04$ and the Moreau-envelope parameter $\lambda=0.4$. We evaluate the proximal map using a Douglas-Rachford $\mathrm{TV}$ method from the \texttt{proxTV} package in Python.

We report the evolution of the $\theta$ estimate, \gls*{mse}, \gls*{ssim} between the averaged particle cloud and ground-truth images. To tune the hyper-parameters $\alpha,\gamma$ for the \gls*{myholpgd} method, we perform a grid search to achieve the best \gls{ssim} after $K$ iterations for setups A and B for an example test image. We report for setup A the best parameters $\alpha=4.0,\gamma=0.6$ and for setup B we take $\alpha=4.2,\gamma=0.7$. We note that the hyper-parameters are not tuned for each image, but rather for a single image in each setup. In addition, we observed little run-to-run variability when changing seeds which we attribute to the particle averaging. We also note that the graphs in figures~\ref{fig:deblurring_example_graphs}, \ref{fig:setup_e_graphs1}, \ref{fig:setup_e_graphs2}, and \ref{fig:setup_e_graphs3} are evaluated on the averaged particle cloud, whilst figures~\ref{fig:deblurring_example}, \ref{fig:setup_e_images1}, \ref{fig:setup_e_images2}, and \ref{fig:setup_e_images3} are produced by a single particle.

\begin{figure}[htbp]
    \centering
    \includegraphics[width=\textwidth]{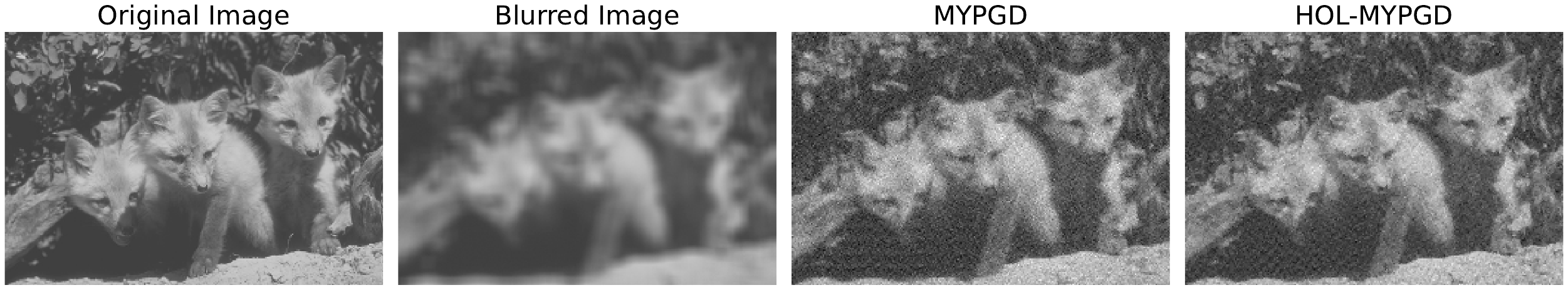}
    \caption{Setup B deblurring image comparison 1.}
    \label{fig:setup_e_images1}
\end{figure}

\begin{figure}[htbp]
    \centering
    \includegraphics[width=\textwidth]{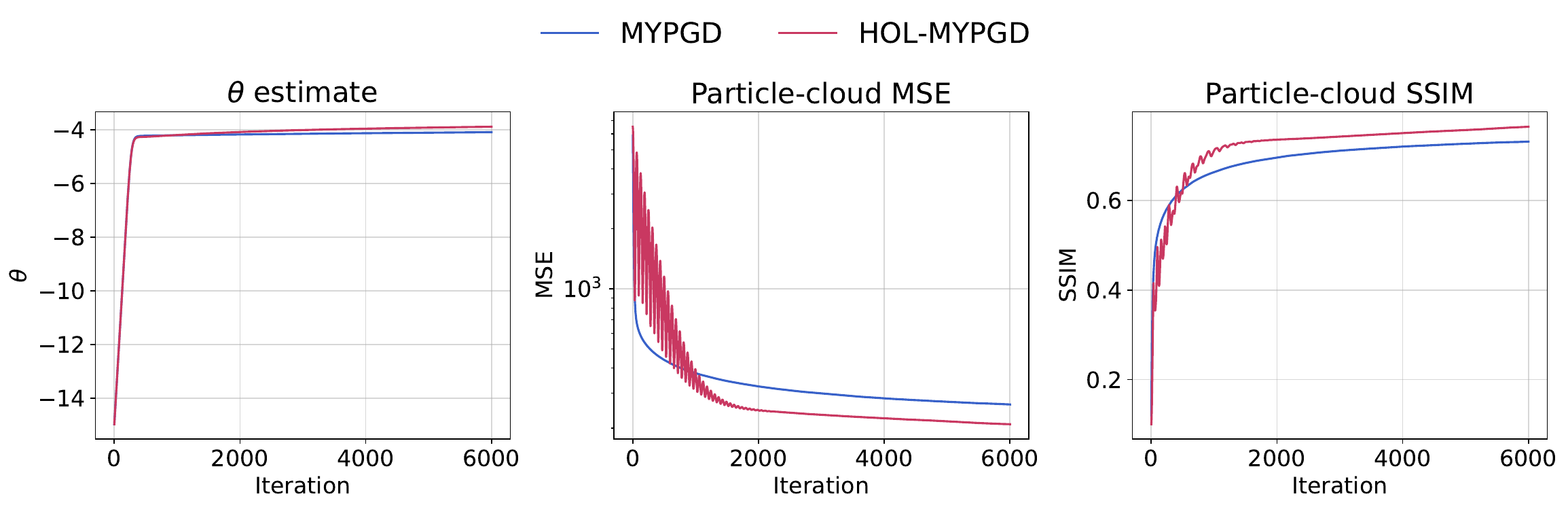}
    \caption{Setup B deblurring convergence comparison 1.}
    \label{fig:setup_e_graphs1}
\end{figure}

\begin{figure}[htbp]
    \centering
    \includegraphics[width=\textwidth]{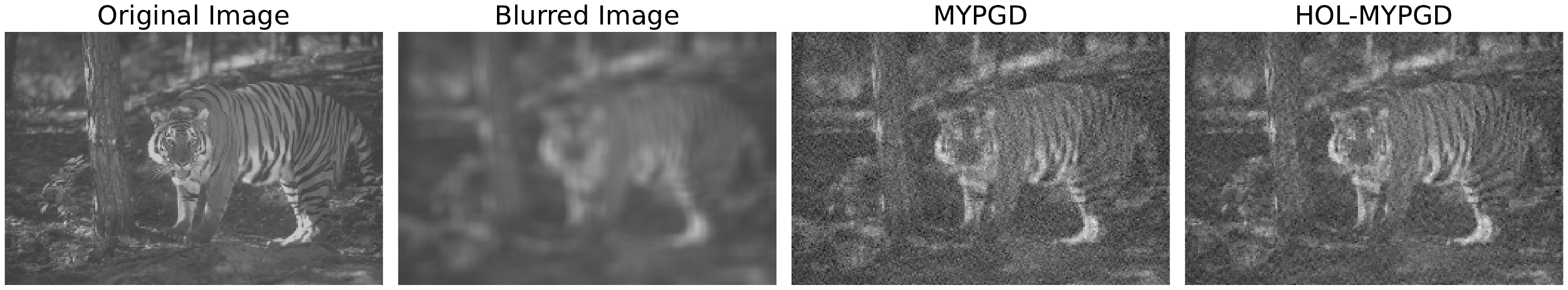}
    \caption{Setup B deblurring image comparison 2.}
    \label{fig:setup_e_images2}
\end{figure}

\begin{figure}[htbp]
    \centering
    \includegraphics[width=\textwidth]{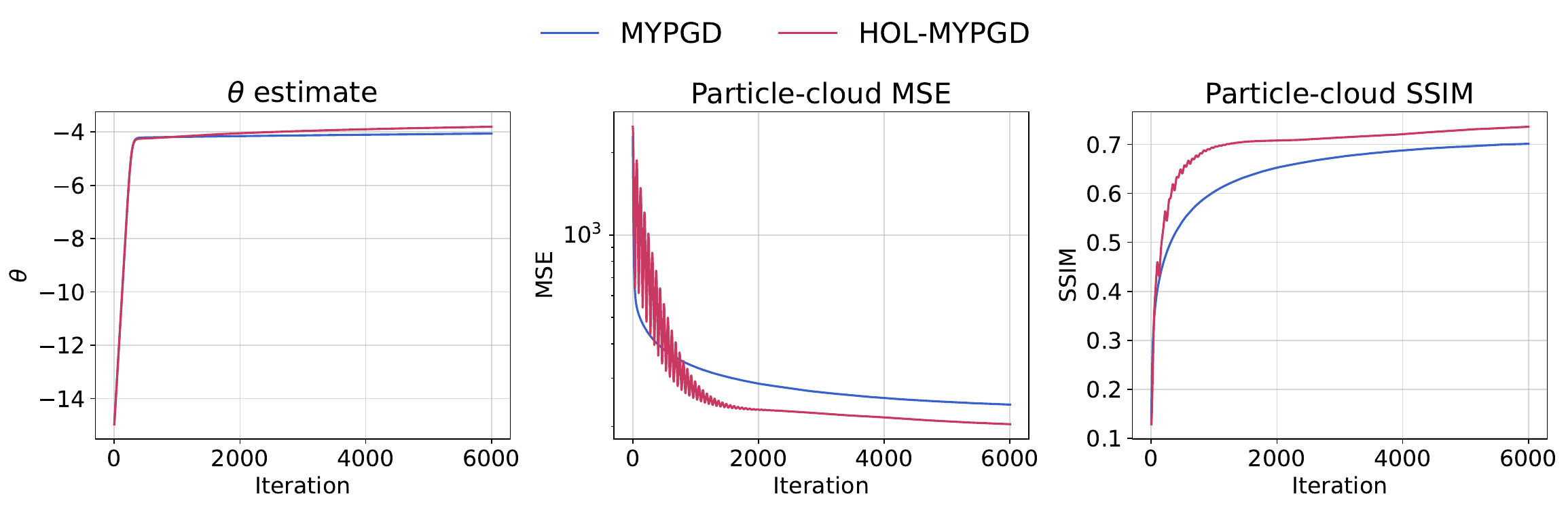}
    \caption{Setup B deblurring convergence comparison 2.}
    \label{fig:setup_e_graphs2}
\end{figure}

\begin{figure}[htbp]
    \centering
    \includegraphics[width=\textwidth]{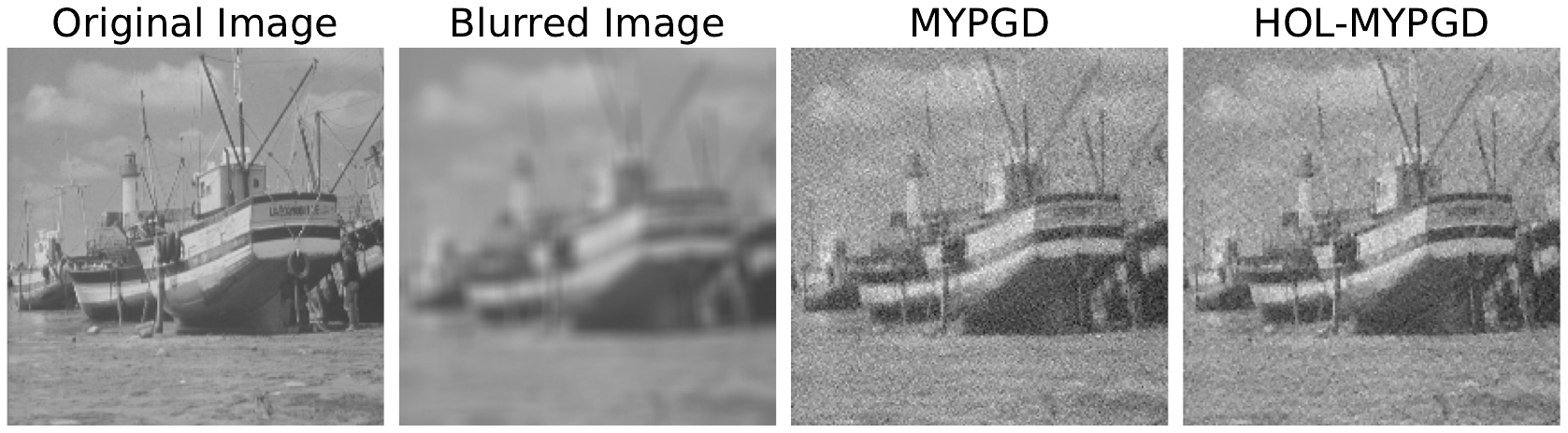}
    \caption{Setup B deblurring image comparison 3.}
    \label{fig:setup_e_images3}
\end{figure}

\begin{figure}[htbp]
    \centering
    \includegraphics[width=\textwidth]{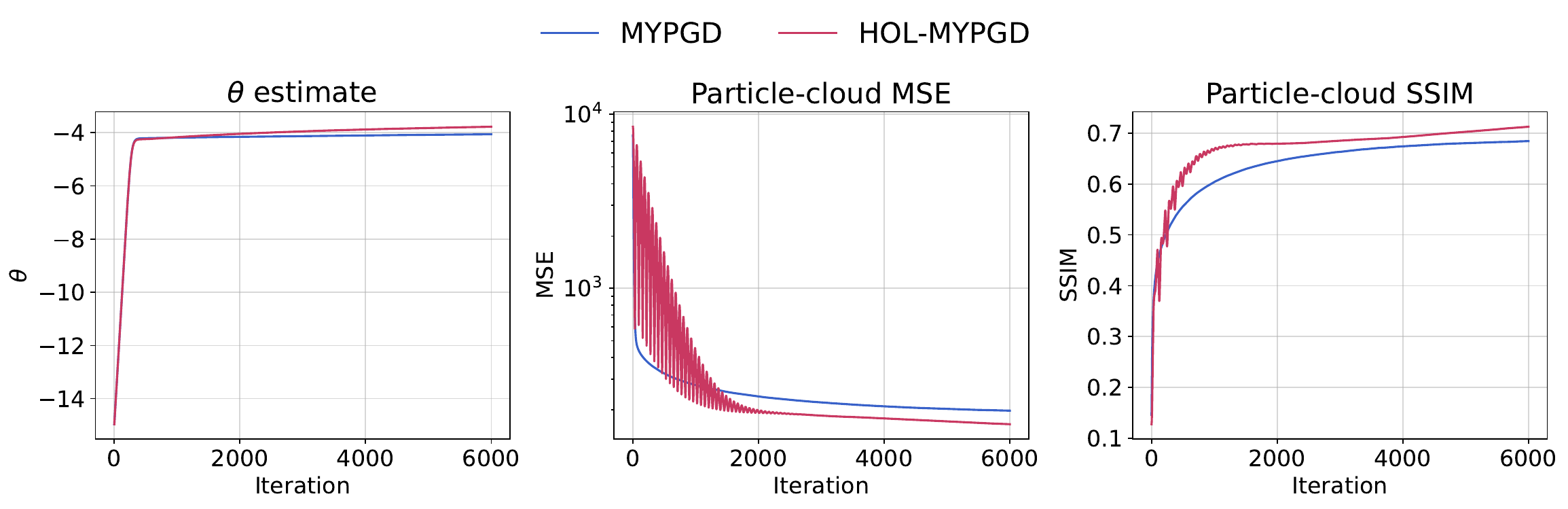}
    \caption{Setup B deblurring convergence comparison 3.}
    \label{fig:setup_e_graphs3}
\end{figure}

%% file: sections/appendix/momentum_mfpo_appendix.tex
Here, we discuss the \gls*{mmfpo} algorithm in Section~\ref{sec:momentum_mfpo}, motivate a discretisation  and present details on the \gls*{ebm} training experiment.

\subsubsection{\gls*{mmfpo} Discretisation}

We begin by constructing a discretisation to provide a concrete algorithm for our framework. A naive Euler-Maruyama discretisation of \gls*{uld} can have poor stability and obscure the practical benefits of momentum; related limitations have been studied for underdamped samplers \cite{em_bad1_ma2019analognesterovaccelerationmcmc}. Indeed, preliminary experimental testing of the Euler-Maruyama discretisation yielded poor stability requiring small step-sizes and hence slow convergence. Instead we propose a splitting scheme inspired by the \textit{OBABO} scheme of \cite{obabo_ref_leimkuhler_matthews} for the discretisation of the underdamped Langevin dynamics. We consider the following splitting of the particle dynamics \eqref{eq:alg_underdamped} generator into,
\begin{equation*}
    \begin{aligned}
    \begin{pmatrix}
        d\theta_t \\ dm_t \\ dX^i_t \\ dV^i_t
    \end{pmatrix} &= \underbrace{\begin{pmatrix}
    m_tdt \\ 0 \\ 0 \\ 0
    \end{pmatrix}}_{\textit{A1}} +
    \underbrace{\begin{pmatrix}
    0 \\ -N^{-1}\sum_{i=1}^N F(\theta_t,X_t^i)dt \\ 0 \\ 0
    \end{pmatrix}}_{\textit{B1}} +
    \underbrace{\begin{pmatrix}
    0 \\ -\gamma  m_tdt \\ 0 \\ 0
    \end{pmatrix}}_{\textit{L}} \\
    &\qquad + \underbrace{\begin{pmatrix}
    0 \\ 0 \\ V^i_tdt \\ 0
    \end{pmatrix}}_{\textit{A2}} +
    \underbrace{\begin{pmatrix}
    0 \\ 0 \\ 0 \\ -\nabla_x E(\theta_t,X_t^i)dt
    \end{pmatrix}}_{\textit{B2}} +
    \underbrace{\begin{pmatrix}
    0 \\ 0 \\ 0 \\ -\gamma V^i_tdt + \sqrt{2\gamma}dB^i_t
    \end{pmatrix}}_{\textit{O}}.
\end{aligned}
\end{equation*}
Here, \textit{L} is linear and \textit{O} is an Ornstein-Uhlenbeck part, which we know the solution exactly. The remaining parts \textit{A1}, \textit{A2}, \textit{B1} and \textit{B2} are deterministic and may be approximated with a forward Euler step.  The frictionless exact-gradient parameter dynamics described by
\begin{equation*}
    \md\theta_t=m_t \md t,\qquad
    \md m_t=-\nabla_\theta\ell(\theta_t)\md t
\end{equation*}
are Hamiltonian with
\begin{equation*}
    H_\theta(\theta,m)=\frac{1}{2}\Vert m\Vert^2+\ell(\theta),
\end{equation*}
In the mean-field and particle systems, with corresponding \textit{A1} and \textit{B1} parts, the objective gradient is approximated and is not generally $\nabla_\theta\ell(\theta)$ away from sampler equilibrium. For frozen $\theta$, the sampler dynamics, given by the \textit{A2} and \textit{B2} parts, is Hamiltonian for each $(X^i,V^i)$ and a full Hamiltonian is given by
\begin{equation*}
    H_{X,\theta}(X^1,...,X^N,V^1,...,V^N)=\frac{1}{2}\sum_{i=1}^N\Vert V^i\Vert^2+\sum_{i=1}^N E(\theta,X^i_t).
\end{equation*}
We note that the continuous time results in this paper do not establish any discretisation related guarantees, which we leave for future work. 

We now construct a Strang-splitting scheme by combining the exact solutions of the \textit{L} and \textit{O} parts with a forward Euler step for the remaining parts. We introduce the time discretisation $0=t_0<t_1<\cdots<t_K=T$ with $t_k=kh$ and step-size $h=T/K$. Defining $\eta=\exp\left({-\frac{\gamma h}{2}}\right)$ and $\bar{F}\left(\theta,X^{[N]}\right):=\frac{1}{N}\sum_{i=1}^N F(\theta,X^i)$ for clarity, we opt for the following order of updates,
\begin{equation}\label{eq:full_discretisation}
\begin{alignedat}{3}
    \tilde{V}^i_{n} &= \eta\hat{V}^i_n+\sqrt{1-\eta^2}\xi^{X^i,1}_n,
    &\quad \tilde{m}_n &= \eta\hat{m}_n,
    &\quad &(\textit{O,L}) \\
    \tilde{V}^i_{n+\frac{1}{2}} &= \tilde{V}^i_n-\frac{h}{2}\nabla_xE\left(\hat{\theta}_n,\hat{X}^i_n\right),
    &\quad \tilde{m}_{n+\frac{1}{2}} &= \tilde{m}_n-\frac{h}{2} \bar{F}\left(\hat{\theta}_n,\hat{X}^{[N]}_{n}\right),
    &\quad &(\textit{B}) \\
    \hat{X}^i_{n+1} &= \hat{X}^i_n+ h\tilde{V}^i_{n+\frac{1}{2}},
    &\quad \hat{\theta}_{n+1} &= \hat{\theta}_n+h\tilde{m}_{n+\frac{1}{2}},
    &\quad &(\textit{A}) \\
    \tilde{m}_{n+1} &= \tilde{m}_{n+\frac{1}{2}}-\frac{h}{2} \bar{F}\left(\hat{\theta}_{n+1},\hat{X}^{[N]}_{n+1}\right),
    &\quad \tilde{V}^i_{n+1} &= \tilde{V}^i_{n+\frac{1}{2}}-\frac{h}{2}\nabla_xE\left(\hat{\theta}_{n+1},\hat{X}^i_{n+1}\right),
    &\quad &(\textit{B}) \\
    \hat{m}_{n+1} &= \eta\tilde{m}_{n+1},
    &\quad \hat{V}^i_{n+1} &= \eta\tilde{V}^i_{n+1}+\sqrt{1-\eta^2}\xi^{X^i,2}_n,
    &\quad &(\textit{O,L})
\end{alignedat}
\end{equation}
where updates are read in order from left to right, top to bottom. We denote $\hat\theta_n, \hat{m}_n, \hat{X}^i_n, \hat{V}^i_n$ the approximations of $\theta_{t_n}, m_{t_n}, X^i_{t_n}, V^i_{t_n}$ respectively and $\xi^{X^i,1}_n, \xi^{X^i,2}_n$ are independent standard Gaussian random variables. The order of updates is not unique and we may swap the orders of any update within each row without changing the scheme. The linear \textit{L} flow is split symmetrically alongside the \textit{O} flow.

We note that \eqref{eq:full_discretisation} features alternating updates between the parameter and particles. Ignoring parameter updates, the sequential particle updates are exactly the OBABO scheme for the underdamped Langevin dynamics with frozen parameters; similarly, ignoring particle updates, the sequential parameter updates are exactly a Strang splitting scheme of the Heavy Ball dynamics with frozen particles. We note that discretisation may  be viewed this way in special regimes when $\theta$ is close to a fixed point of the optimiser and $X^i$ are close to equilibrium. However in general the coupling between the \textit{A,B} parameter and particle dynamics cannot be reduced to a single Hamiltonian system and separated discretisation steps are needed. We show that both the continuous-time system generated by the \textit{A} and \textit{B} parts and the discrete-time system generated by the \textit{B1,A,B2} parts of the discretisation have volume preserving flows, which motivates our choice of discretisation scheme.

\begin{lemma}[Volume preservation of schemes]\label{lemma:vol_preserve_ode} Consider the ODE obtained by taking the \textit{A, B} parts of \eqref{eq:alg_underdamped}. Assuming the associated flow is globally defined and the vector field in $C^1$ is continuously differentiable, then the flow is volume preserving. In particular, let $\varphi_t$ denote the flow generated by the ODE, then
\begin{equation*}
    \mathrm{det}D\varphi_t(z)=1,
\end{equation*}
where $D$ denotes the total derivative and for any $z:=(\theta,m,X,V)$ and $t>0$. 

Moreover, let $\Phi_h$ be the one-step map described by the \textit{B,A} sections of the discretisation \eqref{eq:full_discretisation}. Then $\Phi_h$ is also volume preserving. In particular,
\begin{equation*}
    \mathrm{det} D\Phi_h(z)=1,
\end{equation*}
for any $z:=(\theta,m,X,V)$ and $h$ the step-size.
\end{lemma}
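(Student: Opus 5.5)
The ODE part follows from Liouville's formula, and the discrete part from factoring $\Phi_h$ into shear maps, each with unit Jacobian determinant.

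\emph{Continuous flow.} Write the \textit{A,B} vector field as $f(z)=(m,\,-\bar F(\theta,X^{[N]}),\,V^1,\ldots,V^N,\,-\nabla_xE(\theta,X^1),\ldots,-\nabla_xE(\theta,X^N))$. Liouville's formula gives
\[
\frac{\mathrm d}{\mathrm dt}\det D\varphi_t(z)=(\nabla\cdot f)(\varphi_t(z))\,\det D\varphi_t(z),
\qquad \det D\varphi_0=1 .
\]
It therefore suffices to show $\nabla\cdot f\equiv0$. Each block of $f$ does not depend on the coordinate whose time derivative it gives:
\begin{itemize}
\item the $\theta$-component $m$ is independent of $\theta$;
\item the $m$-component $-\bar F(\theta,X^{[N]})$ is independent of $m$;
\item the $X^i$-component $V^i$ is independent of $X^i$;
\item the $V^i$-component $-\nabla_xE(\theta,X^i)$ is independent of $V^i$.
\end{itemize}
So every diagonal block of $Df$ vanishes, the trace is zero, and $\det D\varphi_t\equiv1$ for all $t$, given the assumed global existence and $C^1$ regularity.

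\emph{Discrete map.} Write $\Phi_h=S_4\circ S_3\circ S_2\circ S_1$, where:
\begin{itemize}
\item $S_1$ is the first \textit{B} half-kick, which updates $(m,V)$ by amounts depending only on $(\theta,X)$;
\item $S_2$ updates $\theta$ by $h m$ and each $X^i$ by $hV^i$, i.e.\ by amounts depending only on $(m,V)$;
\item $S_3$ is the $m$ half-kick, which depends only on the new $(\theta,X)$;
\item $S_4$ is the $V$ half-kick, which depends only on $(\theta,X)$. Its ordering relative to $S_3$ is irrelevant, since neither changes $\theta$ or $X$.
\end{itemize}
Order the coordinates as $(\theta,X)$ first and then $(m,V)$. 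Each $S_k$ then has the form $(q,p)\mapsto(q,\,p+g(q))$ or $(q,p)\mapsto(q+k(p),\,p)$. Its Jacobian is block unit-triangular:
\[
\begin{pmatrix} I & 0\\ Dg & I\end{pmatrix}
\quad\text{or}\quad
\begin{pmatrix} I & Dk\\ 0 & I\end{pmatrix},
\]
so each has determinant $1$. The chain rule then gives $\det D\Phi_h(z)=\prod_k\det DS_k=1$.

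\emph{Main obstacle.} There is no real analytic difficulty here. The one point to check carefully is the bookkeeping of which variables each substep reads. In particular, the second $m$-kick uses $\hat X_{n+1}$ and $\hat\theta_{n+1}$, which $S_2$ has already produced. This dependence only enters the off-diagonal block, so the triangular structure is preserved.
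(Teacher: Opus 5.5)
Your proof is correct and follows the paper's argument. For the flow, you use Liouville's formula together with the vanishing divergence of the \textit{A,B} vector field; for $\Phi_h$, you factor the \textit{B,A,B} step into shear maps whose Jacobians are unit block-triangular. The only cosmetic difference is that the paper writes $\Phi_h=\Phi_h^{(1)}\circ\Phi_h^{(2)}\circ\Phi_h^{(1)}$ with a single kick map updating $m$ and $V$ together, whereas you split the second kick into the two commuting shears $S_3,S_4$.
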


\begin{proof}
Using Liouville's formula, it is a classical result that the ODE, $\dot{x}=f(x)$, is volume preserving if and only if $\mathrm{div}(f)\equiv0$. For our system \eqref{eq:alg_underdamped}, the \textit{A,B} sub-system is given by,
\begin{equation}\label{eq:general_splitting_scheme_ode}
\begin{aligned}
    \md\theta_t &=m_t\md t,\\
        \md m_t &= -\frac{1}{N}\sum_{i=1}^N{F}\left(\theta_t, X^i_t\right)\md t
        ,\\
        \md X^i_t&=V_t^i \md t,\\
        \md V^i_t&=-\nabla_x E(\theta_t, X^i_t)\md t.
\end{aligned}    
\end{equation}
From this it is clear the divergence of the vector-field is zero as each derivative is zero and the system is volume preserving. We now show the discretisation of the \textit{A,B} subsystem in \eqref{eq:full_discretisation} is also volume preserving. Given a one-step scheme, $\Phi_h:\mathbb{R}^{d_\theta+ d_m}\times(\mathbb{R}^ {d_x+ d_v})^N\rightarrow\mathbb{R}^{d_\theta+ d_m}\times(\mathbb{R}^ {d_x+ d_v})^N$ approximating the true flow $\varphi_h$ of \eqref{eq:general_splitting_scheme_ode} over a time interval of length $h$, assume that \(\Phi_h\) is a \(C^1\)-diffeomorphism. It is volume preserving if and only if
\begin{equation*}
    \left|\det D\Phi_h(\theta,m,X,V)\right|=1
\end{equation*}
for every \((\theta,m,X,V)\). For the splitting maps considered below,
the determinant is in fact equal to one. We show this by splitting the main flow into the composition of multiple sub-flows. From the chain rule, it is enough to check that each sub-flow is volume preserving. Taking $\Phi_h$ to be the one-step map associated with the \textit{B,A,B} section of \eqref{eq:full_discretisation}, we consider the composition,
\begin{equation*}
    \Phi_h=\Phi_h^{(1)}\circ \Phi_h^{(2)}\circ\Phi_h^{(1)},
\end{equation*}
with the one-step sub-maps,
\begin{equation*}
    \Phi_h^{(1)}\begin{pmatrix}\theta\\m\\X^i\\V^i
    \end{pmatrix}=\begin{pmatrix}
        \theta \\
        m-\frac{h}{2}\bar{F}\left(\theta, X^{[N]}\right) \\
        X^i \\
        V^i-\frac{h}{2}\nabla_xE(\theta, X^i)
    \end{pmatrix},\qquad \Phi_h^{(2)}\begin{pmatrix}\theta\\m\\X^i\\V^i
        \end{pmatrix}=\begin{pmatrix}
            \theta + hm \\
            m \\
            X^i + hV^i \\
            V^i
        \end{pmatrix}.
\end{equation*}
Taking derivatives we see that $D\Phi_h^{(1)}$ and $D\Phi_h^{(2)}$ are both unit triangular matrices (up to row and column exchanges) and hence, $\mathrm{det}D\Phi_h^{(1)}=\mathrm{det}D\Phi_h^{(2)}=1$ so each component of the composition is volume preserving and hence the \textit{A,B} subsystem discretisation is also volume preserving.
\end{proof}

With a typical \gls*{uld} in which the \textit{A,B} parts describe a Hamiltonian system, whose generated flow is symplectic and integrators are designed to also be symplectic. In our case, the \textit{A,B} parts no longer form a single Hamiltonian system, but the generated flow is still volume preserving and finding a discretisation scheme that is also volume preserving helps respect the underlying structure of the ODE.

Other orderings of parameter and particle updates are also valid. For instance, one could alternate between a full step of the parameters and a full step of the particles. The reason we decide the interweaving updates in \eqref{eq:full_discretisation} is one of computational efficiency. The computational bottleneck in these schemes is usually the expensive computations and evaluations of the gradients of the energy and objective functions, with the cost of such gradient evaluations also scaling with the size of the model. We observe in \eqref{eq:full_discretisation} and other second order schemes that gradient evaluations are required twice in one iteration- once before the \textit{A} step and once after. However this issue may be circumvented by saving the second gradient evaluation at the end of one iteration and re-using it for the first gradient evaluation at the start of the next iteration, essentially only requiring one gradient evaluation per iteration. Note that this is true provided the $\textit{A1,A2}$ updates are sequential between the two sets of $\textit{B1,B2}$ updates. This would not be possible if we were to naively update particles and parameters in series. We would also like to mention that our scheme admits other formulations such as using the BAOAB scheme with the same benefits as our proposed OBABO-like scheme and may also be written in the same way.

Whilst other practical integrators for \gls*{uld} are known and widely used, such as the (Euler) Exponential Integrator used in the \gls*{mpgd} discretisation \citep{lim2024momentumparticlemaximumlikelihood}, we found empirically that the OBABO-like splitting scheme to be the most stable and performative whilst admitting larger step-sizes. The \textit{BAB} sequence helps preserves the geometry and structure of the continuous-time dynamics and widely used to discretise Hamiltonian systems, in which we express a resemblance to. In the Hamiltonian setting the \textit{B1A1B1} system with fixed particles or the \textit{B2A2B2} system with fixed parameters is also more commonly known as the leap-frog algorithm and a key component of \gls*{hmc}. Even at the level of the full coupled system, the second-order full-system dynamics admits a natural decomposition into interpretable subcomponents that can be updated separately, which motivates the use of a splitting scheme. In particular the splitting of the OBABO scheme preserves this operator structure, respecting the position-momentum decomposition present.

\subsubsection{Hyperparameter Tuning}  \label{sec:algorithm_hp_tuning}
The choices of friction \(\gamma\) and step size \(h\) are central to the stability and convergence of the algorithm; principled tuning remains open. Because the parameter and particle updates play different roles, an implementation may use separate values \((\gamma_\theta,h_\theta)\) and \((\gamma_X,h_X)\) for the two subsystems.

In \eqref{eq:full_discretisation}, friction enters the split flows through the momentum coefficient \(\gamma h\). For fixed \(h>0\), the deterministic interwoven leapfrog scheme is obtained in the limit \(\gamma\downarrow0\). Conversely, as \(\gamma h\to\infty\), one has \(\eta\to0\): the particle momenta are replaced by independent standard Gaussian draws, whereas the parameter momentum is set to zero because the \textit{L} flow has no noise. In this limit,
\[
\widehat X_{n+1}^i
=
\widehat X_n^i-\frac{h^2}{2}\nabla_xE(\widehat\theta_n,\widehat X_n^i)
+h\xi_n^{X^i,1},
\qquad
\widehat\theta_{n+1}
=
\widehat\theta_n-\frac{h^2}{2}\bar F(\widehat\theta_n,\widehat X_n^{[N]}).
\]
Thus the position and parameter updates reduce respectively to an Euler--Maruyama step for overdamped Langevin dynamics and a gradient-descent step, both with step size \(h^2/2\).

\subsubsection{2D \textsc{EBM} Training Experimental Details}
\subfile{ebm_training}\label{app:training_ebm_2d}

%% file: sections/appendix/ebm_training.tex
Here we outline the details regarding the energy based model training. Given an EBM of the form,
\begin{equation}
    p_\theta(x)=\frac{1}{C_\theta}e^{-E(\theta,x)},\qquad C_\theta=\int e^{-E(\theta,x)}dx,
\end{equation}
we consider the diffusion limit of the PCD algorithm given in \cite{oliva2025uniformintimeconvergenceboundspersistent} and adapt the system of equations for under-damped Langevin dynamics. After approximating the expectation $\mathbb{E}_{X\sim p_\theta}[\nabla_\theta E(\theta,X)]$ in the exact maximum likelihood gradient, we have the particle approximation,
\begin{equation*}
    \nabla_\theta\ell\approx-\frac{1}{N}\sum_{n=1}^N\nabla_\theta E(\theta,X^n)+\frac{1}{M}\sum_{m=1}^M\nabla_\theta E(\theta, y^m),
\end{equation*}
with particles $\{X^n\}_{n=1}^N$ and fixed data $\{y^m\}_{m=1}^M$. In our framework, this may be written with
\begin{equation*}
    F(\theta,X^{i})=-\nabla_\theta E(\theta,X^i)+\frac{1}{M}\sum_{m=1}^M\nabla_\theta E(\theta, y^m).
\end{equation*}

We parametrise the energy function used for training in the 2D synthetic datasets experiments with a 3 hidden-layer, 128-unit \textit{SiLU} network. Throughout training we assume $C_\theta<\infty$ such that the model is a valid density for every parameter value we visit. We pick three different synthetic datasets to train on. Each has closed form density functions and are described below,

\textbf{Rings}. We consider two concentric rings with different radii but sharing the same total densities along their circumferences. In polar coordinates we define the independent distributions,
\begin{equation*}
    p_r(r)=\frac{1}{2}\mathcal{N}_{\mathrm{tr}}(r;r_1,\sigma^2,[0,\infty))+\frac{1}{2}\mathcal{N}_{\mathrm{tr}}(r;r_2,\sigma^2,[0,\infty)),\qquad r\ge 0.
\end{equation*}
where $\mathcal{N}_{\mathrm{tr}}(\cdot\,;\mu,\sigma,[0,\infty))$ is a truncated Gaussian with support on the non-negative reals $[0,\infty)$. Independently draw $A\sim\textrm{Uniform}(0,2\pi)$ and set $\mathbf{x}=r(\cos A,\sin A)$. The \textit{rings} distribution is exactly the Cartesian parametrisation,
\begin{equation*}
    p(\mathbf{x})=\frac{1}{4\pi\Vert\mathbf{x}\Vert}\left(\mathcal{N}_{\mathrm{tr}}(\|x\|;r_1,\sigma^2,[0,\infty))+\mathcal{N}_{\mathrm{tr}}(\|x\|;r_2,\sigma^2,[0,\infty))\right).
\end{equation*}
We note for large enough $r_1,r_2$ and small enough $\sigma$, the truncation effects are negligible and we may simply write the distributions with regular Gaussians, giving us closed-form expressions for the score. Here, we take $r_1=1,r_2=2.5$ and $\sigma=0.2$.

\textbf{Beads}. We consider a mixture of isotropic Gaussians centred equiradially from the origin. For $n$ Gaussians we define the density,
\begin{equation*}
    p(\mathbf{x})=\frac{1}{n}\sum_{k=1}^n\mathcal{N}(\mathbf{x}; l_k,\sigma^2)
\end{equation*}
with $l_k=(\sin(2\pi k/n),\cos(2\pi k/n))$. Here we take $n=5$ and $\sigma=0.1$.

\textbf{Lattice}. We consider a mixture of isotropic Gaussians centred on the points of a $n\times n$ lattice. Denoting these points $m_k=(\lfloor k/n\rfloor,k-n\lfloor k/n\rfloor)$ for $k=0,...,n^2-1$ we have the density,
\begin{equation*}
    p(\mathbf{x})=\frac{1}{n^2}\sum_{k=0}^{n^2-1}\mathcal{N}(\mathbf{x};\alpha m_k,\sigma^2),
\end{equation*}
where $\alpha$ is some scaling parameter. We take $n=3$ and $\alpha=\sqrt{2}$ and 
$\sigma=\sqrt{2}/10$. These distributions can be sampled from exactly with built-in functions to create training datasets. We generate 10,000 samples for each dataset.

To train the 2D synthetic datasets, we compare the different methods below,
\begin{itemize}
    \item \gls*{pcd} - We implement persistent contrastive divergence as a baseline. Particles are updated with a ULA kernel step, whilst parameters are updated with gradient descent.
    \item \gls*{mmfpo} - We accelerate by updating particles and parameters with under-damped Langevin using our proposed discretisation.
\end{itemize}
For each method we use a set of $10,000$ particles, matching the number of data-points, all initialised from i.i.d standard Gaussians. For \gls*{mmfpo}, we initialise particle momenta from i.i.d standard Gaussians and initialise parameter momenta with zeros.

For the performance metric, we estimate the Fisher divergence as it does not require computing the normalising constant of the learnt energy and we have access to the score of the data distribution. We compute the empirical Fisher divergence at each iteration with $M_{\mathrm{eval}}=10,000$ samples from the data distribution,
\begin{equation*}
    \widehat D_{\mathrm F}(\theta)
    =\frac1{2M_{\mathrm{eval}}}
    \sum_{m=1}^{M_{\mathrm{eval}}}
    \left\|
    \nabla_x\log p_{\mathrm{data}}(z^m)
    +\nabla_xE(\theta,z^m)
    \right\|^2,
    \qquad
    z^m\sim p_{\mathrm{data}}.
\end{equation*}
If the model family contains $p_{\mathrm{data}}$ and the usual score-identifiability and integrability conditions hold, the Fisher divergence shares the same zero at $p_\theta=p_{\mathrm{data}}$ with $\mathrm{KL}(p_{\mathrm{data}}\|p_\theta)$. We do not claim that their minimisers coincide under model misspecification or that the minimising neural-network parameter is unique.

We keep a fixed computational budget for each method, which for us is equivalent to a fixed number of training iterations since each iteration uses the same number of energy evaluations and gradient evaluations. We begin training with a shared step-size between parameter and particle updates for both \gls*{pcd} and \gls*{mmfpo}. In practice \gls*{mmfpo} employs larger step-sizes and hence greater training fluctuations. So we also design learning rate decay schedules on the parameter step-sizes that decrease over iterations. In particular for the \textit{lattice} dataset, we use the linear decay,

\begin{equation}
    \text{lr}(t)=\begin{cases}
        h_0 & t\le L, \\
        h_0\left(\frac{T-t}{T-L}\right)+h_T\left(\frac{t-L}{T-L}\right) & L<t<T, \\
        h_T & t\ge T,
    \end{cases}
\end{equation}
where $h_0,h_T$ are hyperparameters starting and ending step-sizes, $L,T$ are times we pick to decay between. In general these are all hyperparameters but with the goal of damping the training fluctuations, extensive tuning is not required. For the \textit{lattice} dataset, we take $h_T=h_0/10$, $T=400,000$ to be the fixed number of training iterations and $L=6,000$ to accommodate the fast convergence at the start. For the \textit{rings} and \textit{beads} datasets we do not use any learning rate decay and run for $T=100,000$ iterations. We run the training 25 times on different seeds and report the average Fisher divergence with standard-deviation error envelopes across iterations.

For the baseline \gls*{pcd} method, we only have step-size as a hyperparameter, whilst for \gls*{mmfpo} we have both step-size and friction. When tuning, there is a trade-off between speed and accuracy, with smaller step-sizes typically yielding a lower Fisher divergence at the cost of slower convergence. We plot in figures \ref{fig:od_beads_hps} and \ref{fig:obabo_beads_hps} the effects of hyperparameters for training the \textit{beads} dataset on both the speed of convergence, which we measure with the area under the curve ($AOC$) and the accuracy at the end of training, which we measure by an averaged final Fisher divergence ($FD_\infty$). We record the Fisher divergence every 1000 iterations over 100,000 iterations and evaluate the $AOC$ as the sum and $FD_\infty$ as the average of the last 20 recordings. We seek the hyperparameters producing both the lowest $AOC$ and $FD_\infty$.
\begin{figure}[h]
    \centering
    \begin{minipage}[b]{0.48\textwidth}
        \centering
        \includegraphics[width=0.75\textwidth]{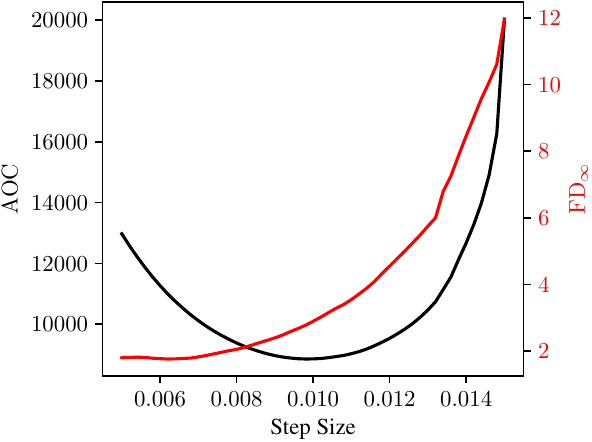}
    \end{minipage}
    \hfill
    \begin{minipage}[b]{0.48\textwidth}
        \centering
        \includegraphics[width=0.75\textwidth]{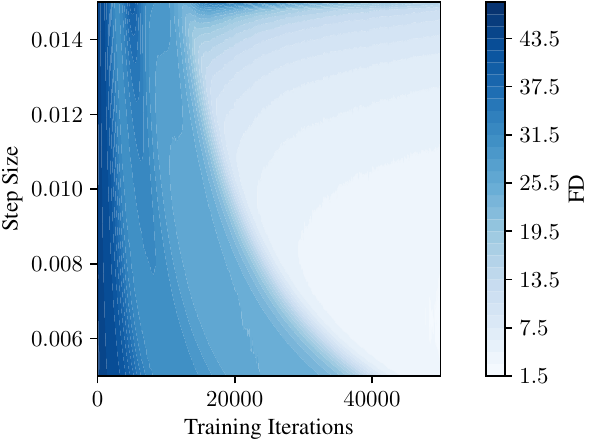}
    \end{minipage}
    \caption{Effect of step-size on convergence ($AOC$) and converged-value ($FD_\infty$) for PCD (left) and contour plot of Fisher divergence across iterations for a range of step-sizes (right)}
    \label{fig:od_beads_hps}
\end{figure}

We see in figure~\ref{fig:od_beads_hps} increasing step-size leads to a worse convergence and decreasing step-size leads to much slower convergence. There is a minor trade off between the $AOC$ and $FD_\infty$ if both are to be minimised and we take the step-size $h=0.008$ in the final comparison.
\begin{figure}[htbp]
    \centering
    \begin{minipage}[b]{0.48\textwidth}
        \centering
        \includegraphics[width=0.75\textwidth]{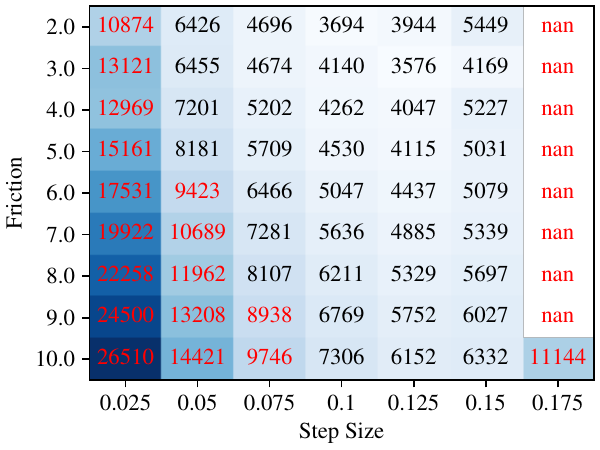}
    \end{minipage}
    \hfill
    \begin{minipage}[b]{0.48\textwidth}
        \centering
        \includegraphics[width=0.75\textwidth]{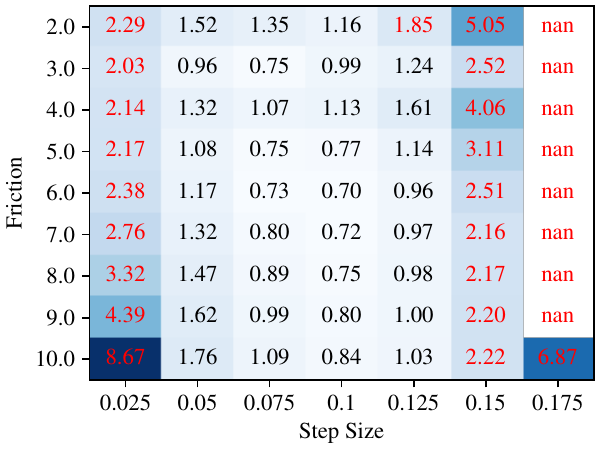}
    \end{minipage}
    \caption{Effect of step-size and friction on the $AOC$ (left) and $FD_\infty$ (right). Values in black are lower than the lowest observed corresponding value in the overdamped PCD across all step-sizes (lower is better).}
    \label{fig:obabo_beads_hps}
\end{figure}
We pick optimal hyperparameters for both methods from the sweeps whilst also ensuring long-term stability of the training. These are summarised in table~\ref{tab:ebm_training_hps}.
\begin{table}[h]
    \caption{List of hyperparameters used in the 2D \gls*{ebm} training experiments for \gls*{pcd} and \gls*{mmfpo} methods across different datasets. Note that we use a linear step-size schedule for the \textit{lattice} dataset and the displayed step-size is the initial step-size.}
    \label{tab:ebm_training_hps}
    \centering
    \setlength{\tabcolsep}{2pt}
    {\begin{tabular}{l@{\hspace{24pt}}c@{\hspace{24pt}}cc}
        \toprule
        \multirow{2}{*}{Dataset} & \multicolumn{1}{c}{\gls*{pcd}} & \multicolumn{2}{c}{\gls*{mmfpo}} \\
        \cmidrule(lr){2-2}\cmidrule(lr){3-4}
        & \makecell{step-size} & \makecell{step-size} & \makecell{friction} \\
        \midrule
        \textbf{rings} & 
        0.016 & 
        0.19 & 
        5.5 \\
        \textbf{beads} & 
        0.01 & 
        0.1 & 
        8.0 \\
        \textbf{lattice} & 
        0.004 & 
        0.075 & 
        6.0 \\
        \bottomrule
    \end{tabular}}
\end{table}